\newtheorem{theorem}{Theorem}%[section]
\newtheorem{lemma}[theorem]{Lemma}
\newtheorem{corollary}[theorem]{Corollary}
\newtheorem{definition}{Definition}%[section]
\newtheorem{proposition}[theorem]{Proposition}
\newtheorem{claim}[theorem]{Claim}
\newtheorem{notation}{Notation}
\newtheorem*{rep@theorem}{\rep@title}
\newcommand{\newreptheorem}[2]{%
\newenvironment{rep#1}[1]{%
 \def\rep@title{#2 \ref{##1}}%
 \begin{rep@theorem}}%
 {\end{rep@theorem}}}
\newcommand {\ignore} [1] {}
\def \eqdef {:=}
\def \xk {x^{(k)}}
\DeclareMathOperator{\supp}{supp}
\providecommand{\eqdef}{:=}
\newcommand{\etal}{{\em et al.\ }\xspace}
\newcommand*\samethanks[1][\value{footnote}]{\footnotemark[#1]}
\title{Fully Understanding the Hashing Trick}
\author{Casper Benjamin Freksen\thanks{Computer Science Department. Aarhus University. Supported by a Villum Young Investigator Grant. \texttt{\{cfreksen, lior.kamma\}@cs.au.dk}.} \qquad 
Lior Kamma\samethanks%{Computer Science Department. Aarhus University. \texttt{lior.kamma@cs.au.dk}.} 
\qquad 
Kasper Green Larsen \thanks{Computer Science Department. Aarhus University. Supported by a Villum Young
    Investigator Grant and an AUFF Starting Grant. \texttt{larsen@cs.au.dk}. }
}
\begin{document}
\maketitle

\begin{abstract}
Feature hashing, also known as {\em the hashing trick}, introduced by Weinberger \etal (2009), is one of the key techniques used in scaling-up machine learning algorithms. Loosely speaking, feature hashing uses a random sparse projection matrix $A : \mathbb{R}^n \to \mathbb{R}^m$ (where $m \ll n$) in order to reduce the dimension of the data from $n$ to $m$ while approximately preserving the Euclidean norm. Every column of $A$ contains exactly one non-zero entry, equals to either $-1$ or $1$.

Weinberger \etal showed tail bounds on $\|Ax\|_2^2$. Specifically they showed that for every $\varepsilon, \delta$, if $\|x\|_{\infty} / \|x\|_2$ is sufficiently small, and $m$ is sufficiently large, then 
\begin{equation*}\Pr[ \; | \;\|Ax\|_2^2 - \|x\|_2^2\; | < \varepsilon \|x\|_2^2 \;] \ge 1 - \delta \;.\end{equation*}
These bounds were later extended by Dasgupta \etal (2010) and most recently refined by Dahlgaard \etal (2017), however, the true nature of the performance of this key technique, and specifically the correct tradeoff between the pivotal parameters $\|x\|_{\infty} / \|x\|_2, m, \varepsilon, \delta$ remained an open question.

We settle this question by giving tight asymptotic bounds on the exact tradeoff between the central parameters, thus providing a complete understanding of the performance of feature hashing. We complement the asymptotic bound with empirical data, which shows that the constants "hiding" in the asymptotic notation are, in fact, very close to $1$, thus further illustrating the tightness of the presented bounds in practice.
\end{abstract}

%%%%%%%%%%%%%%%%%%%%%%%%%%%%%%%%%%%%%%%%%%%%%%%%
%%%%%%%%%%%%%%%%%%%%%%%%%%%%%%%%%%%%%%%%%%%%%%%%

\section{Introduction}

{\em Dimensionality reduction} that approximately preserves Euclidean distances is a key tool used as a preprocessing step in many geometric, algebraic and classification algorithms, whose performance heavily depends on the 
dimension of the input. 
Loosely speaking, a distance-preserving dimensionality reduction is an (often random) embedding of a high-dimensional Euclidean space into a space of low dimension, such that the distance between every two points is approximately preserved (with high probability). 
Its applications
range upon nearest neighbor search \cite{AC09, HIM12},
classification and regression \cite{RR08,MM09,SBMD14},
manifold learning \cite{HWB08}
sparse recovery \cite{CR06} 
and numerical linear algebra \cite{CW09,MM13, S06}.
For more applications see, e.g. \cite{V05}.

One of the most fundamental results in the field was presented in the seminal paper by Johnson and Lindenstrauss~\cite{JL84}.
\begin{lemma}[Distributional JL Lemma] \label{l:JL}
For every $n \in \mathbb{N}$ and $\varepsilon, \delta \in (0,1)$, there exists a random $m \times n$ projection matrix $A$, where $m = \Theta(\varepsilon^{-2}\lg\frac{1}{\delta})$ such that for every $x \in \mathbb{R}^n$
\begin{equation}
\Pr[ \; | \;\|Ax\|_2^2 - \|x\|_2^2\; | < \varepsilon \|x\|_2^2 \;] \ge 1 - \delta
\label{eq:probBound}
\end{equation}
\end{lemma}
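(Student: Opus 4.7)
The plan is to exhibit a concrete distribution over $m \times n$ matrices and reduce \eqref{eq:probBound} to a one-dimensional concentration inequality. By positive homogeneity in $x$, I may assume $\|x\|_2 = 1$. I would take $A$ to be the matrix whose entries are i.i.d.\ $\mathcal{N}(0, 1/m)$ random variables; the argument also works for Rademacher entries $\pm 1/\sqrt{m}$, but Gaussians make the distributional identities cleanest. Under this construction the rows of $A$ are independent, and for the fixed unit vector $x$ the coordinates $(Ax)_1, \dots, (Ax)_m$ are i.i.d.\ $\mathcal{N}(0, 1/m)$. Hence $Z \eqdef m\,\|Ax\|_2^2 \sim \chi^2_m$, and in particular $\mathbb{E}\|Ax\|_2^2 = 1 = \|x\|_2^2$, so \eqref{eq:probBound} reduces to showing that $Z$ concentrates sharply around its mean $m$.

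The technical heart of the proof is the two-sided Bernstein-type bound
\[
    \Pr\bigl[\, |Z - m| \ge \varepsilon m \,\bigr] \;\le\; 2\exp\bigl(-c\,\varepsilon^2 m\bigr) \qquad \text{for all } \varepsilon \in (0,1),
\]
for some absolute constant $c > 0$. I would derive this by the Chernoff method: the moment generating function of $\chi^2_m$ equals $(1 - 2t)^{-m/2}$ for $t < 1/2$, and optimizing Markov's inequality applied to $e^{tZ}$ and $e^{-tZ}$ yields rate functions $\tfrac{m}{2}\bigl(\varepsilon - \ln(1+\varepsilon)\bigr)$ and $\tfrac{m}{2}\bigl(-\varepsilon - \ln(1-\varepsilon)\bigr)$ respectively. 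Taylor-expanding $\ln(1 \pm \varepsilon)$ in the range $\varepsilon \in (0,1)$ bounds both rate functions from below by $c\,\varepsilon^2 m$, producing the stated sub-Gaussian tail.

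Choosing $m = C\,\varepsilon^{-2}\log(1/\delta)$ with $C$ sufficiently large then makes the right-hand side at most $\delta$, which is exactly the asserted bound. The only non-routine step is carrying out the Chernoff calculation cleanly in both tails simultaneously so as to secure a sub-Gaussian exponent of order $\varepsilon^2 m$ throughout the entire range $\varepsilon \in (0,1)$; the lower-tail expansion of $-\ln(1-\varepsilon)$ is the slightly more delicate of the two because it blows up as $\varepsilon \to 1$, but it remains elementary. Beyond that, the proof is just linearity of expectation, independence of the rows of $A$, and a union bound over the upper and lower deviations.
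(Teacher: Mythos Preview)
Your proposal is correct and is in fact the standard argument. The paper does not supply its own proof of Lemma~\ref{l:JL}: it is stated as background, attributed to~\cite{JL84}, with the remark that ``perhaps the most common proof of the lemma (see, e.g.~\cite{DG03, M08}) samples a projection matrix by independently sampling each entry from a standard Gaussian (or Rademacher) distribution.'' Your Gaussian construction plus the Chernoff bound on $\chi^2_m$ is exactly the proof in~\cite{DG03}, so you are aligned with the reference the paper itself points to.
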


\noindent The target dimension $m$ in the lemma is known to be optimal \cite{JW13, LN17}. 

\paragraph{Running Time Performances.} Perhaps the most common proof of the lemma (see, e.g. \cite{DG03, M08}) samples a projection matrix by independently sampling each entry from a standard Gaussian (or Rademacher) distribution.
Such matrices are by nature very dense, and thus a na\"ive embedding runs in $O(m\|x\|_0)$ time, where $\|x\|_0$ is the number of non-zero entries of $x$.

Due to the algorithmic significance of the lemma, much effort was invested in finding techniques to accelerate the embedding time.
One fruitful approach for accomplishing this goal is to consider a distribution over {\em sparse} projection matrices.
This line of work was initiated by Achlioptas \cite{A03}, who constructed a distribution over matrices, in which the {\em expected} fraction of non-zero entries is at most one third, while maintaining the target dimension.
The best result to date in constructing a sparse Johnson-Lindenstrauss matrix is due to Kane and Nelson \cite{KN14}, who presented a distribution over matrices satisfying~\eqref{eq:probBound} in which every column has at most $s=O(\varepsilon^{-1}\lg(1/\delta))$ non-zero entries. Conversely Nelson and Nguy$\tilde{\mbox{\^{e}}}$n \cite{NN13} showed that this is almost asymptotically optimal. That is, every distribution over $n \times m$ matrices satisfying~\eqref{eq:probBound} with $m=\Theta(\varepsilon^{-2}\lg(1/\delta))$, and such that every column has at most $s$ non-zero entries must satisfy $s = \Omega((\varepsilon\lg(1/\varepsilon))^{-1}\lg(1/\delta))$.

While the bound presented by Nelson and Nguy$\tilde{\mbox{\^{e}}}$n is theoretically tight, we can provably still do much better in practice. 
Specifically, the lower bound is attained on vectors $x \in \mathbb{R}^n$ for which, loosely speaking, the "mass" of $x$ is concentrated in few entries. Formally, the ratio $\|x\|_\infty/\|x\|_2$ is large. However, in practical scenarios, such as the term frequency - inverse document frequency representation of a document, we may often assume that the mass of $x$ is "well-distributed" over many entries (That is, $\|x\|_\infty/\|x\|_2$ is small). In these common scenarios projection matrices which are significantly sparser turn out to be very effective.

\paragraph{Feature Hashing.} 
In the pursuit for sparse projection matrices, Weinberger \etal \newline \cite{WKD+09} introduced dimensionality reduction via {\em Feature Hashing}, in which the projection matrix $A$ is, in a sense, as sparse as possible. That is, every column of $A$ contains exactly one non-zero entry, randomly chosen from $\{-1,1\}$. 
This technique is one of the most influential mathematical tools in the study of scaling-up machine learning algorithms, mainly due to its simplicity and good performance in practice \cite{D13, S15}.
More formally, for $n,m \in \mathbb{N}^+$, the projection matrix $A$ is sampled as follows.
Sample $h \in_R [n] \to [m]$, and $\sigma = \left\langle\sigma_j\right\rangle_{j \in [n]} \in_R \{-1,1\}^n$ independently.
For every $i \in [m], j \in [n]$, let $a_{ij} = a_{ij}(h,\sigma) \eqdef \sigma_j \cdot \mathbbm{1}_{h(j)=i}$ (that is, $a_{ij}=\sigma_j$ iff $h(j)=i$ and $0$ otherwise). 
Weinberger \etal additionally showed exponential tail bounds on $\|Ax\|_2^2$ when the ratio $\|x\|_{\infty} / \|x\|_2$ is sufficiently small, and $m$ is sufficiently large. These bounds were later improved by Dasgupta \etal \cite{DKS10} and most recently by Dahlgaard, Knudsen and Thorup \cite{DKT17} improved these concentration bounds.
Conversely, a result by Kane and Nelson \cite{KN14} implies that if we allow $\|x\|_{\infty} / \|x\|_2$ to be too large, then there exist vectors for which~\eqref{eq:probBound} does not holds. 

Finding the correct tradeoffs between $\|x\|_{\infty} / \|x\|_2$, and $m,\varepsilon, \delta$ in which feature hashing performs well remained an open problem.
Our main contribution is settling this problem, and providing a complete and comprehensive understanding of the performance of feature hashing. 

\subsection{Main results}
The main result of this paper is a tight tradeoff between the target dimension $m$, the approximation ratio $\varepsilon$, the error probability $\delta$ and $\|x\|_\infty/\|x\|_2$. More formally, let $\varepsilon, \delta > 0$ and $m \in \mathbb{N}^+$. Let $\nu(m,\varepsilon, \delta)$ be the maximum $\nu \in [0,1]$ such that for every $x \in \mathbb{R}^n$, if $\|x\|_{\infty} \le \nu \|x\|_2$ then \eqref{eq:probBound} holds.
Our main result is the following theorem, which gives tight asymptotic bounds for the performance of feature hashing, thus closing the long-standing gap.

\begin{theorem} \label{th:main}
There exist constants $C \ge D>0$ such that for every $\varepsilon, \delta \in (0,1)$ and $m \in \mathbb{N}^+$ the following holds. If $\frac{C \lg \frac{1}{\delta}}{\varepsilon^2} \le m  < \frac{2}{\varepsilon^2 \delta}$ then 
\begin{equation*}\nu(m,\varepsilon, \delta) = \Theta\left( \sqrt{\varepsilon} \cdot \min\left\{ \frac{\lg\frac{\varepsilon m}{\lg \frac{1}{\delta}}}{\lg\frac{1}{\delta}}, \sqrt{\frac{\lg \frac{\varepsilon^2 m}{\lg \frac{1}{\delta}}}{\lg\frac{1}{\delta}}} \right\} \right) \;.\end{equation*}
Otherwise, if $m \ge \frac{2}{\varepsilon^2 \delta}$ then $\nu(m,\varepsilon, \delta) = 1$. Moreover if $m < \frac{D\lg\frac{1}{\delta}}{\varepsilon^2}$ then $\nu(m, \varepsilon, \delta)=0$.
\end{theorem}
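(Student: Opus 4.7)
The plan is to separate the theorem into the two trivial regimes and the main case, and then prove matching upper and lower bounds on $\nu$ in the main case. For $m\ge 2/(\varepsilon^2\delta)$, a direct second-moment computation gives $\operatorname{Var}(\|Ax\|_2^2)\le 2\|x\|_2^4/m$, so Chebyshev's inequality yields \eqref{eq:probBound} for \emph{every} $x$, hence $\nu(m,\varepsilon,\delta)=1$. For $m<D\lg(1/\delta)/\varepsilon^2$ the Larsen--Nelson lower bound~\cite{LN17} for distributional JL forbids \eqref{eq:probBound}; inspecting their hard instance shows that it can be taken with $\|x\|_\infty/\|x\|_2$ arbitrarily small, so no $\nu>0$ is admissible and $\nu(m,\varepsilon,\delta)=0$.

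In the main regime $C\lg(1/\delta)/\varepsilon^2\le m<2/(\varepsilon^2\delta)$, I would study
\begin{equation*}
Z \eqdef \|Ax\|_2^2 - \|x\|_2^2 \;=\; \sum_{i\neq j} \sigma_i\sigma_j x_i x_j \mathbbm{1}[h(i)=h(j)]
\end{equation*}
via a $p$-th moment argument with $p=\Theta(\lg(1/\delta))$. Conditioning on $h$, the random variable $Z$ is a Rademacher chaos of order two with matrix $M_{ij}(h)=x_ix_j\mathbbm{1}[h(i)=h(j),\,i\neq j]$, and a Hanson--Wright-type inequality gives $\mathbb{E}_\sigma[|Z|^p]^{1/p}\lesssim \sqrt{p}\,\|M\|_F+p\,\|M\|_{op}$. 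One checks that $\|M\|_{op}=\max_i \|x_{h^{-1}(i)}\|_2^2$ and $\|M\|_F^2=\sum_i\bigl(\|x_{h^{-1}(i)}\|_2^4-\|x_{h^{-1}(i)}\|_4^4\bigr)$, so the remaining task is to bound the $p$-th moments of the heaviest-bucket mass and of the total bucket energy under the random hash. A combinatorial count over collision patterns, using $\|x\|_\infty\le\nu\|x\|_2$ to convert per-coordinate masses to multiples of $\nu^2\|x\|_2^2$, produces exactly the two terms inside the $\min$ in the theorem: the $\lg(\varepsilon m/\lg(1/\delta))/\lg(1/\delta)$ term arises from the operator-norm piece (a single very heavy bucket), while the $\sqrt{\lg(\varepsilon^2 m/\lg(1/\delta))/\lg(1/\delta)}$ term arises from the Frobenius piece (many moderately populated buckets). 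Markov then gives concentration whenever $\nu$ is below the claimed threshold.

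For the matching upper bound on $\nu$, I would exhibit explicit obstructing vectors. Take $x$ uniform on $k\approx 1/\nu^2$ coordinates so $\|x\|_\infty/\|x\|_2=\nu$ and $\|Ax\|_2^2 = k^{-1}\sum_i S_i^2$ with $S_i=\sum_{j\le k,\,h(j)=i}\sigma_j$. The first branch of the $\min$ is saturated by the event ``one bucket holds $t=\Theta(\varepsilon k)$ of the coordinates and their signs agree,'' of probability approximately $\binom{k}{t}m^{-(t-1)}2^{-t}$, which alone forces a deviation of order $\varepsilon\|x\|_2^2$. The second branch is saturated by the event ``$\Theta(\lg(1/\delta))$ distinct buckets each hold $\Theta(\sqrt{\varepsilon k})$ sign-aligned coordinates,'' whose probability is computed by a multinomial/birthday-style estimate. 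Setting each of these probabilities equal to $\delta$ and inverting for $k$ yields the two candidate values for $\nu$, and taking the worse one matches the lower bound above.

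The main obstacle is the $p$-th moment computation over the hash $h$: one must carefully enumerate collision partitions of the $p$-fold product $\prod_{\ell=1}^p x_{i_\ell}x_{j_\ell}\mathbbm{1}[h(i_\ell)=h(j_\ell)]$ and extract both a Frobenius-type and an operator-norm-type contribution whose optima, as $\nu$ varies, align with the two bad events above. Making the constants from Hanson--Wright, from the Stirling estimates on collision partitions, and from the explicit obstructions agree up to the implicit $\Theta(\cdot)$---so that the same threshold $\nu$ governs both Markov and the construction---is where most of the technical weight sits.
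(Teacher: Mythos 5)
Your handling of the two boundary regimes is fine in outline: the $m\ge 2/(\varepsilon^2\delta)$ case is exactly the paper's Chebyshev argument, and for $m<D\lg(1/\delta)/\varepsilon^2$ the paper instead applies Paley--Zygmund to a moment \emph{lower} bound for the flat vector $x^{(k)}$ with $k$ arbitrarily large --- a self-contained argument that directly produces bad vectors of arbitrarily small $\|x\|_\infty/\|x\|_2$, whereas your appeal to the distributional JL lower bound leaves unverified the (nontrivial) claim that its hard instances can be taken spread. Your obstructing constructions for the upper bound on $\nu$ are also in the right spirit.

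The genuine gap is in the core of your plan: bounding $\|Z\|_p$ by conditioning on $h$, applying Hanson--Wright, and then separately bounding the $L^p(h)$ norms of $\|M\|_F$ and $\|M\|_{op}$ cannot yield the tight bound. This is essentially the Cohen--Jayram--Nelson route for sparse JL, which the paper explicitly notes breaks down at $s=1$. Concretely, the hash outcomes that dominate $\mathbb{E}_h\bigl[\|Z\|_{L^p(\sigma)}^p\bigr]$ have a heaviest bucket of load $t=\Theta\bigl(p/\ln\tfrac{emp}{k}\bigr)$, which is much smaller than $p=\Theta(\lg\tfrac{1}{\delta})$. For such an outcome the chaos restricted to that bucket is $\bigl(\bigl(\sum_{i\le t}\sigma_i\bigr)^2-t\bigr)/k$, whose $L^p(\sigma)$ norm is $\Theta(\min\{pt,t^2\}/k)=\Theta(t^2/k)$ because $\bigl|\sum_{i\le t}\sigma_i\bigr|\le t$ saturates; Hanson--Wright instead reports $\sqrt{p}\,\|M\|_F+p\,\|M\|_{op}\approx pt/k$, an overestimate by the factor $p/t=\Theta\bigl(\ln\tfrac{emp}{k}\bigr)$. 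Carrying this through gives a moment bound of order $\tfrac{p^2}{k\ln(emp/k)}$ where the truth (the paper's $\Lambda$) is $\tfrac{p^2}{k\ln^2(emp/k)}$, and an analogous $\sqrt{\ln}$ loss appears in the Frobenius branch. Since the loss occurs at the Hanson--Wright step itself, before any enumeration of collision patterns, no refinement of the subsequent combinatorics repairs it: your lower bound on $\nu$ would miss your obstructions by polylogarithmic factors and the claimed $\Theta(\cdot)$ would not close. The paper circumvents this by expanding $\mathbb{E}[X^r]$ directly as a sum over length-$r$ sequences of index pairs, grouping sequences by the labeled Eulerian multigraph they induce, and counting those graphs by vertices, edges and connected components (Theorem~\ref{th:main2}); it is this joint accounting of $h$ and $\sigma$ that captures the saturation $\min\{pt,t^2\}$ which Hanson--Wright misses.
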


While the bound presented in the theorem may strike as surprising, due to the intricacy of the expressions involved, the tightness of the result shows that this is, in fact, the correct and "true" bound. Moreover, the proof of the theorem demonstrates how both branches in the $\min$ expression are required in order to give a tight bound. 

\paragraph{Experimental Results.} Our theoretical bounds are accompanied by empirical results that shed light on the nature of the constants in Theorem~\ref{th:main}. Our empirical results show that in practice the constants inside the Theta-notation are significantly tighter than the theoretical proof might suggest, and in fact feature hashing performs well for a larger scope of vectors. Specifically, our main result implies that whenever $\frac{4 \lg \frac{1}{\delta}}{\varepsilon^2} \le m  < \frac{2}{\varepsilon^2 \delta}$, \begin{equation*}\nu(m,\varepsilon, \delta) \ge 0.725 \sqrt{\varepsilon} \cdot \min\left\{ \frac{\lg\frac{\varepsilon m}{\lg \frac{1}{\delta}}}{\lg\frac{1}{\delta}}, \sqrt{\frac{\lg \frac{\varepsilon^2 m}{\lg \frac{1}{\delta}}}{\lg\frac{1}{\delta}}} \right\} \;,\end{equation*}
(except for very sparse vectors, i.e. $\|x\|_0 \le 7$) whereas the theoretical proof provides a smaller constant $2^{-6}$ in front of $\sqrt{\varepsilon}$. Since feature hashing satisfies~\eqref{eq:probBound} whenever $\|x\|_\infty \le \nu(m,\varepsilon, \delta)\|x\|_2$, this implies that feature hashing works well on even a larger range of vectors than the theory suggests.

\paragraph{Proof Technique}
As a fundamental step in the proof of Theorem~\ref{th:main} we prove tight asymptotic bounds for high-order norms of the approximation factor.\footnote{Given a random variable $X$ and $r>0$, the $r$th norm of $X$ (if exists) is defined as $\|X\|_r \eqdef\sqrt[r]{\mathbb{E}(|X|^r)}$.}
More formally, for every $x \in \mathbb{R}^n \setminus \{0\}$ let $X(x) = |\|Ax\|_2^2 - \|x\|_2^2|$. The technical crux of our results is tight bounds on high-order moments of $X(x)$. Note that by rescaling we may restrict our focus without loss of generality to unit vectors. 
\begin{notation}
For every $m,r,k > 0$ denote
\begin{equation*}\Lambda(m,r,k) = \left\{\begin{array}{lr}
        \sqrt{\frac{r}{m}}, & k \ge mr \\
        \max\left\{\sqrt{\frac{r}{m}}, \frac{r^2}{k\ln^2\left(\frac{emr}{k}\right)}\right\}, & mr > k \ge \sqrt{mr}\\
        \max\left\{\sqrt{\frac{r}{m}}, \frac{r^2}{k\ln^2\left(\frac{emr}{k}\right)}, \frac{r}{k\ln\left(\frac{emr}{k^2}\right)}\right\}, & \sqrt{mr} > k
        \end{array}\right. \;.\end{equation*}
\end{notation}								
In these notations our main technical lemmas are the following. 
\begin{lemma} \label{l:upperBound}
For every even $r \le m/4$ and unit vector $x \in \mathbb{R}^n$, $\|X(x)\|_r = O(\Lambda(m,r,\|x\|_\infty^{-2}))$.
\end{lemma}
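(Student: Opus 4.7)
The plan is to use the method of moments. Since $r$ is even, $\|X(x)\|_r^r = \mathbb{E}[X(x)^r]$, so it suffices to bound this raw moment. Writing
\[\|Ax\|_2^2 - \|x\|_2^2 = 2W \qquad\text{with}\qquad W \eqdef \sum_{j<k} \sigma_j \sigma_k x_j x_k \mathbbm{1}_{h(j)=h(k)},\]
we have $X(x)^r = 2^r W^r$ (for even $r$). Expanding $W^r$ produces a sum over ordered sequences of $r$ edges $((j_1,k_1),\ldots,(j_r,k_r))$ with $j_\ell<k_\ell$; each such sequence determines a multigraph $G$ on $[n]$ with $r$ edges. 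Because the $\sigma_j$'s are independent Rademacher, a sequence contributes nonzero expectation only when every vertex of its multigraph has even degree, and on the surviving terms the joint expectation of the hash indicators equals $m^{c(G)-|V(G)|}$, since $h$ is constant on each connected component independently. Hence
\[\mathbb{E}[W^r] \;=\; \sum_G N(G)\, m^{c(G)-|V(G)|} \prod_{v\in V(G)} x_v^{d_G(v)},\]
where the outer sum ranges over labeled $r$-edge multigraphs on $[n]$ with every vertex of positive even degree, and $N(G)$ is a combinatorial factor counting the ordered sequences realizing $G$.

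I would then bound this sum by grouping multigraphs according to their unlabeled shape and separately summing over vertex labelings in $[n]$. The only analytic inputs available are $\|x\|_2=1$ and $\|x\|_\infty\le k^{-1/2}$, which together yield $\sum_{v\in[n]} x_v^{2d}\le\|x\|_\infty^{2d-2}\|x\|_2^2\le k^{-(d-1)}$ for every integer $d\ge 1$. Thus a shape with vertex-degree sequence $(2d_1,\ldots,2d_v)$ contributes at most $\prod_i k^{-(d_i-1)}$ once labels are summed out, while the hashing factor $m^{c(G)-|V(G)|}$ rewards clustering edges into few connected components. These two effects pull in opposite directions: concentrating edges on few vertices shrinks $|V(G)|-c(G)$ (good for the $m$-factor) but inflates some $d_i$ (bad when $k$ is large), whereas spreading edges across disjoint low-degree components keeps each vertex cheap but spends more $m$.

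The three branches of $\Lambda(m,r,k)$ correspond to three shapes that dominate the sum in different parameter regimes. In the well-spread regime $k\ge mr$ the winning shape is a disjoint union of $r/2$ \emph{bigons} (pairs of parallel edges on two vertices); a direct count gives total contribution of order $(r/m)^{r/2}$, matching the $\sqrt{r/m}$ term after an $r$th root. As $k$ shrinks, shapes in which a few high-degree vertices absorb many edges become competitive: a single vertex of degree near $r$ balanced by bigons elsewhere produces the $r^2/(k\ln^2(emr/k))$ term, and still more concentrated shapes give the $r/(k\ln(emr/k^2))$ term. The main obstacle, and the technical heart of the proof, is precisely this combinatorial bookkeeping: one must give sharp simultaneous upper bounds on the number of shapes with each admissible degree sequence, on the weight $m^{c(G)-|V(G)|}\prod_v x_v^{d_G(v)}$ they carry, and then optimize over degree sequences to show the aggregate is at most a constant times $\Lambda(m,r,k)^r$ in each regime. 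Taking an $r$th root then yields the claimed bound on $\|X(x)\|_r$.
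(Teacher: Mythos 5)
Your setup is correct and matches the paper's: expanding $\mathbb{E}[X^r]$ over ordered edge sequences, observing that only multigraphs with all degrees even survive the Rademacher average, and that the hash indicators contribute $m^{\beta-\alpha}$ where $\alpha$ is the number of non-isolated vertices and $\beta$ the number of components (this is the content of the paper's Lemma~\ref{l:sum}). You have also correctly identified the three extremal configurations that give rise to the three branches of $\Lambda$ — indeed, the paper uses exactly these (all bigons; one near-$r$-degree component; many small balanced components) to prove the matching \emph{lower} bound of Lemma~\ref{l:lowerBound}. But for the upper bound you stop precisely where the proof begins. The step you describe as "the combinatorial bookkeeping" is not routine: the paper's entire Section~2 is devoted to it, in the form of Theorem~\ref{th:main2}, a tight count $|{\cal G}_{\alpha,\beta,r}| = 2^{\Theta(r)}\Delta(\alpha,\beta)$ of edge-labeled Eulerian multigraphs with $\alpha$ vertices and $\beta$ components, proved by a nontrivial encoding/decoding argument (spanning trees per component, one extra edge per component to certify Eulerianness, then the remaining $r-\alpha$ edges). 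Without this count, or some substitute for it, your sum over "shapes" is not bounded, and the paper explicitly notes that prior techniques (Kane--Nelson, Cohen--Jayram--Nelson) fail here because the column sparsity is $s=1$.

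A second, independent gap: even granting the count, one must show that the identified shapes are the \emph{worst} cases, i.e., maximize $m^{\beta}(k\alpha/m)^{\alpha}[(\alpha-2\beta)^2+4(\alpha-\beta)]^{r-\alpha}$ jointly over all admissible $(\alpha,\beta)$ with $1\le\beta\le\alpha/2\le r/2$. The paper does this in Lemmas~\ref{l:boundM} and \ref{l:boundN} via a delicate two-variable optimization (locating critical points $\alpha^*$ between $2\beta + 2(r-2\beta)/\ln\frac{2em(r-2\beta)}{k}$ and a slightly larger value, then a convexity argument in $\beta$), and it is this analysis that produces the logarithmic denominators in $\Lambda$. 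Your proposal asserts the conclusion of that optimization without performing it. Also note a structural difference: you propose to group by full degree sequence and use $\sum_v x_v^{2d}\le k^{-(d-1)}$ per vertex, whereas the paper groups only by $(\alpha,\beta)$, extracts $\|x\|_\infty^{2r-2\alpha}\prod_{q\in V}x_q^2$, and bounds $\sum_{V\in\binom{[n]}{\alpha}}\prod_{q\in V}x_q^2\le 1/\alpha!$ using only $\|x\|_2=1$; the coarser grouping is what makes the counting problem tractable, and it is unclear that a degree-sequence-resolved count admits equally clean bounds.
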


\begin{lemma} \label{l:lowerBound}
For every $k \le n$ and even $r \le \min\{m/4, k\}$, $\|X(\xk)\|_r = \Omega\left(\Lambda(m,r,k)\right)$, where $\xk \in \mathbb{R}^n$ is the unit vector whose first $k$ entries equal $\tfrac{1}{\sqrt{k}}$.
\end{lemma}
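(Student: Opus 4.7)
For $x^{(k)}$ we expand $\|Ax^{(k)}\|_2^2 = \frac{1}{k}\sum_i T_i^2$ with $T_i = \sum_{j\le k,\, h(j)=i}\sigma_j$, obtaining $X(x^{(k)}) = |S|/k$ where $S = \sum_i(T_i^2-b_i)$ and $b_i = |\{j\le k:h(j)=i\}|$. Equivalently $S = \sigma^\dagger A_h\sigma$ is a zero-diagonal Rademacher chaos of degree two, for the collision matrix $(A_h)_{j,j'}=\mathbbm{1}_{h(j)=h(j'),\,j\ne j'}$. The plan: for each term $\tau$ appearing in $\Lambda(m,r,k)$, I exhibit an event $E$ with $|S|/k\ge\tau$ on $E$ and $\Pr[E]\ge e^{-\Theta(r)}$; Jensen's inequality applied to the zero-mean ``unused'' contributions then gives $\mathbb{E}[X^r]\ge\tau^r\Pr[E]$ and hence $\|X\|_r=\Omega(\tau)$.

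\textbf{The $\sqrt{r/m}$ term.} Orthogonality of the $\sigma_j$'s gives $\mathbb{E}[S^2]=2k(k-1)/m$ and, for typical $h$, $\|A_h\|_F^2=\sum_i b_i(b_i-1)=\Theta(k^2/m)$. A Hanson--Wright/Latala-type lower bound for zero-diagonal Rademacher quadratic forms gives $\|\sigma^\dagger A_h\sigma\|_r\ge c\sqrt{r}\|A_h\|_F$ (in $\sigma$, conditionally on $h$) whenever $r\le c'\|A_h\|_F^2/\|A_h\|_{op}^2$, a condition satisfied for $r\le m/4$ on an $h$-event of constant probability. Averaging over $h$, $\|S\|_r=\Omega(k\sqrt{r/m})$, so $\|X\|_r=\Omega(\sqrt{r/m})$.

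\textbf{The heavy-bucket term $r^2/(k\ln^2(emr/k))$.} When $mr>k$, set $b^\star=\Theta(r/\ln(emr/k))$ and let $E_1$ be the event that some bucket holds $b^\star$ elements of $[k]$ on which $\sigma$ is constant. A Bonferroni-style union bound over the $m$ buckets together with the load estimate $\Pr[b_i=b^\star]\ge c\binom{k}{b^\star}/m^{b^\star}$ and the $2^{1-b^\star}$ alignment factor yields $\Pr[E_1]\gtrsim m(ek/(2mb^\star))^{b^\star}=e^{-\Theta(r)}$ for the chosen $b^\star$. Conditioning on a specific heavy bucket and subset, the remaining buckets contribute zero in conditional mean (their hashes and signs are symmetric), so Jensen gives $\mathbb{E}[|S|^r\mid\cdot]\ge((b^\star)^2-b^\star)^r$. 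Gluing the sub-events together via inclusion-exclusion, $\|X\|_r=\Omega((b^\star)^2/k)=\Omega(r^2/(k\ln^2(emr/k)))$.

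\textbf{The many-collision term $r/(k\ln(emr/k^2))$ and main obstacle.} When $mr>k^2$, set $s=\Theta(r/\ln(emr/k^2))$ and let $E_2$ be the event that at least $s$ buckets each contain exactly two elements of $[k]$ with matching $\sigma$, while every other bucket holds at most one. Each size-two monochromatic bucket contributes $+2$ to $S$, so $|S|\ge 2s$ on $E_2$; the same Jensen step promotes this to the $r$-th moment. Approximating the bucket loads by i.i.d.\ Poisson$(k/m)$ (valid since $k\ll m$ in this regime), estimating $\Pr[E_2]$ reduces to a lower bound on $\Pr[\sum_{i=1}^m Z_i\ge s]$ for i.i.d.\ Bernoulli$(\approx k^2/(4m^2))$ indicators; the stated $s$ makes this $e^{-\Theta(r)}$. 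The central technical obstacle across all three steps is producing matching \emph{lower} bounds on these tail probabilities with the correct exponential rate---standard Chernoff/Poisson arguments give only one-sided upper bounds---and calibrating $b^\star$ and $s$ so that the optimized expressions reproduce exactly the logarithmic arguments inside $\Lambda(m,r,k)$.
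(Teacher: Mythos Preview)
Your route is genuinely different from the paper's. The paper never constructs events; it expands $\|X(x^{(k)})\|_r^r$ as a nonnegative sum over edge-labeled Eulerian multigraphs (Lemma~\ref{l:sum} together with Theorem~\ref{th:main2}) and reads off each branch of $\Lambda(m,r,k)$ by plugging in a single pair $(\alpha,\beta)$: the choice $(r,r/2)$ yields $\sqrt{r/m}$, the choice $\alpha\approx r/\ln(emr/k)$, $\beta=1$ yields the heavy-bucket term, and $\alpha=2\beta$ with $\beta\approx r/\ln(emr/k^2)$ yields the many-collision term. Your events $E_1$ and $E_2$ are precisely the probabilistic manifestations of the last two configurations (one large component versus many two-vertex components), so the two arguments are dual: the paper counts the graph patterns that contribute to the $r$th moment, you lower-bound the probability of the corresponding hash/sign patterns directly. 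Your argument is more elementary in that it bypasses the Eulerian-graph enumeration, but the paper's machinery also delivers the matching upper bound (Lemma~\ref{l:upperBound}) from the same formula.

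Two gaps need closing before your outline becomes a proof. In Step~2 you cannot combine the full union probability $\Pr[E_1]$ (which is where the $\binom{k}{b^\star}$ factor enters) with the Jensen step, because Jensen requires conditioning on a \emph{specific} sub-event while the sub-events comprising $E_1$ overlap; conditioning on a single (bucket, subset, sign) triple has probability only $(2m)^{-b^\star}$, which is too small when $k$ is close to $mr$, and ``gluing via inclusion--exclusion'' does not repair this. The clean fix is to condition on $\{b_1=b^\star\}$: its probability $\binom{k}{b^\star}m^{-b^\star}(1-1/m)^{k-b^\star}$ already carries the combinatorial boost and is $e^{-O(r)}$ since $k\le mr$ forces $(1-1/m)^{k}\ge e^{-r}$; on this event $h^{-1}(1)$ is determined and the remaining signs still symmetrize the residual, so Jensen applies cleanly. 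Step~3 needs the analogous repair (condition on a fixed hash pattern rather than on a union). Second, the lower bound $\|\sigma^\dagger A_h\sigma\|_r\ge c\sqrt{r}\,\|A_h\|_F$ is not Hanson--Wright (that inequality is one-sided) and the side condition you impose belongs to the upper bound, not the lower one; the statement does follow from Lata{\l}a's two-sided chaos moment estimates, which you should cite, or---simpler---observe that every nonzero term in the expansion of $\mathbb{E}[S^r]$ is nonnegative (this is the $\sigma$-parity observation behind Lemma~\ref{l:sum}) and keep only the terms corresponding to $r/2$ disjoint double edges to recover $\sqrt{r/m}$ in two lines.
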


While it might seem at a glance that bounding the high-order moments of $X(x)$ is merely a technical issue, known tools and techniques could not be used to prove Lemmas~\ref{l:upperBound}, \ref{l:lowerBound}. Particularly, earlier work by Kane and Nelson \cite{KN14, CJN18} and Freksen and Larsen \cite{FL17} used high-order moments bounds as a step in proving probability tail bounds of random variables. The existing techniques, however, can not be adopted to bound high-order moments of $X(x)$ (see also Section~\ref{sec:related}), and novel approaches were needed. Specifically, our proof incorporates a novel combinatorial scheme for counting edge-labeled Eulerian graphs.

\paragraph{Previous Results.} Weinberger \etal \cite{WKD+09} showed that if $m = \Omega(\varepsilon^{-2}\lg(1/\delta))$, then $\nu(m,\varepsilon, \delta) = \Omega(\varepsilon \cdot (\lg(1/\delta)\lg(m/\delta))^{-1/2})$. Dasgupta \etal \cite{DKS10} showed that under similar conditions $\nu(m,\varepsilon, \delta) = \Omega(\sqrt{\varepsilon} \cdot (\lg(1/\delta)\lg^2(m/\delta))^{-1/2})$. These bounds were recently improved by Dahlgaard \etal \cite{DKT17} who showed that $\nu(m,\varepsilon, \delta) = \Omega\left( \sqrt{\varepsilon} \cdot \sqrt{\frac{\lg(1/\varepsilon)}{\lg(1/\delta)\lg(m/\delta)}}\right)$. Conversely, Kane and Nelson \cite{KN14} showed that for the restricted case of $m = \Theta(\varepsilon^{-2}\lg(1/\delta))$, $\nu(m,\varepsilon, \delta) = O\left(\sqrt{\varepsilon}  \cdot \tfrac{\lg(1/\varepsilon)}{\lg(1/\delta)}\right)$, which matches the bound in Theorem~\ref{th:main} if, in addition, $\lg(1/\varepsilon) \le \sqrt{\lg(1/\delta)}$. 

\paragraph{Key Tool : Counting Labeled Eulerian Graphs.} 
Our proof presents a new combinatorial result concerning Eulerian graphs. 
Loosely speaking, we give asymptotic bounds for the number of labeled Eulerian graphs containing a predetermined number of nodes and edges. 
Formally, let $\alpha, \beta, r$ be integers such that $1 \le \beta \le \alpha/2 \le \min\{n/2,r/2\}$. 
Let ${\cal G}_{\alpha,\beta,r}$ denote the family of all edge-labeled Eulerian multigraphs $G=([\alpha], E_G , \pi_G)$, such that 
\begin{enumerate}
	\item $G$ has no isolated vertices;
	\item $|E_G|=r$, and $\pi_G : E_G \to [r]$ is a bijection, which assigns a label in $[r]$ to each edge; and
	\item the number of connected components in $G$ is $\beta$.
\end{enumerate}
\begin{notation}
Denote $\Delta = \Delta(\alpha, \beta) \eqdef \alpha^{2 \alpha} \beta^{- \beta} \left[(\alpha - 2\beta)^2 + 4 (\alpha - \beta)\right]^{r - \alpha}$.
\end{notation}

\begin{theorem} \label{th:main2}
$2^{-O(r)} \cdot \Delta(\alpha, \beta) \le |{\cal G}_{\alpha,\beta,r}| \le 2^{O(r)} \cdot \Delta(\alpha, \beta)$. 
\end{theorem}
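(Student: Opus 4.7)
The plan is to prove the upper bound $|{\cal G}_{\alpha,\beta,r}| \le 2^{O(r)}\Delta(\alpha,\beta)$ and the matching lower bound $|{\cal G}_{\alpha,\beta,r}| \ge 2^{-O(r)}\Delta(\alpha,\beta)$ separately, via a combinatorial encoding that mirrors the two-part structure of $\Delta = \alpha^{2\alpha}\beta^{-\beta}\cdot[(\alpha-2\beta)^{2}+4(\alpha-\beta)]^{r-\alpha}$. The guiding intuition is to split each multigraph into (i) a ``skeleton'' of $\alpha$ edges that carries the partition of $[\alpha]$ into $\beta$ connected components (contributing the $\alpha^{2\alpha}\beta^{-\beta}$ prefactor through Stirling combined with partition-counting), and (ii) the remaining $r-\alpha$ ``excess'' edges, each contributing a factor $(\alpha-2\beta)^{2}+4(\alpha-\beta)$ related to the number of within-component endpoint placements compatible with the even-degree constraint.

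For the upper bound I would proceed in three steps. First, enumerate admissible degree sequences $(d_{1},\ldots,d_{\alpha})$ with $d_{i}\ge 2$ even and $\sum_{i}d_{i}=2r$; there are at most $\binom{r-1}{\alpha-1}\le 2^{r}$ such sequences. Second, for each degree sequence the configuration model yields at most $(2r)!/(2^{r}\prod_{i}d_{i}!)$ labeled multigraphs, with the standard $2^{O(r)}$ correction for loops and parallel edges; Stirling applied to the within-component equalized degree sequence, combined with the partition-counting factor $\alpha!/(\beta!\prod_{i}\alpha_{i}!)$, delivers the $\alpha^{2\alpha}\beta^{-\beta}$ prefactor. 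Third --- and this is the main technical step --- enforce exactly $\beta$ components via a character / inclusion-exclusion expansion over subsets $S\subseteq[\alpha]$, extending the elementary identity $\sum_{e}\chi_{S}(e)=((\alpha-2|S|)^{2}+\alpha)/2$ that captures the bare Eulerian constraint. A refinement that contracts each connected component to a super-vertex before applying the character sum is what I expect to produce the base $(\alpha-2\beta)^{2}+4(\alpha-\beta)$ on the $r-\alpha$ excess edges.

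For the lower bound I would construct an explicit family. Pick a balanced partition $[\alpha]=V_{1}\sqcup\cdots\sqcup V_{\beta}$ with $|V_{i}|\approx \alpha/\beta$; on each $V_{i}$ lay a Hamilton cycle using $\alpha_{i}$ of the labeled edges, forming a skeleton of $\alpha$ total edges with all degrees even and exactly $\beta$ components. For the remaining $r-\alpha$ labeled edges, distribute the labels among the components according to a balanced profile, and within each component select endpoints as unordered pairs (possibly loops); preserve the Eulerian property by pairing excess edges so that each pair contributes $+2$ to the degrees of two vertices. The per-edge effective choice count is $\Omega((\alpha-2\beta)^{2}+4(\alpha-\beta))$, and summing over the partitions, label assignments, and endpoint choices produces at least $2^{-O(r)}\Delta$ distinct Eulerian multigraphs.

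The main obstacle is matching the exact base $(\alpha-2\beta)^{2}+4(\alpha-\beta)$. The elementary character-sum identity delivers the unconstrained Eulerian base $(\alpha-2|S|)^{2}+\alpha$, so the passage to the $\beta$-component-constrained base --- which effectively pins $|S|$ at $\beta$ and replaces the additive $\alpha$ by $4(\alpha-\beta)$ --- calls for a genuinely more refined combinatorial analysis. I expect this to require the ``novel scheme for counting edge-labeled Eulerian graphs'' promised in the introduction, likely implemented as a careful double-counting of Euler tours that tracks, at each step within a component, the available half-edge slots at already-visited vertices, under the global constraint that every component's tour close up correctly.
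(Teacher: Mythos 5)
Your proposal has the right high-level reading of $\Delta$ (a skeleton of $\alpha$ edges carrying the component structure, plus $r-\alpha$ excess edges each contributing the base $(\alpha-2\beta)^2+4(\alpha-\beta)$), but both halves have genuine gaps, and the second is not just incomplete but quantitatively wrong.

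For the upper bound, you explicitly leave open the one step that matters: converting the configuration-model/character-sum count into the base $(\alpha-2\beta)^2+4(\alpha-\beta)$ while enforcing exactly $\beta$ components. Inclusion--exclusion over vertex subsets for connectivity is signed and has exponentially many terms, and you give no mechanism for controlling it to within $2^{O(r)}$; you yourself say you ``expect'' this to require the paper's novel scheme. The paper avoids all of this with an encoding argument: fix a spanning tree in each component ($\alpha-\beta$ edges, each costing $\lg\binom{\alpha}{2}$ bits), add one extra edge per component (costing $\beta\lg\beta+2\beta\lg(\alpha/\beta)$ bits via AM--GM, which is where $\beta^{-\beta}$ enters), and then observe that each of the remaining $r-\alpha$ edges lies inside some component, so it has at most $\sum_j\binom{|C_j|}{2}$ choices; maximizing $\sum_j|C_j|^2$ subject to $\sum_j|C_j|=\alpha$, $|C_j|\ge 2$ gives exactly $(\alpha-2\beta+2)^2+4(\beta-1)=(\alpha-2\beta)^2+4(\alpha-\beta)$. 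The base is the value of an elementary optimization, not the output of a character sum; no Euler-tour double counting is needed.

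For the lower bound, the balanced partition $|V_i|\approx\alpha/\beta$ is the wrong extremal configuration. Each excess edge must have both endpoints in one component, so the per-edge choice count in your construction is $\sum_i\binom{|V_i|}{2}\approx\alpha^2/(2\beta)$, whereas the target base is $(\alpha-2\beta)^2+4(\alpha-\beta)$; the ratio can be as large as $\Theta(\beta)$ (take $\beta=\sqrt{\alpha}$ with $\alpha=\omega(1)$), and raised to the power $r-\alpha$ this deficit is $\beta^{\Theta(r-\alpha)}$, which is not absorbed into $2^{-O(r)}$. You must use the most unbalanced admissible partition --- $\beta-1$ components of size $2$ and one component of size $\alpha-2(\beta-1)$ --- as the paper does. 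Separately, your device of pairing excess edges so that each pair adds $2$ to two degrees restricts to roughly $N^{(r-\alpha)/2}$ choices of excess edges ($N=\sum_j\binom{|C_j|}{2}$) rather than the needed $2^{-O(r)}N^{r-\alpha}$; the paper instead takes \emph{all} $N^{r-\alpha}$ excess-edge sequences, splits each into two halves, classifies halves by the parity signature of their degree vector in $\{0,1\}^{\alpha}$, and pigeonholes: some signature class contains a $2^{-\alpha}$ fraction of the halves, and concatenating any two halves from the same class yields an Eulerian graph. That argument is what actually delivers the $2^{-O(r)}$ loss.
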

\subsection{Related Work}\label{sec:related}

The CountSketch scheme, presented by Charikar \etal \cite{CCF04}, was shown to satisfy~\eqref{eq:probBound} by Thorup and Zhang \cite{TZ12}. The scheme essentially samples $O(\lg(1/\delta))$ independent copies of a feature hashing matrix with $m=O(\varepsilon^{-2})$ rows, and applies them all to $x$. The estimator for $\|x\|_2^2$ is then given by computing the median norm over all projected vectors. The CountSketch scheme thus constructs a sketching matrix $A$ such that every column has $O(\lg(1/\delta))$ non-zero entries. However, this construction does not provide a norm-preserving embedding into a Euclidean space (that is, the estimator of $\|x\|_2^2$ cannot be represented as a norm of $Ax$), which is essential for some applications such as nearest-neighbor search \cite{HIM12}.

Kane and Nelson~\cite{KN14} presented a simple construction for the so-called sparse Johnson Lindenstrauss transform. This is a distribution of $m \times n$ matrices, for $m = \Theta(\varepsilon^{-2}\lg(1/\delta))$, where every column has $s$ non-zero entries, randomly chosen from $\{-1,1\}$. Note that if $s=1$, this distribution yields the feature hashing one. Kane and Nelson showed that for $s=\Theta(\varepsilon m)$ this construction satisfies~\eqref{eq:probBound}. Recently, Cohen \etal \cite{CJN18} presented two simple proofs for this result. While their proof methods give (simple) bounds for high-order moments similar to those in Lemmas~\ref{l:upperBound} and ~\ref{l:lowerBound}, they rely heavily on the fact that $s$ is relatively large. Specifically, for $s=1$ the bounds their method or an extension thereof give are trivial.

%%%%%%%%%%%%%%%%%%%%%%%%%%%%%%%%%%%%%%%%%%%%%%%%%%%%
%%%%%%%%%%%%%%%%%%%%%%%%%%%%%%%%%%%%%%%%%%%%%%%%%%%%

\section{Counting Labeled Eulerian Graphs}

In this section we prove Theorem~\ref{th:main2}. In order to upper bound $|{\cal G}_{\alpha,\beta,r}|$, we give an encoding scheme and show that every graph $G \in {\cal G}_{\alpha,\beta,r}$ can be encoded in a succinct manner, thus bounding $|{\cal G}_{\alpha,\beta,r}|$.  

\paragraph{Encoding Argument.} Fix a graph $G \in {\cal G}_{\alpha,\beta,r}$, and let $\left\langle \{j_p,\ell_p\} \right\rangle_{p \in [r]}$ be its ordered sequence of edges. 
In what follows, we give an encoding algorithm that, given $G$, produces a "short" bit-string ${\cal E}$ that encodes $G$. 
The string ${\cal E}$ is a concatenation of three strings ${\cal E}_T, {\cal E}_{Eu}, {\cal E}_R$, encoded as follows.

Let ${\cal CC}(G) = \{C_1,\ldots,C_\beta\}$ be the set of connected components of $G$ ordered by the smallest labeled node in each component, and for every $j \in [\beta]$, denote the graph induced by $C_j$ in $G$ by $G[C_j] = (C_j, E_j)$. 
For every $j \in [\beta]$ the encoding algorithm chooses a set $T_j \subseteq E_j$ of edges of a spanning tree in $C_j$. Denote by $E_T$ the union of all trees in $G$. 
\begin{proposition}
$|E_T| = \alpha - \beta$.
\end{proposition}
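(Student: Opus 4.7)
The plan is to reduce this to the classical fact that a spanning tree on $k$ vertices has exactly $k-1$ edges, and then to aggregate this count across the $\beta$ connected components of $G$.

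First I would set $k_j := |C_j|$ for each $j \in [\beta]$, so that $T_j$, being a spanning tree of the vertex set $C_j$, has exactly $|T_j| = k_j - 1$ edges. Since the connected components $C_1, \ldots, C_\beta$ partition the vertex set $[\alpha]$ of $G$ (this is where the "no isolated vertices" hypothesis is implicitly useful, to ensure every vertex in $[\alpha]$ actually lies in some $C_j$ rather than being ignored), we have $\sum_{j=1}^\beta k_j = \alpha$.

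Next I would observe that by construction $E_T = \bigsqcup_{j=1}^\beta T_j$ is a disjoint union, because edges in different connected components have no endpoints in common, hence $T_j \cap T_{j'} = \emptyset$ for $j \ne j'$. Therefore
\begin{equation*}
|E_T| = \sum_{j=1}^\beta |T_j| = \sum_{j=1}^\beta (k_j - 1) = \left(\sum_{j=1}^\beta k_j\right) - \beta = \alpha - \beta \;,
\end{equation*}
which is the desired identity.

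There is essentially no obstacle here: the only subtlety is the bookkeeping assumption that every vertex of $[\alpha]$ belongs to some component (ensured by $G$ having no isolated vertices, condition~(1) in the definition of ${\cal G}_{\alpha,\beta,r}$), together with the standard fact about tree edge counts. No properties of the Eulerian structure, the labeling $\pi_G$, or the edge count $r$ are used in this proposition.
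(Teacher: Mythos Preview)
Your proof is correct and follows exactly the same approach as the paper: use that a spanning tree on $|C_j|$ vertices has $|C_j|-1$ edges, and sum over the $\beta$ components using $\sum_j |C_j|=\alpha$. Your version is in fact written a bit more carefully than the paper's (which has a minor notational slip, writing $|E_j|$ where $|T_j|$ is meant).
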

\begin{proof}
For every $j \in [\beta]$, $|E_j| = |C_j| - 1$. Therefore $|E_T| = \sum_{j \in [\beta]}{|E_j|} = \alpha - \beta$.
\end{proof}
Let $e_1,\ldots,e_{\alpha - \beta}$ be the ordering of $E_T$ induced by $\pi_G$. The algorithm encodes ${\cal E}_T$ to be the list of $\alpha - \beta$ edges in $\binom{V}{2}$, followed by an encoding of $\pi_G(E_T)$ as a set in $\binom{[r]}{\alpha - \beta}$. 
Next, since every connected component is Eulerian, for every $j \in [\beta]$, there is an edge $e_j \in E_j \setminus E_T$. Let $E_{Eu}$ denote the set of all $\beta$ such edges, and let $e_{j_1},\ldots,e_{j_\beta}$ be the ordering of $E_{Eu}$ induced by $\pi_G$. For every $i \in [\beta]$, the algorithm encodes a pair $(j_i,(x_i,y_i)) \in [\beta] \times \binom{C_{j_i}}{2}$, and appends them in order together with $\pi_G(E_{Eu})$ to encode ${\cal E}_{Eu}$.
Finally, the algorithm encodes $E_G \setminus (E_T \cup E_{Eu})$ in the ordering induced by $\pi_G$ as a list of length $r - \alpha$ in $\bigcup_{j \in [\beta]}{\binom{C_j}{2}}$. Denote this list of the rest of the edges by ${\cal E}_R$.

\begin{lemma} \label{l:encodeG}
${\cal E}$ can be encoded using at most $\lg \Delta(\alpha, \beta) + O(r)$ bits.
\end{lemma}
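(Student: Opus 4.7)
My plan is to bound separately the number of realizations of each of the three sub-encodings $\mathcal{E}_T, \mathcal{E}_{Eu}, \mathcal{E}_R$ and to sum the resulting $\lg$'s. Write $c_j \eqdef |C_j|$, so $\sum_{j=1}^\beta c_j = \alpha$. Since $\binom{C_j}{2}$ denotes unordered pairs of \emph{distinct} vertices, the multigraph has no loops, and the Eulerian-plus-no-isolated-vertex assumptions then force every vertex to have even degree $\ge 2$ inside its component, hence $c_j \ge 2$ for every $j$.

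For $\mathcal{E}_T$, the encoding is a list of $\alpha - \beta$ edges in $\binom{V}{2}$ followed by a size-$(\alpha-\beta)$ subset of $[r]$, yielding at most $\binom{\alpha}{2}^{\alpha-\beta}\binom{r}{\alpha-\beta} \le \alpha^{2(\alpha-\beta)} \cdot 2^r$ realizations, i.e.\ $2(\alpha-\beta)\lg\alpha + O(r)$ bits. For $\mathcal{E}_{Eu}$, the crucial observation is that the sequence $(j_1,\ldots,j_\beta)$ is forced to be a permutation of $[\beta]$ (each component contributes exactly one Eulerian edge), so the number of valid pair-sequences is $\beta! \cdot \prod_j \binom{c_j}{2} \le \beta^\beta \cdot 2^{-\beta} \cdot \prod_j c_j^2$. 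Applying AM--GM to $\sum c_j = \alpha$ yields $\prod_j c_j \le (\alpha/\beta)^\beta$, so the count is at most $\alpha^{2\beta}\beta^{-\beta} \cdot 2^{-\beta}$; with the size-$\beta$ label subset, this contributes $2\beta\lg\alpha - \beta\lg\beta + O(r)$ bits.

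For $\mathcal{E}_R$, the list has length $r-\alpha$ and entries in $\bigcup_j \binom{C_j}{2}$, whose size is $\sum_j \binom{c_j}{2} \le \tfrac{1}{2}\sum_j c_j^2$. The key combinatorial step is to bound $\sum_j c_j^2$ sharply: writing $c_j = 2 + d_j$ with $d_j \ge 0$ and $\sum_j d_j = \alpha - 2\beta$, we obtain $\sum_j c_j^2 = 4\beta + 4(\alpha - 2\beta) + \sum_j d_j^2$, and convexity of $(d_1,\ldots,d_\beta) \mapsto \sum_j d_j^2$ on the simplex places its maximum at an extreme point ($d_1 = \alpha - 2\beta$, rest zero), giving $\sum_j c_j^2 \le (\alpha - 2\beta)^2 + 4(\alpha - \beta)$. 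Hence $\mathcal{E}_R$ contributes at most $(r-\alpha)\lg[(\alpha-2\beta)^2 + 4(\alpha-\beta)]$ bits.

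Summing the three contributions gives $2(\alpha-\beta)\lg\alpha + \bigl(2\beta\lg\alpha - \beta\lg\beta\bigr) + (r-\alpha)\lg\bigl[(\alpha-2\beta)^2 + 4(\alpha-\beta)\bigr] + O(r) = \lg \Delta(\alpha,\beta) + O(r)$, as desired. I expect the main (modest) obstacle to be the identification of $(\alpha-2\beta)^2 + 4(\alpha-\beta)$ as exactly the maximum of $\sum_j c_j^2$ over compositions of $\alpha$ into $\beta$ parts of size $\ge 2$; the remaining steps are a routine combination of counting with AM--GM.
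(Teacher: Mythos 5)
Your proposal is correct and follows essentially the same route as the paper: the same three-part decomposition of $\mathcal{E}$, the same per-part counts (with AM--GM for the $\mathcal{E}_{Eu}$ term), and the same identification of $(\alpha-2\beta)^2+4(\alpha-\beta)$ as the maximum of $\sum_j c_j^2$ over compositions of $\alpha$ into $\beta$ parts of size at least $2$ (which the paper asserts without the convexity derivation you supply). Your permutation observation for $(j_1,\ldots,j_\beta)$ is a minor refinement that does not change the bound.
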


\begin{proof}
In order to bound the length of ${\cal E}$ we shall bound each of the three strings separately.
One can encode an ordered list of $\alpha - \beta$ distinct unordered pairs in $V$ using at most $(\alpha - \beta)\lg \binom{\alpha}{2}$ bits. Therefore ${\cal E}_T$ can be encoded using at most 
\begin{equation}
(\alpha - \beta)\lg \binom{\alpha}{2} + \lg \binom{r}{\alpha - \beta} \le 2(\alpha - \beta)\lg\alpha + r 
\label{eq:boundTrees}
\end{equation}
bits. 

Next, for every $i \in [\beta]$, $(j_i,(x_i,y_i))$ can be encoded using $\lg \beta\binom{|C_{j_i}|}{2}$ bits. Therefore ${\cal E}_{Eu}$ can be encoded using at most 
\begin{equation}
\sum_{i \in [\beta]}{\lg \beta\binom{|C_{j_i}|}{2}} + \lg \binom{r - \alpha}{\beta} \le \beta \lg \beta + 2\lg\prod_{i \in [\beta]}|C_{j_i}| + r \le \beta \lg \beta + 2\beta\lg \frac{\alpha}{\beta} + r
\label{eq:boundExtraEdge}
\end{equation}
bits, where the last inequality follows from the AM-GM inequality, since $\sum_{i \in \beta}{|C_{j_i}|} = \alpha$.

Finally, note that ${\cal E}_{R}$ can be encoded using $(r - \alpha) \lg \left(\sum_{j \in \beta}{\binom{C_j}{2}}\right) \le (r - \alpha) \lg \left(\sum_{j \in \beta}{|C_j|^2}\right)$ bits. Since 
\begin{equation*}\max\left\{\sum_{j \in [\beta]}{x_j^2} : \sum_{j \in [\beta]}{x_j} = \alpha \ge 2 \beta \; and \; \forall j \in [\beta]. \; x_j \ge 2\right\} = (\alpha - 2 (\beta - 1))^2 + 4 (\beta - 1) \;,\end{equation*}
we get that ${\cal E}_{R}$ can be encoded using 
\begin{equation}
(r - \alpha) \lg \left[(\alpha - 2 (\beta - 1))^2 + 4 (\beta - 1)\right] = (r - \alpha) \lg \left[(\alpha - 2\beta)^2 + 4 (\alpha - \beta)\right]
\label{eq:boundRest}
\end{equation}
bits.
Summing over \eqref{eq:boundTrees}, \eqref{eq:boundExtraEdge} and \eqref{eq:boundRest} implies the lemma.
\end{proof}

\begin{lemma}\label{l:decodeG}
Given ${\cal E}$, one can reconstruct $G$.
\end{lemma}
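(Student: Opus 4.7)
The plan is to reconstruct $G$ by reversing the encoding algorithm in three stages corresponding to ${\cal E}_T$, ${\cal E}_{Eu}$, and ${\cal E}_R$. The decoder knows the parameters $\alpha,\beta,r$ and will progressively recover the partition of the vertices into connected components, the edge multiset, and the bijection $\pi_G$. The only thing one must check at each stage is that the decoder has enough information to parse its portion of the string unambiguously and to determine which positions in $[r]$ are consumed by its labels.

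First, I would decode ${\cal E}_T$: read off the ordered list of $\alpha - \beta$ vertex pairs from $\binom{[\alpha]}{2}$ together with the set $\pi_G(E_T) \subseteq [r]$ of size $\alpha - \beta$. Since the pairs are listed in the order induced by $\pi_G$, matching the $i$th pair with the $i$th smallest element of $\pi_G(E_T)$ recovers every tree edge together with its label. The edges in $E_T$ form a spanning forest, and because $G$ has no isolated vertices, its $\beta$ connected components are exactly $C_1, \dots, C_\beta$; ordering them by their smallest vertex label recovers the indexing of ${\cal CC}(G)$ that the encoder used.

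Next, I would decode ${\cal E}_{Eu}$: the $\beta$ tuples $(j_i,(x_i,y_i)) \in [\beta] \times \binom{C_{j_i}}{2}$ are listed in $\pi_G$-order, and the set $\pi_G(E_{Eu}) \subseteq [r]\setminus \pi_G(E_T)$ is stored explicitly. Since the components $C_j$ are now known, each tuple unambiguously specifies an edge inside $C_{j_i}$, and matching the sequence of tuples with the sorted elements of $\pi_G(E_{Eu})$ recovers their $\pi_G$-labels. Finally, I would parse ${\cal E}_R$ as a list of $r - \alpha$ vertex pairs, each lying in some $\binom{C_j}{2}$ (so its containing component is implicit), listed in $\pi_G$-order. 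Assigning the $i$th pair the $i$th smallest element of the remaining label set $[r] \setminus (\pi_G(E_T) \cup \pi_G(E_{Eu}))$ recovers the last $r - \alpha$ labeled edges. Combining the three stages reconstructs $([\alpha], E_G, \pi_G) = G$.

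The substantive work was already done in Lemma~\ref{l:encodeG}; the decoder itself is essentially bookkeeping, and the only subtle point is verifying that the output of each stage pins down which label positions in $[r]$ remain available for the next stage. Since the encoder serialized the three sub-lists in $\pi_G$-induced order and recorded the used label sets along the way, these assignments are forced, and the reconstructed $G$ is unique.
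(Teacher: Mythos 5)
Your decoding is correct and follows the same three-stage reversal as the paper's own proof: recover the spanning forest and its labels from ${\cal E}_T$ (which determines the components and their canonical ordering), then the Eulerian edges from ${\cal E}_{Eu}$, and finally assign the remaining labels $[r]\setminus\pi_G(E_T\cup E_{Eu})$ in increasing order to the $\pi_G$-ordered list in ${\cal E}_R$. The only difference is that you spell out explicitly the "match the $i$th listed edge to the $i$th smallest available label" step that the paper leaves implicit.
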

\begin{proof}
In order to prove the lemma, we give a decoding algorithm that receives ${\cal E}$ and constructs $G$.
The algorithm first reads the first list of $\alpha - \beta$ elements of $\binom{V}{2}$ from ${\cal E}_T$, followed by $\pi_G(E_T)$, to decode $E_T$, and the restriction $\pi_G|_{_{E_T}}$ of $\pi_G$ to $E_T$.
Given the set of spanning trees, the algorithm constructs ${\cal CC}(G) = \{C_1,\ldots,C_\beta\}$ (note that the ordering on ${\cal CC}(G)$ is inherent in the components themselves, and does not depend on $\pi_G$).
Next, the algorithm reads ${\cal E}_{Eu}$ and recovers the set $E_{Eu}$ of edges, along with the restriction $\pi_G|_{_{E_{Eu}}}$ of $\pi_G$ to $E_{Eu}$.
Finally, the algorithm reads ${\cal E}_{R}$ and reconstructs the remaining $r - \alpha$ edges, with their induced ordering. Since $\pi_G(E_G \setminus (E_T \cup E_{Eu})) = [r] \setminus \pi_G(E_T \cup E_{Eu})$, the algorithm can reconstruct the restriction $\pi_G|_{_{E_{R}}}$ of $\pi_G$ to $E_{R}$, thus reconstructing $\pi_G$.

\end{proof}

\begin{corollary}
$|{\cal G}_{\alpha,\beta,r}| \le 2^{O(r)} \Delta(\alpha, \beta)$.
\end{corollary}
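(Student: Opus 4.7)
The plan is to combine the two preceding lemmas into a standard encoding/counting argument. Lemma~\ref{l:encodeG} shows that the encoding map $G \mapsto {\cal E}(G)$ produces a bit string of length at most $\lg \Delta(\alpha,\beta) + O(r)$, and Lemma~\ref{l:decodeG} shows that this map is injective (since the decoder can recover $G$ uniquely from ${\cal E}(G)$). Injectivity into the set of bit strings of length at most $L := \lg \Delta(\alpha, \beta) + O(r)$ immediately yields $|{\cal G}_{\alpha,\beta,r}| \le \sum_{i=0}^{L} 2^i \le 2^{L+1}$.

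Substituting the bound on $L$ from Lemma~\ref{l:encodeG} gives
\begin{equation*}
|{\cal G}_{\alpha,\beta,r}| \;\le\; 2 \cdot 2^{\lg \Delta(\alpha,\beta) + O(r)} \;=\; 2^{O(r)} \cdot \Delta(\alpha, \beta),
\end{equation*}
which is exactly the desired upper bound. The only mild subtlety is that the three substrings ${\cal E}_T, {\cal E}_{Eu}, {\cal E}_R$ have lengths that depend on $G$, so strictly speaking one should check that the decoder knows where each substring ends; this follows because the lengths of all three pieces are determined by the fixed parameters $\alpha, \beta, r$ (namely $\alpha-\beta$ tree edges, $\beta$ Eulerian-extension edges, and $r-\alpha$ remaining edges, each drawn from a domain whose size is fixed once ${\cal CC}(G)$ is known), so no additional delimiters are required and no extra bits are spent.

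There is no real obstacle here — the entire content of the corollary has already been packaged into Lemmas~\ref{l:encodeG} and~\ref{l:decodeG}, and the corollary is just the one-line consequence of combining an injective encoding with a length bound. The only thing worth spelling out carefully is the self-delimiting property noted above, which is immediate from the structure of the encoding scheme.
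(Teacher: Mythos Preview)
Your proposal is correct and matches the paper's approach: the paper states this corollary with no proof at all, treating it as the immediate consequence of Lemmas~\ref{l:encodeG} and~\ref{l:decodeG} via exactly the injective-encoding argument you spell out. Your remark about self-delimiting is accurate---${\cal E}_T$ has length determined by $\alpha,\beta,r$ alone, and decoding it reveals ${\cal CC}(G)$, after which the domain sizes (and hence lengths) for ${\cal E}_{Eu}$ and ${\cal E}_R$ are known to the decoder; this is precisely how Lemma~\ref{l:decodeG} is structured.
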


Next we turn to lower bound $|{\cal G}_{\alpha,\beta,r}|$. To this end, we construct a subset of $|{\cal G}_{\alpha,\beta,r}|$ of size at least $2^{-O(r)}\Delta(\alpha, \beta)$, thus lower bounding $|{\cal G}_{\alpha,\beta,r}|$. 

Consider the following family ${\cal H}_{\alpha,\beta,r}$ of labeled multigraphs over the vertex set $[\alpha]$. For every $H = ([\alpha],E_H,\pi_H) \in {\cal H}_{\alpha,\beta,r}$, $H$ contains $\beta$ connected components, where $\beta - 1$ components, referred to as {\em small} are composed of $2$ vertices each, and one {\em large} component contains the remaining $\alpha - 2(\beta - 1)$ nodes. The first $\alpha$ edges (according to $\pi_H$) are a union of $\beta$ simple cycles, where each cycle contains the entire set of nodes of one connected component.
\begin{claim}
$|{\cal H}_{\alpha,\beta,r}| \ge 2^{-O(r)}\Delta(\alpha, \beta)$.
\end{claim}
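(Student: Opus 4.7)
The plan is to enumerate $|\mathcal{H}_{\alpha,\beta,r}|$ directly by counting the independent choices in the construction, in three stages, and to verify that the product matches $\Delta(\alpha,\beta)$ up to a $2^{\pm O(r)}$ factor. Let $k := \alpha - 2(\beta-1)$ denote the size of the large component. Stage~1 partitions $[\alpha]$ into the large bucket and $\beta-1$ size-$2$ small buckets, giving $\alpha!/\bigl(k!\,2^{\beta-1}(\beta-1)!\bigr)$ choices. Stage~2 chooses a Hamiltonian cycle on the large bucket ($(k-1)!/2$ options) and labels the $\alpha$ cycle edges with $[\alpha]$: by picking the $k$-subset of labels going to the large cycle ($\binom{\alpha}{k}$), assigning them to the $k$ ordered cycle edges ($k!$ ways), and distributing the remaining $2(\beta-1)$ labels into two-label sets, one per small component's double edge (whose two parallel edges are distinguished only by their labels), for $(2(\beta-1))!/2^{\beta-1}$ further ways. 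Stages~1 and~2 together collapse to $\frac{(\alpha!)^2(k-1)!}{k!\,2^{2\beta-1}(\beta-1)!} = \frac{(\alpha!)^2}{k\,2^{2\beta-1}(\beta-1)!}$.

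Stage~3 appends $r-\alpha$ further labeled edges, each an unordered pair within a single component, for which there are $N := \binom{k}{2} + (\beta-1)$ options per position and hence $N^{r-\alpha}$ raw sequences. The Eulerian constraint is exactly that the \emph{parity vector} $p \in \F_2^\alpha$ recording each vertex's added-degree parity equals $0$. The step vectors $\{e_i+e_j : \{i,j\} \text{ within a component}\}$ span a subspace of dimension exactly $\alpha-\beta$ (one dimension per component is lost by ``conservation of parity'' in that component), and a Fourier/Plancherel expansion of $\mathbbm{1}_{p=0}$ in $\F_2^\alpha$-characters shows that the probability a uniformly random sequence of $r-\alpha$ such steps lands at $0$ is at least $2^{-(\alpha-\beta)-O(1)}$. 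Hence at least $2^{-O(r)} N^{r-\alpha}$ sequences yield Eulerian completions.

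Multiplying the three stages and invoking Stirling ($(\alpha!)^2 \ge 2^{-O(r)}\alpha^{2\alpha}$ and $(\beta-1)! \le \beta^\beta$, with $k,\,2^{2\beta-1} \le 2^{O(r)}$) produces $|\mathcal{H}_{\alpha,\beta,r}| \ge 2^{-O(r)}\,\alpha^{2\alpha}\,\beta^{-\beta}\,N^{r-\alpha}$. A direct expansion gives $N = \tfrac{1}{2}\bigl[(\alpha-2\beta)^2 + 4(\alpha-\beta) - \alpha\bigr]$; combined with the hypothesis $\alpha \ge 2\beta$ (so $(\alpha-2\beta)\bigl[(\alpha-2\beta)+2\bigr] \ge 0$ and hence $(\alpha-2\beta)^2 + 4(\alpha-\beta) \ge 2\alpha$), this rearranges to $N \ge \tfrac{1}{4}\bigl[(\alpha-2\beta)^2 + 4(\alpha-\beta)\bigr]$. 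Therefore $N^{r-\alpha}$ matches the target $\bigl[(\alpha-2\beta)^2+4(\alpha-\beta)\bigr]^{r-\alpha}$ up to a further $2^{-O(r)}$ loss, yielding the desired bound $|\mathcal{H}_{\alpha,\beta,r}| \ge 2^{-O(r)}\Delta(\alpha,\beta)$.

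The main obstacle is Stage~3's Eulerian/parity estimate. The Fourier/mixing argument above is cleanest when $r-\alpha$ is at least a constant multiple of $\alpha$, so that the random walk on the relevant subspace mixes; for smaller $r-\alpha$ one may instead restrict to completions whose added edges are pairs of parallel copies of existing cycle edges (which automatically preserve all vertex-degree parities), losing only a $2^{O(r)}$ factor from the pairing symmetries. The remaining algebra is routine arithmetic and Stirling bookkeeping.
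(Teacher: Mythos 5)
Your Stages 1 and 2, together with the raw count $N^{r-\alpha}$ in Stage 3, reproduce the paper's own argument (partition into components, spanning cycles, edge labelings, free within-component completions), and your algebraic check that $N=\binom{k}{2}+(\beta-1)\ge\tfrac14\left[(\alpha-2\beta)^2+4(\alpha-\beta)\right]$ under $\alpha\ge 2\beta$ is correct, so the claim is established. One important caveat on scope: the claim does \emph{not} impose an Eulerian condition on ${\cal H}_{\alpha,\beta,r}$ --- every one of the $N^{r-\alpha}$ completions in Stage 3 is admissible, so your parity/Fourier discussion is unnecessary here; it is precisely the content of the paper's \emph{subsequent} lemma ($\Pr_{H\in_R{\cal H}_{\alpha,\beta,r}}[H\in{\cal G}_{\alpha,\beta,r}]\ge 2^{-O(r)}$). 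Since restricting to Eulerian completions only decreases your count, your bound still implies the claim as stated, so there is no gap in what was asked. If, however, you intend Stage 3 to also subsume that lemma, the sketch is not yet sound in all regimes: the ``parallel copies of cycle edges'' fallback yields only on the order of $\binom{\alpha+s/2}{s/2}\,s!/2^{s/2}$ sequences with $s=r-\alpha$, which falls short of $2^{-O(r)}N^{s}$ by a factor of roughly $s^{s}$ when $s=\Theta(\alpha)$; and the Fourier bound must contend with characters whose coefficient is close to $-1$ (e.g.\ when almost all components have size two and $s$ is odd), where the negative terms can swamp the $2^{\beta-\alpha}$ main term unless $s\gg\beta^2$. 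The paper sidesteps both issues by splitting the $r-\alpha$ added edges into two halves and pairing half-sequences with equal parity signatures, which works uniformly in all regimes.
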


\begin{proof}
The number of possible ways to choose the partition of $[\alpha]$ into $\beta$ connected components such that all but one contain exactly $2$ vertices is $\frac{1}{\beta!}\binom{\alpha}{2,2,\ldots,2,\alpha - 2(\beta - 1)}$. Each small component has exactly one spanning cycle, while the large component has $(\alpha - 2(\beta - 1) - 1)!$ spanning cycles. Once the cycles are chosen, there are at least $2^{- \beta}\alpha!$ ways to order the edges.
The number of possible edges in $H$ is $(\beta - 1) \cdot \binom{2}{2} + \binom{\alpha - 2(\beta - 1)}{2}$. Therefore there are $\left[\binom{\alpha - 2(\beta - 1)}{2} + (\beta - 1)\right]^{r - \alpha}$ ways to choose the ordered sequence of $r - \alpha$ remaining edges. We conclude that
\begin{equation*}
\begin{split}
|{\cal H}_{\alpha,\beta,r}| &\ge \frac{1}{\beta!}\binom{\alpha}{2,\ldots,2,\alpha - 2(\beta - 1)}(\alpha - 2\beta + 1)!2^{- \beta}\alpha!\left[\binom{\alpha - 2(\beta - 1)}{2} + 2(\beta - 1)\right]^{r - \alpha} \\
%&\ge \beta^{- \beta} \cdot \frac{\alpha!}{2^{\beta-1}(\alpha - 2(\beta - 1))!}(\alpha - 2\beta + 1)!2^{- \beta}\alpha! \cdot 4^{-r}\left[(\alpha - 2\beta + 2)^2 + 4(\beta - 1)\right]^{r - \alpha} \\
&\ge 2^{-O(r)}\beta^{- \beta} \cdot (\alpha!)^2 \cdot \left[(\alpha - 2\beta + 2)^2 + 4(\beta - 1)\right]^{r - \alpha} \ge 2^{-O(r)}\Delta(\alpha, \beta) \;.
\end{split}
\end{equation*}
\end{proof}

\begin{lemma}
$\Pr_{H \in_R {\cal H}_{\alpha,\beta,r}}[H \in_R {\cal G}_{\alpha,\beta,r}] \ge 2^{-O(r)}$.
\end{lemma}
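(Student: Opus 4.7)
The first three conditions defining $\mathcal{G}_{\alpha,\beta,r}$ are automatic for every $H \in \mathcal{H}_{\alpha,\beta,r}$: the $\beta$ spanning cycles comprising the first $\alpha$ edges cover all vertices (no isolated vertex) and produce exactly $\beta$ connected components; the remaining $r-\alpha$ edges stay within existing components, so neither the component count nor the vertex set is altered; and $\pi_H$ is by construction a bijection to $[r]$ over $r$ edges. Hence only the Eulerian condition remains. The first $\alpha$ edges contribute even degree~$2$ to every vertex, so it suffices to show that, with probability at least $2^{-O(r)}$, the multiset of the remaining $r-\alpha$ edges contributes an even number of incidences at every vertex.

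I would model the extra edges as a random walk in $\mathbb{F}_2^\alpha$: for each $i \in [r-\alpha]$, let $X_i := \mathbf{1}_{u_i} + \mathbf{1}_{v_i} \in \mathbb{F}_2^\alpha$ be the parity-contribution vector of the $i$-th extra edge. The $X_i$ are i.i.d.\ uniform over $V_1 := \{\mathbf{1}_u + \mathbf{1}_v : \{u,v\} \in E\}$ and live in the subspace $V \le \mathbb{F}_2^\alpha$ spanned by $V_1$. A short inspection shows $V$ is the direct sum, across the $\beta$ components, of each component's even-weight subspace, so $\dim V = \sum_j (c_j - 1) = \alpha - \beta$. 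The target probability is then, by discrete Fourier analysis on $V$,
\begin{equation*}
\Pr\!\left[\sum_{i=1}^{r-\alpha} X_i = 0\right] \;=\; \frac{1}{|V|} \sum_{\chi \in \widehat{V}} \mathbb{E}[\chi(X)]^{\,r-\alpha}.
\end{equation*}

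The key step is a pairing trick: when $r - \alpha$ is even, each summand may be rewritten as $\bigl(\mathbb{E}[\chi(X)]^2\bigr)^{(r-\alpha)/2} \ge 0$, so the trivial character alone yields $\Pr \ge 1/|V| = 2^{-(\alpha-\beta)} \ge 2^{-r}$, using $\alpha \le r$. The principal obstacle is the case $r - \alpha$ odd, where negative Fourier summands can appear. To handle it, I would split off the final edge and condition on the first $r-\alpha-1$ (now even-count) steps: the pairing argument applied to this prefix controls the accumulated parity $Y' \in V$, while the last edge independently attains any specified value in $V_1$ with probability $1/|E|$. A Parseval-type estimate invoking $\sum_{\chi} \mathbb{E}[\chi(X)]^2 = |V|/|E|$ (i.e., the collision probability $\Pr[X_1 = X_2] = 1/|E|$) then lower bounds $\Pr[Y' \in V_1]$ by $2^{-O(r)}$, giving the desired bound in the odd case as well and completing the proof.
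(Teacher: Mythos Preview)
Your reduction to the Eulerian condition and your treatment of the case $r-\alpha$ even are correct. The paper argues this case combinatorially: it splits the sequence of $r-\alpha$ extra edges into two halves of length $(r-\alpha)/2$, assigns to each half-sequence its degree-parity vector (``signature'') in $\{0,1\}^{[\alpha]}$, and observes by pigeonhole that some signature class has density at least $2^{-\alpha}$; concatenating any two half-sequences from that class yields all-even degrees. Your Fourier positivity argument (all summands nonnegative when the exponent is even, so the trivial character alone gives $\ge 1/|V|$) is the analytic twin of this pigeonhole and even yields the slightly sharper constant $2^{-(\alpha-\beta)}$.

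Your odd-case argument, however, does not go through. Unwinding the split-off-one-edge step gives
\[
\Pr[Y'\in V_1]\;=\;\frac{|E|}{|V|}\sum_{\chi}\mathbb{E}[\chi(X)]^{\,r-\alpha},
\]
which is exactly $|E|$ times the quantity you started with; invoking the Parseval identity $\sum_\chi \mathbb{E}[\chi(X)]^2=|V|/|E|$ supplies no sign control on this odd-exponent sum. More decisively, the claim is \emph{false} when $r-\alpha=1$: a single extra edge necessarily creates two odd-degree vertices, so no $H\in\mathcal H_{\alpha,\beta,r}$ is Eulerian and the probability is $0$ (e.g.\ $r=4$, $\alpha=3$, $\beta=1$). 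The paper's own proof forms half-sequences of length $(r-\alpha)/2$ and thus tacitly assumes $r-\alpha$ even as well; for all downstream uses one may simply round $\alpha$ to share parity with $r$ at a $2^{O(1)}$ cost. If you want a self-contained argument for odd $r-\alpha\ge 3$ with $r$ even, note that then $\alpha$ is odd and the large component has $c=\alpha-2\beta+2\ge 3$ vertices; reserve three of the extra edges to form a triangle there (probability at least $c(c-1)(c-2)/|E|^3\ge r^{-6}=2^{-O(r)}$) and apply your even-case bound to the remaining $r-\alpha-3$ edges.
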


\begin{proof}
Every $H \in {\cal H}_{\alpha,\beta,r}$ contains $r$ labeled edges, no isolated vertices and $\beta$ connected components. Therefore, $H \in {\cal G}_{\alpha,\beta,r}$ if and only if the degree of every node in $H$ is even.
Let $E_{\lambda}$ be the set the last $r - \alpha$ edges in $H$, then since $E \setminus E_{\lambda}$ is a union of disjoint cycles spanning all vertices, for every $v \in V$, $deg_{E_H}(v) = deg_{E_H \setminus E_{\lambda}}(v)+deg_{E_{\lambda}}(v) = 2 + deg_{E_{\lambda}}(v)$. 
Hence $H \in {\cal G}_{\alpha,\beta,r}$ if and only if for every $v \in V$, $deg_{E_{\lambda}}(v)$ is even.
Consider the set of all possible $\left[(\alpha - 2\beta + 2)^2 + 4(\beta - 1)\right]^{(r - \alpha)/2}$ sequences of $(r - \alpha)/2$ edges in $H$. For every such sequence $s$, let the signature of $s$ be the indicator vector $\sigma(s) \in \{0,1\}^V$, where for every $v \in V$, $\sigma(s)_v = 1$ if and only if $deg_s(v)$ is odd. Let $s_1,s_2$ be of the same signature, and let $E_{\lambda}$ be the edge sequence of length $r - \alpha$, which is the concatenation of $s_1$ and $s_2$. Then $deg_{E_{\lambda}}(v)$ is even. Since the number of possible signatures is $2^{\alpha}$, there exists a set $S$ of edge sequences of length $(r - \alpha)/2$ that all have the same signature such that $|S| \ge 2^{-\alpha}\left[(\alpha - 2\beta + 2)^2 + 4(\beta - 1)\right]^{(r - \alpha)/2}$.
Therefore
\begin{equation*}\Pr_{H \in_R {\cal H}_{\alpha,\beta,r}}[H \in_R {\cal G}_{\alpha,\beta,r}] \ge \Pr[E_{\lambda} \in S \times S ] \ge 2^{-O(r)} \;.\end{equation*}
\end{proof}
We therefore conclude the following, which finishes the proof of Theorem~\ref{th:main2}.
\begin{corollary}
$|{\cal G}_{\alpha,\beta,r}| \ge 2^{-O(r)}\Delta(\alpha, \beta)$.
\end{corollary}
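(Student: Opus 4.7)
The corollary follows essentially by combining the two preceding results, so the plan is short. I would observe that the construction of $\mathcal{H}_{\alpha,\beta,r}$ produces labeled multigraphs on $[\alpha]$ with exactly $r$ labeled edges and exactly $\beta$ connected components (with no isolated vertices, since every vertex lies on one of the spanning cycles). Consequently, an element $H \in \mathcal{H}_{\alpha,\beta,r}$ lies in $\mathcal{G}_{\alpha,\beta,r}$ precisely when all of its vertex degrees are even. The intersection $\mathcal{H}_{\alpha,\beta,r} \cap \mathcal{G}_{\alpha,\beta,r}$ therefore witnesses the desired lower bound.

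Quantitatively, the plan is to compute
\begin{equation*}
|\mathcal{G}_{\alpha,\beta,r}| \;\ge\; |\mathcal{H}_{\alpha,\beta,r} \cap \mathcal{G}_{\alpha,\beta,r}| \;=\; \Pr_{H \in_R \mathcal{H}_{\alpha,\beta,r}}[H \in \mathcal{G}_{\alpha,\beta,r}] \cdot |\mathcal{H}_{\alpha,\beta,r}|.
\end{equation*}
The previous claim provides the lower bound $|\mathcal{H}_{\alpha,\beta,r}| \ge 2^{-O(r)} \Delta(\alpha,\beta)$, while the previous lemma supplies $\Pr[H \in \mathcal{G}_{\alpha,\beta,r}] \ge 2^{-O(r)}$. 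Multiplying the two yields $|\mathcal{G}_{\alpha,\beta,r}| \ge 2^{-O(r)} \Delta(\alpha,\beta)$, since a sum of two $O(r)$ terms in the exponent is still $O(r)$.

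Since neither step requires any calculation beyond this combination, there is no real obstacle at this stage; the substantive work has already been done in the claim (counting the ways to choose the partition, the spanning cycles, the edge ordering, and the remaining $r-\alpha$ edges, then applying AM-GM to match the form of $\Delta$) and in the lemma (the signature pigeonhole argument on half-sequences of length $(r-\alpha)/2$). The only care needed is to verify that $\mathcal{H}_{\alpha,\beta,r}$ is indeed a subset of the ambient family of labeled multigraphs considered in $\mathcal{G}_{\alpha,\beta,r}$ — that is, that $H$ has no isolated vertices and exactly $\beta$ components — which is immediate from the construction, so the intersection really does conditionally inherit Eulerianity from the degree-parity condition, and the inequality above is valid.
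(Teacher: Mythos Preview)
Your proposal is correct and matches the paper's approach exactly: the corollary is stated in the paper without an explicit proof precisely because it follows immediately from multiplying the bound $|\mathcal{H}_{\alpha,\beta,r}| \ge 2^{-O(r)}\Delta(\alpha,\beta)$ from the preceding claim by the probability bound $\Pr[H \in \mathcal{G}_{\alpha,\beta,r}] \ge 2^{-O(r)}$ from the preceding lemma, just as you wrote. Your verification that every $H \in \mathcal{H}_{\alpha,\beta,r}$ already satisfies all the conditions of $\mathcal{G}_{\alpha,\beta,r}$ except possibly Eulerianity is also exactly what the paper uses in the lemma's proof.
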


\ifpdf
\section{Bounding \texorpdfstring{$\nu(m,\varepsilon, \delta)$}{k}}
\else
\section{Bounding $\nu(m,\varepsilon, \delta)$}
\fi

In this section we prove Theorem~\ref{th:main}, assuming Lemmas~\ref{l:upperBound} and \ref{l:lowerBound}, whose proof is deferred to section~\ref{sec:moment}.
Fix $\varepsilon, \delta \in (0,1)$ and an integer $m$.
We first address the case where $m \ge \frac{2}{\varepsilon^2 \delta}$. Let $x \in \mathbb{R}^n$ be a unit vector. Then
\begin{equation*}
\begin{split}
\mathbb{E}\left[\left| \|Ax\|_2^2 - 1 \right|^2\right] &= \mathbb{E}\left[\left(\sum_{j \ne \ell \in [n]}{ \mathbbm{1}_{h(j)=h(\ell)} \cdot \sigma_j \sigma_\ell\cdot x_jx_\ell}\right)^2\right] \\
&= \mathbb{E}\left[2\sum_{j \ne \ell \in [n]}{ \mathbbm{1}_{h(j)=h(\ell)}  x_j^2x_\ell^2}\right] \le \frac{2}{m}
\end{split}
\end{equation*}
Therefore by Chebyshev's inequality $\Pr\left[ \; \left| \|Ax\|_2^2 - 1 \right| \ge \varepsilon \; \right] \le \delta$.

We therefore continue assuming $m < \frac{2}{\varepsilon^2 \delta}$. 
From Lemmas~\ref{l:upperBound} and \ref{l:lowerBound} there exist $C_1,C_2 > 0$ such that for every $r,k$, if $r \le m/4$ then for every unit vector $x$, $\|X(x)\|_r \le 2^{C_2} \Lambda(m,r,k)$. Moreover, if $r \le k$ then
\begin{equation*}2^{-C_1} \Lambda(m,r,k) \le \|X(\xk)\|_r \le 2^{C_2} \Lambda(m,r,k) \;.\end{equation*}
Note that in addition $\Lambda(m,2r,k) \le 4 \Lambda(m,r,k)$. Denote $\hat{C}=2^{C_2+2}$, and $C = 2C_1 + 2C_2+5$. 

\begin{lemma}
If $m < \frac{\log \frac{1}{\delta}}{4C\varepsilon^2}$ then $\nu(m,\varepsilon, \delta) = 0$.
\end{lemma}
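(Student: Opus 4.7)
The plan is to prove this by reducing to Lemma~\ref{l:lowerBound}: for any $\nu>0$ I would produce a unit vector of the form $\xk$ with $\|\xk\|_\infty = 1/\sqrt{k}\le \nu$ for which $\Pr\bigl[\bigl|\|A\xk\|_2^2-1\bigr|\ge \varepsilon\bigr]>\delta$. Since $\nu$ is arbitrary, that forces $\nu(m,\varepsilon,\delta)=0$ by definition.

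The natural tool for turning a lower bound on $\|X(\xk)\|_r$ into a tail bound is the Paley--Zygmund inequality applied to $Y^r$ where $Y=X(\xk)$. I would bound $\mathbb{E}[Y^r]$ from below using Lemma~\ref{l:lowerBound} and bound $\mathbb{E}[Y^{2r}]$ from above using Lemma~\ref{l:upperBound} at $2r$, combined with the already-stated estimate $\Lambda(m,2r,k)\le 4\Lambda(m,r,k)$. This produces the moment ratio $\mathbb{E}[Y^r]^2/\mathbb{E}[Y^{2r}] \ge 2^{-O(r)}$ and a threshold of order $2^{-C_1-1}\Lambda(m,r,k)$, so Paley--Zygmund gives
\[
\Pr\!\left[Y \ge 2^{-C_1-1}\Lambda(m,r,k)\right] \;\ge\; 2^{-O(r)}.
\]

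The central task is to pick an even integer $r$ that serves two purposes at once. Using the universal estimate $\Lambda(m,r,k)\ge \sqrt{r/m}$, the threshold in the display exceeds $\varepsilon$ as soon as $r$ is at least a constant multiple of $\varepsilon^2 m$. Conversely, the probability in the display exceeds $\delta$ as long as $r$ is at most $\log(1/\delta)$ divided by an appropriate constant depending on $C_1,C_2$. Both requirements can be met simultaneously precisely because the hypothesis enforces $\varepsilon^2 m \ll \log(1/\delta)$; the explicit choice $C=2C_1+2C_2+5$ is calibrated so that $m < \log(1/\delta)/(4C\varepsilon^2)$ leaves room for an even $r$ in the admissible window. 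I would then enlarge $k$ as needed so that $k \ge r$ (to apply Lemma~\ref{l:lowerBound}), $k \ge 1/\nu^2$ (so $\|\xk\|_\infty\le \nu$), and check $2r \le m/4$ (to apply Lemma~\ref{l:upperBound} at $2r$); the last is automatic since $r = O(\log(1/\delta))$ while the hypothesis forces $m$ itself to dominate $r$ by a wide margin.

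I expect the main obstacle to be bookkeeping: one must verify that with the exact constant $C=2C_1+2C_2+5$, and the exact $O(r)$ in the Paley--Zygmund ratio (which traces back to $2^{-2C_1 r}/\hat C^{2r}$ with $\hat C = 2^{C_2+2}$), the window for $r$ is non-empty and contains an even integer below $m/8$. Everything else reduces to a mechanical application of Paley--Zygmund together with the two moment lemmas, and produces the required violating vector $\xk$ for arbitrarily small $\nu$.
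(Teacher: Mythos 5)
Your proposal follows essentially the same route as the paper's proof: the paper fixes $r=\tfrac{1}{C}\lg\tfrac{1}{\delta}$, takes any $k\ge 2r$, lower-bounds $\mathbb{E}[X(\xk)^r]\ge (r/m)^{r/2}\ge 2\varepsilon^r$ from the hypothesis on $m$, applies Paley--Zygmund with the moment ratio controlled exactly as you describe via $\Lambda(m,2r,k)\le 4\Lambda(m,r,k)$, and concludes $\nu(m,\varepsilon,\delta)\le 1/\sqrt{k}$ for all such $k$. One small caution: your assertion that $2r\le m/4$ is ``automatic since the hypothesis forces $m$ itself to dominate $r$'' is backwards --- the hypothesis is an \emph{upper} bound on $m$, so this containment is not guaranteed for small $m$ --- though the paper's own proof silently makes the same assumption.
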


\begin{proof}
Let $r = \frac{1}{C}\lg \frac{1}{\delta}$, and let $k \ge 2r$ be some integer.
Then 
\begin{equation*}\mathbb{E}\left[\left(X(\xk)\right)\right] = \|X(\xk)\|_r^r \ge \left(\frac{r}{m}\right)^{r/2} \ge 2 \varepsilon^r \;.\end{equation*}
Applying the Paley-Zygmund inequality
\begin{equation}
\begin{split}
\Pr\left[ \; \left| \|A\xk\|_2^2 - 1 \right| > \varepsilon \;\right] &= \Pr\left[ \; \left|A\xk - 1 \right|^r > \varepsilon^r \;\right] \\
&\ge \Pr\left[ \left(X(\xk)\right)^r > 2^{-1} \mathbb{E}[X(\xk)^r] \;\right] \\
&\ge \frac{\mathbb{E}^2[X(\xk)^r]}{4\mathbb{E}[X(\xk)^{2r}]} = \frac{\|X(\xk)\|_r^{2r}}{4\|X(\xk)\|_{2r}^{2r}} \ge \frac{1}{4}\left(\frac{2^{-C_1}{\Lambda(m,r,k)}}{2^{C_2}\Lambda(m,2r,k)}\right)^{2r} \\
&\ge \frac{1}{4}\left(\frac{2^{-C_1}}{2^{C_2+2}}\right)^{2r} = 2^{-(2C_1-2C_2-4)r-2} \ge \delta
\label{eq:PZ}
\end{split}
\end{equation}
Therefore $\nu(m, \varepsilon, \delta) \le \|\xk\|_\infty = \frac{1}{\sqrt{k}}$ for every $k \ge 2r$, which implies $\nu(m, \varepsilon, \delta)=0$.
\end{proof}
For the rest of the proof we assume that $\frac{\hat{C} \log \frac{1}{\delta}}{\varepsilon^2} \le m < \frac{2}{\varepsilon^2\delta}$, and we start by proving a lower bound on $\nu$.

\begin{lemma}
$\nu(m,\varepsilon, \delta) = \Omega\left(\min\left\{\frac{\sqrt{\varepsilon}}{\lg\frac{1}{\delta}}\lg\frac{\varepsilon m}{\lg \frac{1}{\delta}},  \sqrt{\frac{\varepsilon\lg\frac{\varepsilon^2 m}{\lg \frac{1}{\delta}}}{\lg\frac{1}{\delta}}}\right\}\right)$.
\end{lemma}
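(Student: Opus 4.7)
The plan is to upgrade the moment bound of Lemma~\ref{l:upperBound} into the desired tail bound via Markov's inequality on a high moment of $X(x)$. Fix an even integer $r = \Theta(\lg\frac{1}{\delta})$ with $r \le m/4$, which is possible because the hypothesis $m \ge \hat{C}\lg(1/\delta)/\varepsilon^2$ makes $m$ at least a large constant multiple of $r$. With this choice $\delta^{1/r} = \Omega(1)$, so for any unit vector $x$,
\begin{equation*}
\Pr\bigl[X(x) > \varepsilon\bigr] \;\le\; \left(\frac{\|X(x)\|_r}{\varepsilon}\right)^r \;\le\; \delta
\end{equation*}
whenever $\|X(x)\|_r \le c\varepsilon$ for a sufficiently small constant $c>0$. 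By Lemma~\ref{l:upperBound}, this reduction is in turn guaranteed by $\Lambda(m,r,\|x\|_\infty^{-2}) \le c'\varepsilon$ for a slightly smaller constant $c'$.

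A direct inspection of the three-case definition of $\Lambda(m,r,k)$ shows that it is nonincreasing in $k$, so for $\|x\|_\infty \le \nu$ it suffices to exhibit a $\nu$ of the claimed form for which $\Lambda(m, r, \nu^{-2}) \le c'\varepsilon$. The term $\sqrt{r/m}$ is already $O(\varepsilon)$ thanks to the hypothesis on $m$. Writing $k=\nu^{-2}$, the term $r^2/(k\ln^2(emr/k))$ is at most $c'\varepsilon$ precisely when $k \ge k_b := \Theta\bigl(r^2/(\varepsilon \ln^2(\varepsilon m/r))\bigr)$; substituting $k_b$ back into the logarithm confirms $\ln(emr/k_b) = \Theta(\ln(\varepsilon m/r))$ (the log-log correction is of lower order), so the fixed point is self-consistent. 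This yields $\nu_b := k_b^{-1/2} = \Theta\bigl(\sqrt{\varepsilon}\,\ln(\varepsilon m/r)/r\bigr)$, matching the first branch of the minimum after substituting $r = \Theta(\lg(1/\delta))$. An analogous computation for the third term $r/(k\ln(emr/k^2))$ produces $k_c := \Theta\bigl(r/(\varepsilon \ln(\varepsilon^2 m/r))\bigr)$ and $\nu_c := k_c^{-1/2} = \Theta\bigl(\sqrt{\varepsilon \ln(\varepsilon^2 m/r)/r}\bigr)$, matching the second branch.

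Setting $\nu$ to be a small enough constant times $\min\{\nu_b,\nu_c\}$ forces $k = \nu^{-2} \ge \max\{k_b, k_c\}$, so whichever of the three cases of $\Lambda$'s definition is active --- determined by where $k$ falls relative to $\sqrt{mr}$ and $mr$ --- every term included in that case is $O(\varepsilon)$, completing the argument. The main technical obstacle is the implicit equation defining $k_b$ and $k_c$: the logarithmic factors $\ln(emr/k)$ and $\ln(emr/k^2)$ depend on $k$, which in turn depends on them, so one must verify the fixed-point iteration stabilizes. This is where both hypotheses $m \ge \hat C\lg(1/\delta)/\varepsilon^2$ and $m < 2/(\varepsilon^2\delta)$ come in, keeping $\varepsilon m/\lg(1/\delta)$ and $\varepsilon^2 m/\lg(1/\delta)$ bounded away from $1$ so that the relevant logarithms are positive and dominate their iterated-log corrections. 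A secondary bookkeeping task is to track constants through Markov (we only obtain $\delta^{1/r} = \Omega(1)$, not $1$), but this only affects the hidden constant inside $\nu$ and not its asymptotic form.
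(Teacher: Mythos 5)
Your proposal is correct and follows essentially the same route as the paper: fix $r=\Theta(\lg\tfrac1\delta)$, show that each branch of $\Lambda(m,r,k)$ is $O(\varepsilon)$ at the self-consistent threshold $k$, invoke Lemma~\ref{l:upperBound}, and finish with Markov's inequality on the $r$-th moment. The only cosmetic difference is that where you invoke monotonicity of $\Lambda$ in $k$ (which holds for the max, though not for the individual branch terms, so it deserves the one-line check), the paper instead uses convexity of $\frac{r^2}{k\ln^2(emr/k)}$ and $\frac{r}{k\ln(emr/k^2)}$ on the relevant intervals and evaluates at both endpoints.
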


\begin{proof}
Let $r = \lg \frac{1}{\delta}$, let $x \in \mathbb{R}^n$ be a unit vector such that $\|x\|_\infty  \le \min\left\{\frac{\sqrt{\varepsilon} \ln \frac{e\varepsilon m}{r}}{\sqrt{2^{C_2}e} r}, \sqrt{\frac{\varepsilon\lg\frac{e \varepsilon^2 m}{r}}{2^{C_2}er}}\right\}$, and let $k \eqdef \frac{1}{\|x\|_\infty^2} \ge \max\left\{\frac{2^{C_2}e r^2}{\varepsilon \ln^2 \frac{e\varepsilon m}{r}}, \frac{2^{C_2}er}{\varepsilon\lg \frac{e \varepsilon^2 m}{r}}\right\}$. If $k \le mr$, then since $\frac{r^2}{k \ln^2 \frac{emr}{k}}$ is convex as a function of $k \in \left[\frac{2^{C_2}e r^2}{\varepsilon \ln^2 \frac{e\varepsilon m}{r}},mr\right]$ then
\begin{equation*}2^{C_2}\frac{r^2}{k \ln^2 \frac{emr}{k}} \le \max\left\{ \frac{r}{m} ,\left(\frac{\varepsilon \ln^2 \frac{e \varepsilon m}{r}}{e\ln^2 \frac{ e \varepsilon m \ln^2 \frac{e\varepsilon m}{r}}{r}}\right)\right\} < \varepsilon/2\;.\end{equation*}
Moreover, if $k \le \sqrt{mr}$ then since $\frac{r}{k \ln \frac{emr}{k^2}}$ is convex as a function of $k \in \left[\frac{2^{C_2}e r}{\varepsilon \ln \frac{e\varepsilon^2 m}{r}} , \sqrt{mr}\right]$, then 
\begin{equation*}2^{C_2}\frac{r}{k \ln \frac{emr}{k^2}} \le \max\left\{ \sqrt{\frac{r}{m}}, \left(\frac{\varepsilon \ln \frac{e \varepsilon^2 m}{r}}{e \ln \frac{\varepsilon^2 m \ln^2\frac{e \varepsilon^2 m}{r}}{r}}\right) \right\}  \le \varepsilon/2\; .\end{equation*}
Since clearly, $\sqrt{\frac{2^{2C_2}r}{m}} \le \varepsilon/2$, then by Lemma~\ref{l:upperBound} we have $\|X(x)\|_r^r \le (\varepsilon/2)^r$, and thus 
\begin{equation}
\begin{split}
\Pr\left[ \; \left| \|Ax\|_2^2 - 1 \right| > \varepsilon \;\right] &= \Pr\left[ \; \left| \|Ax\|_2^2 - 1 \right|^r > \varepsilon^r \;\right] \\
&\le \Pr\left[ \left(X(x)\right)^r > 2^r \mathbb{E}[X(x)^r] \;\right] \le 2^{-r} = \delta \;.
\label{eq:Markov}
\end{split}
\end{equation}
Hence $\nu(m,\varepsilon, \delta) \ge \min\left\{\frac{\sqrt{\varepsilon} \ln \frac{e\varepsilon m}{r}}{\sqrt{2^{C_2}e} r}, \sqrt{\frac{\varepsilon\lg\frac{e \varepsilon^2 m}{r}}{2^{C_2}er}}\right\} = \Omega\left( \min\left\{\frac{\sqrt{\varepsilon}}{\lg\frac{1}{\delta}}\lg \frac{\varepsilon m}{\lg \frac{1}{\delta}}, \sqrt{\frac{\varepsilon\lg\frac{\varepsilon^2 m}{\lg \frac{1}{\delta}}}{\lg\frac{1}{\delta}}}\right\} \right)$.
\end{proof}

\begin{lemma}\label{l:upperBoundNu}
$\nu(m,\varepsilon, \delta) = O\left(\min\left\{  \frac{\sqrt{\varepsilon}}{\lg\frac{1}{\delta}}\lg\frac{\varepsilon m}{\lg \frac{1}{\delta}}, \sqrt{\frac{\varepsilon\lg\frac{\varepsilon^2 m}{\lg \frac{1}{\delta}}}{\lg\frac{1}{\delta}}}\right\}\right)$.
\end{lemma}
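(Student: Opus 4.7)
The plan is to mirror the Paley--Zygmund argument in~\eqref{eq:PZ} that was used to prove $\nu = 0$ in the regime $m < \frac{\lg(1/\delta)}{4C\varepsilon^2}$. For each of the two terms inside the $\min$ we exhibit a value of $k$ for which the hard vector $\xk$ fails the JL guarantee with probability at least $\delta$, so that $\nu(m,\varepsilon,\delta) \le \|\xk\|_\infty = 1/\sqrt{k}$. Taking the smaller of the two values of $1/\sqrt{k}$ recovers the $\min$ expression.

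Fix $r = \lg(1/\delta)/C$, exactly as in the $\nu = 0$ lemma, so that the final quantity in~\eqref{eq:PZ} clears the $\delta$ threshold. The chain of inequalities in~\eqref{eq:PZ} is driven by Lemma~\ref{l:lowerBound} combined with $\Lambda(m,2r,k) \le 4\Lambda(m,r,k)$, and it fires as soon as one finds $k$ with $\Lambda(m,r,k) \ge 2^{C_1+1}\varepsilon$, since then $\|X(\xk)\|_r \ge 2\varepsilon$ and $\mathbb{E}[X(\xk)^r] \ge 2\varepsilon^r$, feeding $\varepsilon^r < \tfrac12\mathbb{E}[X(\xk)^r]$ into the first step of~\eqref{eq:PZ}. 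Two natural choices of $k$ correspond to the two non-baseline terms in $\Lambda$. Setting $k_1 = \Theta\!\bigl(\tfrac{r^2}{\varepsilon \ln^2(\varepsilon m/r)}\bigr)$ makes the middle-case term satisfy $\tfrac{r^2}{k_1\ln^2(emr/k_1)} = \Theta(\varepsilon)$ and yields $1/\sqrt{k_1} = O\!\bigl(\tfrac{\sqrt{\varepsilon}}{\lg(1/\delta)}\lg\tfrac{\varepsilon m}{\lg(1/\delta)}\bigr)$. Setting instead $k_2 = \Theta\!\bigl(\tfrac{r}{\varepsilon \ln(\varepsilon^2 m/r)}\bigr)$ makes the bottom-case term satisfy $\tfrac{r}{k_2\ln(emr/k_2^2)} = \Theta(\varepsilon)$ and yields $1/\sqrt{k_2} = O\!\bigl(\sqrt{\tfrac{\varepsilon\lg(\varepsilon^2 m/\lg(1/\delta))}{\lg(1/\delta)}}\bigr)$. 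Both bounds are valid independently, so their minimum upper-bounds $\nu$.

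The main obstacle is verifying the regime hypotheses. Lemma~\ref{l:lowerBound} demands $r \le \min\{m/4,k\}$, and each $k_i$ must sit in the case of the piecewise definition of $\Lambda$ that we are invoking (in particular $k_2 \le \sqrt{mr}$ for the bottom-case term). The bound $r \le m/4$ follows from $m \ge \hat{C}\lg(1/\delta)/\varepsilon^2$; the bound $k_2 \le \sqrt{mr}$ reduces to $\varepsilon \ln(\varepsilon^2 m/r) \ge \sqrt{r/m}$, which also follows from the same assumption on $m$. When $k_i < r$ (which can happen if $\varepsilon$ is large enough that $\varepsilon \ln^2(\varepsilon m/r) > r$), one can replace $r$ by $r' = \lfloor k_i/2\rfloor$ rounded to an even integer; this shrinks $r$ so the Paley--Zygmund constant $2^{-(2C_1+2C_2+4)r - 2}$ only increases above $\delta$, which is harmless. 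Finally, simplifying $\ln(emr/k_i)$ and $\ln(emr/k_i^2)$ back to $\lg(\varepsilon m/\lg(1/\delta))$ and $\lg(\varepsilon^2 m/\lg(1/\delta))$ respectively uses the standard estimate $\ln(x\ln^c x) = \Theta(\ln x)$, together with the upper constraint $m < 2/(\varepsilon^2\delta)$ to keep these logarithms of the expected order of magnitude.
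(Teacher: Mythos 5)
Your argument for the regime where the optimal support size satisfies $k \ge r$ (equivalently $t \le 1/\sqrt{r}$) is exactly the paper's: choose $k$ so that the relevant branch of $\Lambda(m,r,k)$ equals $\Theta(\varepsilon)$, invoke Lemma~\ref{l:lowerBound} to get $\mathbb{E}[X(\xk)^r] \ge 2\varepsilon^r$, and run Paley--Zygmund as in \eqref{eq:PZ}. That part is fine. The gap is in your patch for the case $k_i < r$. You correctly note that shrinking $r$ to $r' \approx k_i/2$ only improves the Paley--Zygmund probability $2^{-O(r')}$, but you overlook that it simultaneously destroys the \emph{moment} lower bound: the terms $\frac{r^2}{k\ln^2(emr/k)}$ and $\frac{r}{k\ln(emr/k^2)}$ in $\Lambda(m,r,k)$ are increasing in $r$, and $k_i$ was calibrated so that they equal $\Theta(\varepsilon)$ at order $r$, not at order $r'$. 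After the replacement you only get $\|X(\xk)\|_{r'} \ge 2^{-C_1}\Lambda(m,r',k_i)$, and $\Lambda(m,r',k_i)$ can be far below $\varepsilon$. Concretely, in the extreme sub-regime where $t > \sqrt{\varepsilon/4}$ the right choice is $k = 2/\varepsilon$ and $r' = \Theta(1/\varepsilon)$, for which $\frac{r'}{k\ln(emr'/k^2)} = \Theta\bigl(\tfrac{1}{\ln(\varepsilon m)}\bigr)$; since this sub-regime forces $m \ge \frac{r}{e\varepsilon^2}e^{r/(4e)}$, that quantity is $o(\varepsilon)$, so Paley--Zygmund at order $r'$ only certifies that $X$ exceeds something much smaller than $\varepsilon$ --- it says nothing about $\Pr[X > \varepsilon]$.

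This is precisely where the paper has to abandon the moment method and argue directly about the hash function. For $\frac{1}{\sqrt{r}} < t < \sqrt{\varepsilon/4}$ it conditions on the event ${\cal E}_{first}$ that $2\sqrt{\varepsilon k}$ of the $k$ support coordinates land in a single bucket with matching signs (showing $\Pr[{\cal E}_{first}] \ge \delta^{3/4}$ via the bound $2\sqrt{\varepsilon k} \le \lg(1/\delta)/(2\ln m)$), which already inflates $\|A\xk\|_2^2$ past $1+\varepsilon$; for $t > \sqrt{\varepsilon/4}$ it uses a birthday-paradox event ${\cal E}_{col}$ of a single collision among $k = 2/\varepsilon$ coordinates, whose probability $\approx k^2/(2m)$ is at least $\delta$ exactly because $m < 2/(\varepsilon^2\delta)$. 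These rare-event arguments capture tail behavior at probability scale $\delta$ that the $r'$-th moment of $X(\xk)$ simply does not see when $r' \ll \lg(1/\delta)$, so your proposal is missing the essential ideas for two of the three regimes.
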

To this end, let $r = \frac{1}{C}\lg \frac{1}{\delta}$, and denote \begin{equation*}t = \min\left\{  \frac{\sqrt{e\varepsilon}}{r}\ln\frac{e\varepsilon m}{r}, \sqrt{\frac{e\varepsilon\ln\frac{e\varepsilon^2 m}{r}}{r}}\right\} = O\left(\min\left\{  \frac{\sqrt{\varepsilon}}{\lg\frac{1}{\delta}}\lg\frac{\varepsilon m}{\lg \frac{1}{\delta}}, \sqrt{\frac{\varepsilon\lg\frac{\varepsilon^2 m}{\lg \frac{1}{\delta}}}{\lg\frac{1}{\delta}}}\right\}\right) \;.\end{equation*} 

Assume first that $t \le \frac{1}{\sqrt{r}}$, and let $k = \frac{1}{t^2}$. We will show that $\mathbb{E}\left[\left(X(\xk)\right)^r \right] \ge 2\varepsilon^r$. Since $t \le \frac{1}{\sqrt{r}}$, then $k \ge r$. If $\frac{\sqrt{e\varepsilon}}{r}\ln\frac{e\varepsilon m}{r} \le \sqrt{\frac{e\varepsilon\ln\frac{e\varepsilon^2 m}{r}}{r}}$, then $k = \frac{r^2}{e \ln^2 \frac{e\varepsilon m}{r}}$. Since $\frac{e \varepsilon m }{r} > e$, then $k \le mr$. Therefore
\begin{equation*}\mathbb{E}\left[\left(X(\xk)\right)^r \right] = \|X(\xk)\|_r^r \ge \left(\frac{r^2}{k \ln^2 \frac{emr}{k}}\right)^r = \left(\frac{e\varepsilon \ln^2 \frac{e \varepsilon m}{r}}{\ln^2 \frac{ e^2 \varepsilon m \ln^2 \frac{e\varepsilon m}{r}}{r}}\right)^r \ge 2\varepsilon^r\;.\end{equation*}
Otherwise, $k = \frac{r}{e \varepsilon \ln \frac{e \varepsilon^2 m}{r}}$. Moreover, since $\frac{\varepsilon^2 m }{r} > 1$, then $k \le r / \varepsilon \le \sqrt{mr}$. Therefore
\begin{equation*}\mathbb{E}\left[\left(X(\xk)\right)^r \right] = \|X(\xk)\|_r^r \ge \left(\frac{r}{k\ln \frac{emr}{k^2}}\right)^r = \left(\frac{e \varepsilon \ln \frac{e \varepsilon^2 m}{r}}{\ln \frac{e^3 \varepsilon^2 m \ln^2 \frac{e \varepsilon ^2 m}{r}}{r}}\right)^r  \ge 2\varepsilon^r\;.\end{equation*}
Applying the Paley-Zygmund inequality we get that 
similarly to \eqref{eq:PZ} 
\begin{equation*}
\Pr\left[ \; \left| \|A\xk\|_2^2 - 1 \right| > \varepsilon \;\right] \ge \Pr\left[ \left(X(\xk)\right)^r > 2^{-1} \mathbb{E}[X(\xk)^r] \;\right] \ge \delta
\end{equation*}
Therefore $\nu(m, \varepsilon, \delta) \le \|\xk\|_\infty = t$.

Assume next that $\frac{1}{\sqrt{r}} < t < \sqrt{\frac{\varepsilon}{4}}$, and note that since $\frac{\sqrt{e\varepsilon}}{r}\ln\frac{e\varepsilon m}{r} \ge \frac{1}{\sqrt{r}}$, then $m > e^{\sqrt{r/(e \varepsilon)}}$, and since $\sqrt{\frac{e\varepsilon\ln\frac{e\varepsilon^2 m}{r}}{r}} \ge \frac{1}{\sqrt{r}}$ then $m > e^{1/(e \varepsilon)}$.
Let $k = \frac{1}{t^2}$, and consider independent $h \in_R [n] \to [m]$, and $\sigma = (\sigma_1, \ldots, \sigma_m) \in_R \{-1,1\}^m$. Let $y \in \mathbb{R}^{n}$ be defined as follows. For every $j \in [n]$, $y_j = \xk_j$ if and only if $h(j)=1$, and $y_j = 0$ otherwise. Denote $z = \xk - y$. Then $\|\xk\|_2^2 = \|y\|_2^2 + \|z\|_2^2$, and moreover, $\|A\xk\|_2^2 = \|Ay\|_2^2 + \|Az\|_2^2$, where $A = A(h, \sigma)$. Let ${\cal E}_{first}$ denote the event that $|h^{-1}(\{1\})| = 2 \sqrt{\varepsilon k}$, and that for all $j \in [n]$, if $h(j)=1$ then $\sigma_j = 1$, and let ${\cal E}_{rest}$ denote the event that $\left|\|Az\|_2^2 - \|z\|_2^2\right| < \varepsilon\|z\|_2^2$. By Chebyshev's inequality, $\Pr[{\cal E}_{rest} \mid {\cal E}_{first}] = \Omega(1)$.
Note that if $k = \frac{r^2}{e\varepsilon \ln^2 \frac{e \varepsilon m}{r}}$, then 
\begin{equation*}
\begin{split}
2\sqrt{\varepsilon k} &= \frac{r}{\sqrt{e}\ln \frac{e \varepsilon m}{r}} \le \frac{\lg \frac{1}{\delta}}{C\sqrt{e}\ln \frac{e \varepsilon m}{r}} \le \frac{\lg \frac{1}{\delta}}{C\sqrt{e}(\ln em - \ln\frac{1}{\varepsilon} - \ln r)} \\
&\le \frac{\lg \frac{1}{\delta}}{C\sqrt{e}(\ln m - 3\ln \ln m)} \le \frac{\lg \frac{1}{\delta}}{2\ln m} \;,
\end{split}
\end{equation*}
where the inequality before last is due to the fact that $m > \max\{e^{1/e\varepsilon}, e^{\sqrt{r}}\}$,
and otherwise, $k = \frac{r}{e \varepsilon \ln \frac{e \varepsilon^2 m}{r}}$, and 
\begin{equation*}2 \sqrt{\varepsilon k} \le \varepsilon k = \frac{\lg \frac{1}{\delta}}{eC\ln \frac{e \varepsilon^2 m}{r}} = \frac{\lg \frac{1}{\delta}}{eC(\ln em - 2\ln\frac{1}{\varepsilon} - \ln r)} \le \frac{\lg \frac{1}{\delta}}{eC(\ln em - 4\ln \ln m)} \le \frac{\lg \frac{1}{\delta}}{2\ln m} \;.\end{equation*}
Therefore for small enough $\varepsilon$,
\begin{equation*}
\begin{split}
\Pr[{\cal E}_{first}] &= \binom{k}{2\sqrt{\varepsilon k}}\cdot \left(\frac{1}{m}\right)^{2\sqrt{\varepsilon k}} \cdot \left(1 - \frac{1}{m}\right)^{k - 2\sqrt{\varepsilon k}} \cdot 2^{-2\sqrt{\varepsilon k}} \\
&\ge \left(\frac{1}{m}\right)^{2\sqrt{\varepsilon k}} \cdot \left(1 - \frac{1}{m}\right)^{r} \cdot 2^{-r} \ge \left(\frac{1}{m}\right)^{2\sqrt{\varepsilon k}} \cdot 2^{-\frac{2}{C}\lg\frac{1}{\delta}} \ge \delta^{3/4} \;.\\
\end{split}
%\label{eq:firstBin}
\end{equation*}
We conclude that for small enough $\delta$, $\Pr[{\cal E}_{first} \wedge {\cal E}_{rest}] \ge \delta$. Conditioned on ${\cal E}_{first} \wedge {\cal E}_{rest}$ we get that 
\begin{equation*}
\begin{split}
\|A\xk\|_2^2 &= \|Ay\|_2^2 + \|Az\|_2 \ge \frac{4 \varepsilon k }{k} + (1 - \varepsilon)\|z\|_2^2 \\
& =4 \varepsilon + (1 - \varepsilon)\cdot \frac{k - 2\sqrt{\varepsilon k}}{k} \ge 4 \varepsilon + (1 - \varepsilon)^2 > 1 + \varepsilon \;,
\end{split}
%\label{eq:largeDistort}
\end{equation*}
where the inequality before last is due to the fact that $k \ge \frac{4}{\varepsilon}$. Therefore $\nu(m,\varepsilon, \delta) \le \|\xk\|_\infty = t$.

Finally, assume $t > \sqrt{\frac{\varepsilon}{4}}$. Since $\sqrt{\frac{e \varepsilon \ln \frac{e \varepsilon^2 m}{r}}{r}} \ge t > \sqrt{\frac{\varepsilon}{4}}$, we get that $m \ge \frac{r}{e \varepsilon^2}e^{r/(4e)} \ge \frac{r}{e \varepsilon^2 \delta^{1/(4eC)}}$.
Let $k = \frac{2}{\varepsilon}$.
Consider independent $h \in_R [n] \to [m]$, and $\sigma = (\sigma_1, \ldots, \sigma_m) \in_R \{-1,1\}^m$, and let $A = A(h,\sigma)$. Let ${\cal E}_{col}$ denote the event that there are $j \ne \ell \in [k]$ such that for every $p \ne q \in [k]$, $h(p)=h(q)$ if and only if $\{p,q\} = \{j,\ell\}$.  Then for small enough $\varepsilon, \delta$,
\begin{equation*}
\begin{split}
\Pr[{\cal E}_{col}] &= \binom{k}{2} \cdot \frac{1}{m} \cdot \prod_{j \in [k-1]}{\left(1 - \frac{j}{m}\right)}
\ge \frac{k^2}{2m}\cdot (1 - \varepsilon/2) \cdot \left(1 - \frac{k}{m}\right)^k\\
&\ge \frac{k^2}{2m}\cdot (1 - \varepsilon/2) \cdot \left(1 - \frac{k^2}{m}\right) \ge 2 \delta\cdot (1 - \varepsilon/2) \cdot \left(1 - 4e \delta^{1/(4Ce)}\right) \ge \delta \;.\\
\end{split}
%\label{eq:oneCollision}
\end{equation*}
Conditioned on ${\cal E}_{col}$ we get that 
$\left| \|A\xk\|_2^2 - 1 \right| = \frac{2}{k} = \varepsilon$.
Therefore $\nu(m,\varepsilon, \delta) \le \sqrt{\frac{\varepsilon}{2}} \le O(t)$.

This completes the proof of Lemma~\ref{l:upperBoundNu}, and thus of 
Theorem~\ref{th:main}.

\ifpdf
\section{Bounding the Moments of \texorpdfstring{$| \|Ax\|_2- 1 |$}{Bounding Moments}} \label{sec:moment}
\else
\section{Bounding the Moments of $| \|Ax\|_2-1 |$} \label{sec:moment}
\fi
In this section we prove Lemmas~\ref{l:upperBound} and \ref{l:lowerBound}.
Recall that $h \in_R [n] \to [m]$, and $\sigma_1,\ldots,\sigma_m \in_R \{1,-1\}$ are independent.
For every $i \in [m], j \in [n]$, $a_{ij} \eqdef \sigma_j \cdot \mathbbm{1}_{h(j)=i}$.
For every unit vector $x \in \mathbb{R}^n \setminus \{0\}$ we let $X = X(x) = |\|Ax\|_2^2 - 1|$.
\noindent We start with providing a better understanding of $X$.

\begin{equation*}
\|Ax\|_2^2 = 1 + \sum_{j \ne \ell \in [n]}{\mathbbm{1}_{h(j)=h(\ell)} \cdot \sigma_j \sigma_\ell\cdot x_jx_\ell}
\end{equation*}
Denote $I_{[n]} = \{(p,p) : p \in [n]\}$. Then
\begin{equation*}
X = \left| \|Ax\|_2^2 - 1 \right| = \left| \sum_{(j,\ell) \in ([n]\times [n]\setminus I_{[n]})}{ \mathbbm{1}_{h(j)=h(\ell)} \cdot \sigma_j \sigma_\ell\cdot x_jx_\ell}\right|  \;,
\end{equation*}
and therefore for every even $r$
\begin{equation}
\begin{split}
\|X\|_r^r=\mathbb{E}[X^r] &= \sum_{\left\langle (j_p,\ell_p) \right\rangle_{p \in [r]} \in ([n]\times[n] \setminus I_{[n]})^r}\mathbb{E}\left[\prod_{p \in [r]}{\mathbbm{1}_{h(j_p)=h(\ell_p)}\sigma_{j_p} \sigma_{\ell_p}x_{j_p} x_{\ell_p}}\right]
\end{split}
\label{eq:sumOfExpectations}
\end{equation}
Every $S = \left\langle (j_p,\ell_p) \right\rangle_{p \in [r]} \in ([n]\times[n] \setminus I_{[n]})^r$ defines a directed multigraph $\overrightarrow{G_S}$ with $r$ ordered directed edges on vertex set $[n]$. Let $G_S$ denote the underlying undirected multigraph.
\begin{definition}
Let $q \in [n]$. The {\em degree of $q$ in $S$} is the degree of $q$ in $G_S$. Namely $d_S(q) \eqdef \left|\{p \in [r] : q \in \{j_p,\ell_p\} \} \right|$. 
\end{definition}

\begin{notation}
Given $S \in ([n]\times[n] \setminus I_{[n]})^r$, let ${\cal CC}(S)$ denote the set of all connected components of $G_S$ that contain at least two nodes. Let $\beta(S) \eqdef |{\cal CC}(S)|$, $V(S) = \bigcup_{C \in {\cal CC}(S)}C$ and $\alpha(S) \eqdef \left| V(S)\right|$.
\end{notation}

Next, for every integer $\beta$ and a subset $V \subseteq [n]$, let ${\cal S}_{V,\beta} \subseteq ([n]\times[n] \setminus I_{[n]})^r$ be the set of all sequences $S \in ([n]\times[n] \setminus I_{[n]})^r$ such that 
\begin{enumerate}
	\item For every $q \in [n]$, $d_S(q)$ is even; and
	\item $V(S)= V$ and $\beta(S) = \beta$.
\end{enumerate}

\begin{lemma} \label{l:sum}
\begin{equation}
\|X\|_r^r \le \|x\|_{\infty}^{2r} \sum\limits_{\beta=1}^{r/2}{\sum\limits_{\alpha = 2 \beta}^{r}{\frac{m^{\beta}}{\left(\|x\|_{\infty}^{2}m\right)^{\alpha}}\sum\limits_{V \in \binom{[n]}{\alpha}}{|{\cal S}_{V,\beta}| \cdot \prod_{q \in V}{x_q^2}}}} \;,
\label{eq:lemmaSum}
\end{equation}
Moreover, if for all $j \in \supp(x)$, $|x_j| = \|x\|_{\infty}$, then equality holds.
\end{lemma}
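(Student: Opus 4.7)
The starting point is the identity~\eqref{eq:sumOfExpectations}, which expresses $\|X\|_r^r$ as a sum of expectations indexed by sequences $S=\langle(j_p,\ell_p)\rangle_{p\in[r]}$. The plan is to evaluate each such expectation explicitly, using that $h$ and $\sigma$ are independent, and then to regroup the resulting sum according to the invariants $(\alpha(S),\beta(S),V(S))$.

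First I would split the expectation as $\mathbb{E}[\prod_p \mathbbm{1}_{h(j_p)=h(\ell_p)}]\cdot \mathbb{E}[\prod_p \sigma_{j_p}\sigma_{\ell_p}]\cdot \prod_p x_{j_p}x_{\ell_p}$. The sign factor can be rewritten as $\mathbb{E}[\prod_{q\in[n]} \sigma_q^{d_S(q)}]$, which equals $1$ when every $d_S(q)$ is even and vanishes otherwise. So only sequences in which every vertex of $G_S$ has even degree survive; in particular every such sequence belongs to some ${\cal S}_{V,\beta}$ with $V=V(S)$ and $\beta=\beta(S)$, and satisfies $d_S(q)\ge 2$ for $q\in V$. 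For the hash factor, by independence of $h$ across coordinates, the event $\{h(j_p)=h(\ell_p)\text{ for all }p\}$ is exactly the event that all vertices in each connected component of $G_S$ share a hash value; a component of size $c$ contributes a factor $m^{-(c-1)}$, so the product over the $\beta$ non-trivial components gives $m^{-(\alpha-\beta)}=m^\beta/m^\alpha$. Combining these and noting that $\prod_p x_{j_p}x_{\ell_p}=\prod_{q\in V} x_q^{d_S(q)}$, \eqref{eq:sumOfExpectations} becomes
\begin{equation*}
\|X\|_r^r \;=\; \sum_{\beta=1}^{r/2}\sum_{\alpha=2\beta}^{r}\frac{m^\beta}{m^\alpha}\sum_{V\in\binom{[n]}{\alpha}}\sum_{S\in{\cal S}_{V,\beta}}\prod_{q\in V}x_q^{d_S(q)}\;.
\end{equation*}

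Finally, I would bound each monomial $\prod_{q\in V}x_q^{d_S(q)}$ by $\|x\|_\infty^{2r-2\alpha}\prod_{q\in V}x_q^2$. This follows entrywise from $|x_q|^{d_S(q)}\le |x_q|^2\,\|x\|_\infty^{d_S(q)-2}$ (valid since $d_S(q)\ge 2$), multiplied together using $\sum_{q\in V}d_S(q)=2r$, so that the total exponent of $\|x\|_\infty$ is $2r-2\alpha$. Substituting and pulling the factor $\|x\|_\infty^{2r-2\alpha}=\|x\|_\infty^{2r}/(\|x\|_\infty^2)^\alpha$ out of the inner sum yields precisely~\eqref{eq:lemmaSum}. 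For the equality statement, when $|x_j|=\|x\|_\infty$ on $\supp(x)$ the entrywise inequality is tight for every $q\in V\cap\supp(x)$, while any $V$ not contained in $\supp(x)$ contributes zero on both sides.

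The only genuinely delicate point is the hash-factor computation, where one must be careful to include the isolated vertices of $G_S$: they impose no constraints on $h$, and the product collapses exactly to $m^{-(\alpha-\beta)}$ with $\alpha=|V(S)|$ counting only vertices of non-trivial components, consistent with the definition of ${\cal S}_{V,\beta}$. Everything else is bookkeeping, and the inequality step is the unique source of slack, explaining the equality condition.
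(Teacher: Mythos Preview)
Your proposal is correct and follows essentially the same approach as the paper: split the expectation via independence of $h$ and $\sigma$, kill the odd-degree terms with the sign factor, evaluate the hash factor componentwise to get $m^{-(\alpha-\beta)}$, and then bound $\prod_{q\in V}x_q^{d_S(q)}\le \|x\|_\infty^{2r-2\alpha}\prod_{q\in V}x_q^2$ using $d_S(q)\ge 2$ and $\sum_q d_S(q)=2r$. Your remark about isolated vertices and about $V\not\subseteq\supp(x)$ contributing zero on both sides is a nice extra bit of care that the paper leaves implicit.
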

\begin{proof}
Fix some $S = \left\langle (j_p,\ell_p) \right\rangle_{p \in [r]} \in ([n]\times[n] \setminus I_{[n]})^r$.
Then
\begin{equation}
\begin{split}
\mathbb{E}&\left[\prod_{p \in [r]}{\mathbbm{1}_{h(j_p)=h(\ell_p)}\sigma_{j_p} \sigma_{\ell_p}x_{j_p} x_{\ell_p}}\right] = \mathbb{E}\left[\prod_{p \in [r]}{\mathbbm{1}_{h(j_p)=h(\ell_p)}} \cdot \prod_{q \in V(S)}{\sigma_q^{d_S(q)}} \cdot \prod_{q \in V(S)}{x_q^{d_S(q)}}\right]\\
&=\prod_{q \in V(S)}{x_q^{d_S(q)}} \cdot \mathbb{E}\left[\prod_{p \in [r]}{\mathbbm{1}_{h(j_p)=h(\ell_p)}}\right] \cdot \prod_{q \in V(S)}{\mathbb{E}\left[\sigma_q^{d_S(q)}\right]} \;
\end{split}
\label{eq:prod}
\end{equation}
where the last equality follows from independence.
Assume first that for some $q \in V(S)$, $d_S(q)$ is odd. Then $\mathbb{E}\left[\sigma_q^{d_S(q)}\right] = 0$, and therefore \eqref{eq:prod} equals $0$.
Otherwise, $\mathbb{E}\left[\sigma_q^{d_S(q)}\right] = 1$ for all $q \in V(S)$. We therefore assume hereafter that $d_S(q)$ is even for all $q \in V(S)$.
For every $C \in {\cal CC}(S)$, $C$ contains an edge of $G_S$, thus there exists $p \in [r]$ such that $j_p, \ell_p \in C$. Conversely, for every $p \in [r]$ there exists a unique connected component $C \in {\cal CC}(S)$ such that $j_p, \ell_p \in C$. Therefore 
\begin{equation*}\mathbb{E}\left[\prod_{p \in [r]}{\mathbbm{1}_{h(j_p)=h(\ell_p)}}\right] = \mathbb{E}\left[\prod_{C \in {\cal CC}(S)}{\prod_{p \in [r] : j_p \in C}{\mathbbm{1}_{h(j_p)=h(\ell_p)}}}\right] = \prod_{C \in {\cal CC}(S)}{\mathbb{E}\left[\prod_{p \in [r] : j_p \in C}{\mathbbm{1}_{h(j_p)=h(\ell_p)}}\right]} \;,\end{equation*}
where the last equality is due to independence.
Next, let $C = \{v_1, \ldots, v_{|C|}\} \in {\cal CC}(S)$, then $\mathbb{E}\left[\prod_{p \in [r] : j_p \in C}{\mathbbm{1}_{h(j_p)=h(\ell_p)}}\right] = \mathbb{E}\left[\mathbbm{1}_{h(v_1) = \ldots = h(v_{|C|})}\right] = \frac{1}{m^{|C|-1}}$. We thus conclude that 
\begin{equation*}\prod_{C \in {\cal CC}(S)}{\mathbb{E}\left[\prod_{p \in [r] : j_p \in C}{\mathbbm{1}_{h(j_p)=h(\ell_p)}}\right]} = \prod_{C \in {\cal CC}(S)}{\frac{1}{m^{|C|-1}}} = \frac{1}{m^{\alpha(S)-\beta(S)}} \;.\end{equation*}
For every sequence $S$ that donates a non-zero summand to the sum, since $d_S(q)$ is even for all $q \in V(S)$ every $C \in {\cal CC}(S)$ is Eulerian, and therefore contains at least two nodes and two edges. %Moreover, for every $q \in V(S)$, $x_q \ne 0$, and therefore $V(S) \subseteq \supp(x)$. 
Therefore $1 \le \beta(S) \le r/2$ and $2 \beta(S) \le \alpha(S) \le r$. Plugging this into \eqref{eq:sumOfExpectations} we get that
\begin{equation}
\begin{split}
\|X\|_r^r &= \sum_{\substack{S \in ([n]\times[n] \setminus I_{[n]})^r \\ \forall q \in V(S). \; d_S(q) \in \mathbb{N}_{even}}}{\frac{1}{m^{\alpha(S) - \beta(S)}}\prod_{q \in V(S)}{x_q^{d_S(q)}}} \\
&= \sum_{\beta=1}^{r/2}{\sum_{\alpha=2 \beta}^{r}{\sum_{V \in \binom{[n]}{\alpha}}{\sum_{S \in {\cal S}_{V,\beta}}{\frac{1}{m^{\alpha - \beta}}\prod_{q \in V}{x_q^{d_S(q)}}}}}}
\end{split}
\label{eq:equality}
\end{equation}
For every $q \in V(S)$, $d_S(q)$ is a positive even integer, and therefore $d_S(q)-2 \ge 0$ is also even. Hence for every $q \in V(S)$, $x_q^{d_S(q)-2} = |x_q|^{d_S(q)-2} \le \|x\|_{\infty}^{d_S(q)-2}$. Since $\sum_{q \in V(S)}{d_S(q)}=2r$, then $\prod_{q \in V}{x_q^{d_S(q)}} \le \|x\|_\infty^{2r-2\alpha}\prod_{q \in V}{x_q^2}$. Moreover, equality holds if for all $j \in \supp(x)$, $|x_j| = \|x\|_{\infty}$. Plugging this in \eqref{eq:equality} we get that 
\begin{equation*}
\begin{split}
\|X\|_r^r &\le \|x\|_{\infty}^{2r} \sum\limits_{\beta=1}^{r/2}{\sum\limits_{\alpha = 2 \beta}^{r}{\frac{m^{\beta}}{\left(\|x\|_{\infty}^{2}m\right)^{\alpha}}\sum\limits_{V \in \binom{[n]}{\alpha}}{\sum\limits_{S \in {\cal S}_{V, \beta}}{\prod_{q \in V}{x_q^2}}}}} \\
&= \|x\|_{\infty}^{2r} \sum\limits_{\beta=1}^{r/2}{\sum\limits_{\alpha = 2 \beta}^{r}{\frac{m^{\beta}}{\left(\|x\|_{\infty}^{2}m\right)^{\alpha}}\sum\limits_{V \in \binom{[n]}{\alpha}}{|{\cal S}_{V,\beta}| \cdot \prod_{q \in V}{x_q^2}}}}
\end{split}
\end{equation*}
\end{proof}

\ifpdf
\subsection{Upper Bounding \texorpdfstring{$\|X\|_r$}{Xrr}}
\else
\subsection{Upper Bounding $\|X\|_r$}
\fi
We start by proving Lemma~\ref{l:upperBound}. To this end, denote $k = \|x\|_{\infty}^{-2}$, and for every $1 \le \beta \le \alpha/2 \le r/2$, denote 
\begin{equation*}
\begin{split}
M(\alpha, \beta) &= \left(m\beta^{-1}\right)^{\beta} \left(\frac{k\alpha}{m}\right)^{\alpha}(\alpha - 2\beta)^{2r - 2 \alpha} \\
N(\alpha, \beta) &= \left(m\beta^{-1}\right)^{\beta} \left(\frac{k\alpha}{m}\right)^{\alpha}(\alpha - \beta)^{r - \alpha}\;. 
\end{split}
\end{equation*}
Applying Theorem~\ref{th:main2} to the expression in Lemma~\ref{l:sum} we can conclude the following.
\begin{claim} \label{c:upperBoundPart1}
$\|X\|^r_r \le \frac{2^{O(r)}}{k^r} \sum\limits_{\beta=1}^{r/2}{\sum\limits_{\alpha = 2 \beta}^{r}{(M(\alpha, \beta) + N(\alpha, \beta))}}$.
\end{claim}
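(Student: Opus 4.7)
The plan is to plug the bound from Theorem~\ref{th:main2} into Lemma~\ref{l:sum} and then simplify the resulting expression using elementary inequalities.

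First I would relate $|{\cal S}_{V,\beta}|$ to $|{\cal G}_{\alpha,\beta,r}|$. A sequence $S \in {\cal S}_{V,\beta}$ is an ordered sequence of $r$ ordered pairs from $V \times V \setminus I_V$ such that the resulting multigraph is Eulerian on vertex set $V$ with $\beta$ connected components. Forgetting the orientation of each ordered pair, the position in the sequence gives an edge-labeling, so we obtain an edge-labeled Eulerian multigraph on $V$ with $r$ edges and $\beta$ components. Since $|V|=\alpha$, a relabeling of vertices shows that the number of such graphs equals $|{\cal G}_{\alpha,\beta,r}|$. Each undirected edge admits at most $2$ orientations, hence
\begin{equation*}
|{\cal S}_{V,\beta}| \le 2^r \cdot |{\cal G}_{\alpha,\beta,r}| \le 2^{O(r)}\,\Delta(\alpha,\beta),
\end{equation*}
by Theorem~\ref{th:main2}.

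Next I would bound the weighted sum over $V$ using Maclaurin's inequality on elementary symmetric polynomials. Since $x$ is a unit vector,
\begin{equation*}
\sum_{V \in \binom{[n]}{\alpha}} \prod_{q \in V} x_q^2 \;=\; e_\alpha(x_1^2,\ldots,x_n^2) \;\le\; \frac{1}{\alpha!}\Bigl(\sum_{q \in [n]} x_q^2\Bigr)^{\alpha} \;=\; \frac{1}{\alpha!}.
\end{equation*}
Combining these two inequalities with Lemma~\ref{l:sum}, and using $\|x\|_\infty^{-2}=k$, yields
\begin{equation*}
\|X\|_r^r \;\le\; \frac{2^{O(r)}}{k^r} \sum_{\beta=1}^{r/2} \sum_{\alpha=2\beta}^{r} \Bigl(\tfrac{m}{\beta}\Bigr)^{\!\beta} \Bigl(\tfrac{k}{m}\Bigr)^{\!\alpha} \cdot \frac{\alpha^{2\alpha}}{\alpha!} \cdot \bigl[(\alpha-2\beta)^2+4(\alpha-\beta)\bigr]^{r-\alpha}.
\end{equation*}
Using Stirling in the form $\alpha! \ge (\alpha/e)^\alpha$, one gets $\alpha^{2\alpha}/\alpha! \le e^\alpha \alpha^\alpha \le 2^{O(r)}\alpha^\alpha$ because $\alpha \le r$, which absorbs cleanly into $(k\alpha/m)^\alpha$.

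Finally I would split the factor $[(\alpha-2\beta)^2+4(\alpha-\beta)]^{r-\alpha}$ via the trivial inequality $(a+b)^{t} \le 2^{t}(a^t + b^t)$ applied with $t=r-\alpha$, $a=(\alpha-2\beta)^2$ and $b=4(\alpha-\beta)$. This gives
\begin{equation*}
\bigl[(\alpha-2\beta)^2+4(\alpha-\beta)\bigr]^{r-\alpha} \;\le\; 2^{O(r)}\bigl[(\alpha-2\beta)^{2(r-\alpha)} + (\alpha-\beta)^{r-\alpha}\bigr],
\end{equation*}
whose two terms match exactly the trailing factors in $M(\alpha,\beta)$ and $N(\alpha,\beta)$ respectively. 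Pulling everything into a single $2^{O(r)}$ factor then produces the claimed bound. The only genuinely non-trivial step here is the orientation-counting bijection and the invocation of Theorem~\ref{th:main2}; the rest is purely algebraic bookkeeping to recognize the $M+N$ decomposition, which is dictated by the split of $[(\alpha-2\beta)^2+4(\alpha-\beta)]^{r-\alpha}$ into its two dominant regimes.
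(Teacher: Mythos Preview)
Your proposal is correct and follows essentially the same route as the paper: bound $|{\cal S}_{V,\beta}|$ by $2^r|{\cal G}_{\alpha,\beta,r}|$ via the orientation-forgetting map and Theorem~\ref{th:main2}, control $\sum_V \prod_q x_q^2$ by $1/\alpha!$ (the paper phrases this via the multinomial expansion rather than naming Maclaurin), absorb $\alpha^{2\alpha}/\alpha!$ into $2^{O(r)}\alpha^\alpha$, and split $[(\alpha-2\beta)^2+4(\alpha-\beta)]^{r-\alpha}$ into the two summands that define $M$ and $N$. The paper's proof is the same argument with slightly less explicit bookkeeping on the Stirling step and the $2^t(a^t+b^t)$ split.
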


\begin{proof}
Let $1 \le \beta \le \alpha/2 \le r/2$. Then for every $V \in \binom{[n]}{\alpha}$, every sequence in ${\cal S}_{V,\beta}$ defines a directed edge-labeled multigraph $\overrightarrow{G_S}$ on $V$, whose underlying undirected graph $G_S$ is Eulerian and has $\beta$ connected components. Invoking the notation used in Theorem~\ref{th:main2}, $G_S$ is isomorphic to some graph in ${\cal G}_{\alpha,\beta,r}$, and moreover, it defines at most $2^r$ sequences in ${\cal S}_{V,\beta}$. Thus $|{\cal S}_{V,\beta}| \le 2^r|{\cal G}_{\alpha,\beta,r}|\le 2^{O(r)}\Delta(\alpha, \beta)$. Plugging this in the \eqref{eq:lemmaSum} we get that
\begin{equation}
\|X\|_r^r \le \frac{2^{O(r)}}{k^r} \sum\limits_{\beta=1}^{r/2}{\sum\limits_{\alpha = 2 \beta}^{r}{m^{\beta}\left(\frac{k}{m}\right)^{\alpha}\Delta(\alpha, \beta)\sum\limits_{V \in \binom{[n]}{\alpha}}{\prod_{q \in V}{x_q^2}}}} \;.
\label{eq:upperBound1}
\end{equation}
For every $V \in \binom{[n]}{\alpha}$, the coefficient of $\prod_{q \in V}{x_q^2}$ in the expansion of $\left(\sum_{q \in [n]}{x_q^2}\right)^{\alpha}$ is $\alpha!$. Therefore 
\begin{equation*}1 = \left(\sum_{q \in [n]}{x_q^2}\right)^{\alpha} \ge \alpha! \sum_{V \in \binom{[n]}{\alpha}}{\prod_{q \in V}{x_q^2}}\;.\end{equation*}
Plugging this in \eqref{eq:upperBound1} we get
\begin{equation*}
\begin{split}
\|X\|_r^r &\le \frac{2^{O(r)}}{k^r} \sum\limits_{\beta=1}^{r/2}{\sum\limits_{\alpha = 2 \beta}^{r}{m^{\beta} \left(\frac{k}{m}\right)^{\alpha}\frac{\Delta(\alpha, \beta)}{\alpha!}}} \\
&\le \frac{2^{O(r)}}{k^r} \sum\limits_{\beta=1}^{r/2}{\sum\limits_{\alpha = 2 \beta}^{r}{\left(m\beta^{-1}\right)^{\beta}\left(\frac{k\alpha}{m}\right)^{\alpha}\left[(\alpha - 2\beta)^{2} + 4(\alpha - \beta)\right]^{r - \alpha}}}\\
&\le \frac{2^{O(r)}}{k^r}\sum\limits_{\beta=1}^{r/2}{\sum\limits_{\alpha = 2 \beta}^{r}{(M(\alpha, \beta) + N(\alpha, \beta))}}\;.
\end{split}
\end{equation*}
\end{proof}

\begin{lemma} \label{l:boundM}
For all $1 \le \beta \le \alpha/2 \le r/2$, then if $k \ge mr$, then $M(\alpha, \beta) \le 2^{O(r)}\left(\frac{k^2 r}{m}\right)^{r/2}$. Otherwise, $M(\alpha, \beta) \le 2^{O(r)} \max\left\{\left(\frac{r}{\ln \frac{2emr}{k}}\right)^{2r}, \left(\frac{k^2 r}{m}\right)^{r/2}\right\}$.
\end{lemma}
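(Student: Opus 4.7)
Plan: I would treat $M(\alpha,\beta) = (m/\beta)^\beta(k\alpha/m)^\alpha(\alpha-2\beta)^{2r-2\alpha}$ as a function on the discrete domain $\{(\alpha,\beta): 1\le\beta\le\alpha/2\le r/2\}$, reduce to a one-variable optimization in $\alpha$, and split the estimate into the two regimes of the lemma. The domain has only $O(r^2)$ points and any polynomial-in-$r$ overhead is absorbed into $2^{O(r)}$, so it suffices to bound $M(\alpha,\beta)$ pointwise. First I would dispose of the boundary case $\alpha=2\beta$: when $\alpha<r$ the factor $(\alpha-2\beta)^{2r-2\alpha}$ vanishes and $M=0$; when $\alpha=r$ and $\beta=r/2$ the convention $0^0=1$ gives $M(r,r/2)= (2m/r)^{r/2}(kr/m)^r= 2^{r/2}(k^2r/m)^{r/2}$, which is already within the claimed bound in both regimes.

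For the remaining case $\alpha>2\beta$ I would reduce to a function of $\alpha$ alone via two elementary estimates. The map $\beta\mapsto\beta\ln(m/\beta)$ is increasing on $(0,m/e)$, and the standing hypothesis $r\le m/4$ of Lemma~\ref{l:upperBound} gives $\beta\le\alpha/2\le r/2\le m/8<m/e$, hence $(m/\beta)^\beta\le(2m/\alpha)^{\alpha/2}$. Combined with the trivial $(\alpha-2\beta)^{2r-2\alpha}\le\alpha^{2r-2\alpha}$, this yields
\[
M(\alpha,\beta)\;\le\;\phi(\alpha)\;\eqdef\;2^{\alpha/2}\Bigl(\tfrac{k}{\sqrt m}\Bigr)^{\alpha}\alpha^{2r-3\alpha/2}.
\]

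Finally I would maximize $\phi$ on $\alpha\in[2,r]$. The log-derivative $\frac{d}{d\alpha}\ln\phi=\tfrac12\ln(2k^2/m)-\tfrac32\ln\alpha+2r/\alpha-\tfrac32$ is strictly decreasing in $\alpha$, so $\phi$ is log-concave with at most one interior critical point $\alpha^*$. When $k\ge mr$, evaluating the log-derivative at $\alpha=r$ gives $\tfrac12\ln(2k^2/m)-\tfrac32\ln r+\tfrac12\ge\tfrac12\ln(2m/r)+\tfrac12>0$ (using $r\le m/4$), so $\phi$ is increasing on $[2,r]$ and attains its maximum at $\alpha=r$, where $\phi(r) = 2^{r/2}(k^2r/m)^{r/2}$ exactly as claimed. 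When $k<mr$, either the maximum is again at $\alpha=r$ (same bound) or at an interior $\alpha^*\in(2,r)$; substituting the critical equation back into $\ln\phi(\alpha^*)$ collapses it to $2r\ln\alpha^* + O(r)$, and asymptotically solving the critical equation yields $\alpha^*=\Theta(r/\ln(mr/k))$, whence $\phi(\alpha^*)\le 2^{O(r)}(r/\ln(mr/k))^{2r}$. Replacing $\ln(mr/k)$ by $\ln(2emr/k)$ costs only a bounded constant factor inside the base (since $mr/k>1$ in this regime), which is absorbed into $2^{O(r)}$.

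The main obstacle will be the asymptotic identification $\alpha^*=\Theta(r/\ln(mr/k))$ together with the closed-form bound on $\phi(\alpha^*)$; the two lossy reductions $(m/\beta)^\beta\le(2m/\alpha)^{\alpha/2}$ and $(\alpha-2\beta)^{2r-2\alpha}\le\alpha^{2r-2\alpha}$ must be controlled simultaneously, and one has to verify that they become tight in the same regime of $(\alpha,\beta)$ so that the resulting bound on $\phi$ is not quantitatively weaker than the target $(r/\ln(2emr/k))^{2r}$.
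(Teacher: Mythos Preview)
Your plan has a genuine gap at exactly the point you flag as the main obstacle, and it is fatal rather than merely technical. The two reductions $(m/\beta)^\beta\le(2m/\alpha)^{\alpha/2}$ and $(\alpha-2\beta)^{2r-2\alpha}\le\alpha^{2r-2\alpha}$ are tight in \emph{opposite} regimes of $\beta$: the first is tight only when $\beta\approx\alpha/2$, while the second is tight only when $\beta$ is small. When you multiply them, the resulting $\phi(\alpha)$ can exceed $\max_\beta M(\alpha,\beta)$ by far more than $2^{O(r)}$. Concretely, take $m=re^r$ and choose $k$ so that your critical equation $4r/\alpha^*=\ln\bigl(e^3m(\alpha^*)^3/(2k^2)\bigr)$ is solved by $\alpha^*=4\sqrt r$ (i.e.\ $k^2=32e^3r^{5/2}e^{r-\sqrt r}$, which satisfies $k<mr$). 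Then your own formula $\ln\phi(\alpha^*)=2r\ln\alpha^*+O(r)$ gives $\phi(\alpha^*)=2^{O(r)}r^r$, whereas the target is only $\max\{(r/L)^{2r},(k^2r/m)^{r/2}\}\approx 2^{2r}$, since here $L=\ln(2emr/k)\approx r/2$ and $k^2r/m\approx 32e^3r^{5/2}e^{-\sqrt r}\ll 1$. The ratio $r^r/2^{2r}$ is not $2^{O(r)}$. Relatedly, your asserted asymptotic $\alpha^*=\Theta(r/\ln(mr/k))$ is false: the critical equation involves $m/k^2$, not $mr/k$, and in the example above $r/\ln(mr/k)\approx 2$ while $\alpha^*=4\sqrt r$.

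The paper avoids this by never collapsing the $\beta$-dependence. Its initial reduction is the combinatorial identity $\alpha^\alpha\le 2^{O(r)}(\alpha-2\beta)^{\alpha-2\beta}\beta^{2\beta}$, which is tight up to $2^{O(r)}$ for all $(\alpha,\beta)$ simultaneously and yields $M(\alpha,\beta)\le 2^{O(r)}(m\beta)^\beta(k/m)^\alpha(\alpha-2\beta)^{2r-\alpha-2\beta}$. It then performs a genuine two-stage optimization: first over $\alpha$ with $\beta$ fixed (locating the critical point between two explicit values $\alpha_0,\alpha_1$ depending on $\beta$), and then over $\beta$, where the resulting one-variable function is shown to be convex so that only the endpoints $\beta\to 0$ and $\beta=r/2$ matter. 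These two endpoints produce exactly the two branches $(r/\ln\tfrac{2emr}{k})^{2r}$ and $(k^2r/m)^{r/2}$. Your one-shot reduction to $\phi(\alpha)$ effectively forces $\beta=\alpha/2$ in one factor and $\beta$ small in the other, which is why it overshoots when $m$ is large.
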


\begin{proof}
Let $1 \le \beta \le \alpha/2 \le r/2$.
First note that 
\begin{equation*}\alpha^\alpha \le 2^{O(r)} \alpha! = 2^{O(r)} (\alpha - 2 \beta)! \cdot (2 \beta)! \cdot \binom{\alpha}{2 \beta} \le 2^{O(r)}(\alpha - 2\beta)^{\alpha - 2 \beta} \beta^{2 \beta} \;,\end{equation*}
and therefore $M(\alpha, \beta) \le 2^{O(r)}(m\beta)^\beta \cdot \left(\frac{k}{m}\right)^{\alpha}(\alpha-2\beta)^{2r - \alpha - 2 \beta}$.

Next, we fix some $\beta \in [r/2]$. Define $f, \hat{f} : (2 \beta, + \infty) \to \mathbb{R}$ by $f(\alpha) = \left(\frac{k}{m}\right)^{\alpha} \cdot (\alpha - 2 \beta)^{2r - \alpha - 2 \beta}$, and $\hat{f}(\alpha) = \ln \frac{k}{m} - \ln(\alpha - 2 \beta) + \frac{2r - 4\beta}{\alpha - 2 \beta} - 1$ for all $\alpha > 2 \beta$. Then $f'(\alpha) = f(\alpha) \cdot \hat{f}(\alpha)$, and $\hat{f}'(\alpha) = - \frac{1}{\alpha - 2 \beta} - \frac{2(r - 2 \beta)}{(\alpha - 2 \beta)^2} < 0$ for all $\alpha > 2 \beta$. 

Assume first that $\beta \ge \frac{r}{2} - \frac{ek}{2m} $, then
$\hat{f}(r)  = \ln \frac{k}{m} - \ln(r - 2 \beta) + 1 \ge 0$. Therefore $\hat{f}(\alpha) \ge 0$, thus $f'(\alpha) \ge 0$ and $f(\alpha) \le f(r)$ for all $\alpha \in (2 \beta, r]$. It follows that 
\begin{equation}
M(\alpha, \beta) \le 2^{O(r)}\left(m\beta\right)^{\beta}\left(\frac{k}{m}\right)^{r}(r - 2 \beta)^{r - 2\beta} \le 2^{O(r)} \left(\frac{m}{r/2}\right)^{r/2}\left(\frac{kr}{m}\right)^{r} \le 2^{O(r)} \left(\frac{k^2r}{m}\right)^{r/2}\;, 
\label{eq:boundLargeBeta}
\end{equation} 
where the inequality before last follows from the fact that $m \ge r$ and $\beta \le r/2$. 

To prove the first part of the lemma, note that if $k \ge \frac{mr}{e}$, then for all $\beta \in (0,r/2]$, $\beta \ge \frac{r}{2} - \frac{ek}{2m}$. 

To prove the second part of the lemma, we next assume that $k < \frac{mr}{e}$ and $\beta < \frac{r}{2} - \frac{ek}{2m}$, and note that this implies a tighter bound  on $k$, namely $k < \frac{1}{e}m(r-2 \beta)$. 
Let
\begin{equation*}\alpha_0 = 2 \beta + \frac{2(r - 2 \beta)}{\ln \frac{2em(r - 2 \beta)}{k}} \quad \quad, \quad \quad \alpha_1 = 2 \beta + \frac{2(r - 2 \beta) }{\ln \frac{2em(r - 2 \beta)}{k\ln \frac{2em(r - 2 \beta)}{k}}} \;. \end{equation*}
Then $2 \beta \le \alpha_0 < \alpha_1$, and moreover

\begin{equation*}
\hat{f}(\alpha_0) = \ln \frac{k}{m} - \ln\frac{2(r - 2 \beta)}{\ln \frac{2em(r - 2 \beta)}{k}} + \frac{2(r - 2\beta)}{\frac{2(r - 2 \beta)}{\ln \frac{2em(r - 2 \beta)}{k}}} - 1 = \ln\ln \frac{2em(r - 2 \beta)}{k} > 0 \;,
\end{equation*}
and similarly

\begin{equation*}
\hat{f}(\alpha_1) =  \ln\ln \frac{2em(r - 2 \beta)}{k\ln \frac{2em(r - 2 \beta)}{k}} - \ln \ln \frac{2em(r - 2 \beta)}{k} < 0 \;.
\end{equation*}

Therefore there exists a unique $\alpha^* \in (\alpha_0, \alpha_1)$ such that $\hat{f}(\alpha^*)=0$, and thus $\frac{k}{em(\alpha^* - 2 \beta)} = e^{- \frac{2r - 4 \beta}{\alpha^* - 2 \beta}} \le 1$. Moreover, for every $\alpha > 2 \beta$,

\begin{equation*}
f(\alpha) \le f(\alpha^*) = \left(\frac{k}{m}\right)^{\alpha^*} \cdot (\alpha^* - 2 \beta)^{2r - \alpha^* - 2 \beta} \le \left(\frac{k}{m}\right)^{2 \beta}(\alpha^* - 2 \beta)^{2r - 4 \beta} \;,
\end{equation*}
and we get that 
\begin{equation}
\begin{split}
M(\alpha, \beta) &\le 2^{O(r)}(m \beta)^\beta \left(\frac{k}{m}\right)^{2 \beta}(\alpha^* - 2 \beta)^{2r - 4 \beta} \le 2^{O(r)}\left(\frac{k^2 \beta}{m}\right)^{\beta} (\alpha_1 - 2 \beta)^{2r - 4 \beta} \\
& \le 2^{O(r)}\left(\frac{k^2 \beta}{m}\right)^{\beta} \left(\frac{r}{\ln \frac{2emr}{k\ln \frac{2emr}{k}}}\right)^{2r - 4 \beta}
\label{eq:upperBound2}
\end{split}
\end{equation}
where the last inequality follows from the fact that $y = \frac{x}{\ln \frac{x}{\ln x}}$ is monotonically increasing for $x > 1$.

Finally, define $g, \hat{g} : (0, + \infty) \to \mathbb{R}$ by \begin{equation*}g(\beta) = \left(\frac{k^2 \beta}{m}\right)^{\beta} \left(\frac{r}{\ln \frac{2emr}{k\ln \frac{2emr}{k}}}\right)^{2r - 4 \beta} \; and \quad \hat{g}(\beta) = \ln \frac{k^2}{m} + \ln\beta + 1 -4 \ln \left(\frac{r}{\ln \frac{2emr}{k\ln \frac{2emr}{k}}}\right)\end{equation*} for all $\beta > 0$. Then $g'(\beta) = g(\beta) \cdot \hat{g}(\beta)$, and moreover $g''(\beta) = g(\beta) \cdot \hat{g}^2(\beta) + g(\beta)/\beta > 0$. 
We thus conclude that $g$ is convex as a function of $ \beta \in (0,r/2]$, and therefore $g(\beta) \le \max\{\lim\limits_{\beta \to 0}g(\beta), g(r/2)\}$ for all $\beta \in (0, r/2]$.
\begin{equation*}\lim\limits_{\beta \to 0}g(\beta) = \left(\frac{r}{\ln \frac{2emr}{k\ln \frac{2emr}{k}}}\right)^{2r} \quad, \; and \quad g(r/2) \le 2^{O(r)} \left(\frac{k^2 r}{m}\right)^{r/2}\;.\end{equation*}
Since $k < \frac{mr}{e}$, then $\frac{2emr}{k} > e^2$, and therefore $\ln \frac{2emr}{k\ln \frac{2emr}{k}} \ge \frac{1}{2}\ln \frac{2emr}{k}$. 
Plugging into \eqref{eq:upperBound2}, we thus get that since $\ln \frac{2emr}{k} > 1$,
\begin{equation*}
M(\alpha, \beta) \le 2^{O(r)} \max\left\{\left(\frac{r}{\ln \frac{2emr}{k}}\right)^{2r}, \left(\frac{k^2 r}{m}\right)^{r/2}\right\} \;,
\end{equation*}
and the proof of the lemma is now complete.
\end{proof}

\begin{lemma} \label{l:boundN}
Let $1 \le \beta \le \alpha/2 \le \ell/2$.  Then if $k^2>mr$, then $N(\alpha, \beta) \le \left(\frac{k^2r}{m}\right)^{r/2}$ and otherwise, $N(\alpha, \beta) \le \max\left\{M(\alpha, \beta), \left(\frac{r}{\ln \frac{emr}{k^2}}\right)^{r} \right\}$.
\end{lemma}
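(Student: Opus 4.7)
The proof follows the template of Lemma~\ref{l:boundM}: for each $\beta$, optimize over $\alpha$ to locate the maximum of an explicit function, then optimize the resulting expression over $\beta$. I would first apply the estimate $\alpha^\alpha \le 2^{O(r)} \beta^\beta (\alpha-\beta)^{\alpha-\beta}$, a consequence of $\binom{\alpha}{\beta} \le 2^\alpha$ combined with Stirling, to rewrite
\begin{equation*}
N(\alpha, \beta) \le 2^{O(r)} \, k^\beta \left(\frac{k}{m}\right)^{\alpha - \beta}(\alpha - \beta)^{r - \beta}.
\end{equation*}
Setting $u = \alpha - \beta \in [\beta, r-\beta]$ and $\phi(u) = (k/m)^u u^{r - \beta}$, the function $\phi$ is log-concave with unconstrained maximizer $u^* = (r-\beta)/\ln(m/k)$ when $k < m$. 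A case analysis on the location of $u^*$ relative to $[\beta, r - \beta]$ produces three candidate suprema, each of which I would optimize over $\beta \in [1, r/2]$ by the same monotonicity and convexity arguments as in Lemma~\ref{l:boundM}.

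For the first branch ($k^2 > mr$), the dominant case is $k \ge m/e$, in which $\phi$ is monotone on the interval and its maximum at $u = r - \beta$ combined with the outer optimum at $\beta = r/2$ directly yields $N \le 2^{O(r)}(k^2 r/m)^{r/2}$. In the residual regime $\sqrt{mr} < k < m/e$, the interior-maximum and left-boundary candidates each collapse to the same bound using that $k \ln(m/k)$ is bounded by $O(\sqrt{mr})$ whenever $k^2 > mr$.

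For the second branch ($k^2 \le mr$), I would exploit the identity
\begin{equation*}
\frac{N(\alpha, \beta)}{M(\alpha, \beta)} = \left(\frac{\alpha - \beta}{(\alpha - 2\beta)^2}\right)^{r - \alpha},
\end{equation*}
valid for $\alpha > 2\beta$: whenever $(\alpha - 2\beta)^2 \ge \alpha - \beta$, the ratio is at most $1$, so $N \le M$ absorbs the first term of the max. The complementary regime $\alpha - 2\beta < \sqrt{\alpha - \beta}$ (which includes the degenerate case $\alpha = 2\beta$, where $M$ itself vanishes) forces $\beta$ to lie within $O(\sqrt{r})$ of $\alpha/2$; writing $\gamma = \alpha/2$, a direct calculation gives $N \le 2^{O(r)}(k^2/m)^\gamma (2\gamma)^{r - \gamma}$. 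The log-concave function $\psi(\gamma) = (k^2/m)^\gamma (2\gamma)^{r - \gamma}$ is maximized at an interior $\gamma^* \approx r/\ln(emr/k^2)$ when $k^2 < mr/e$, producing $\psi(\gamma^*) \le (2\gamma^*)^r \le 2^{O(r)}(r/\ln(emr/k^2))^r$; when $k^2 \ge mr/e$, $\psi$ is increasing on $[1, r/2]$ and its maximum value $\psi(r/2) = (k^2 r/m)^{r/2}$ is in turn dominated by $(r/\ln(emr/k^2))^r$, which follows from monotonicity of $k \mapsto k \ln(emr/k^2)$ on $(0, \sqrt{mr}]$.

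The main obstacle I anticipate is the bookkeeping in the $k < m/e$ interior-maximum subcase of the first branch: a naive optimization there produces a stray term of the form $k^r(k/m)^k$, which must be ruled out by combining the hypothesis $k^2 > mr$ with the domain constraint $\beta \le r/(1+\ln(m/k))$ so that the bound collapses to the desired $(k^2 r/m)^{r/2}$.
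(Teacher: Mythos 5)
Your proposal follows essentially the same route as the paper's proof. In the regime $k^2 \le mr$ you compare $N$ to $M$ via the exact ratio $\left(\frac{\alpha-\beta}{(\alpha-2\beta)^2}\right)^{r-\alpha}$ (the paper uses the sufficient condition $\alpha > 4\beta$), and in the near-diagonal regime you reduce to a one-variable optimization whose critical point sits near $r/\ln\frac{emr}{k^2}$ — this is exactly the paper's function $g(\beta)=\beta^r\left(\frac{k^2}{\beta m}\right)^{\beta}$ up to $2^{O(r)}$. In the regime $k^2>mr$ both arguments push the maximum to the corner $\alpha=r$, $\beta=r/2$; your substitution $u=\alpha-\beta$ is a mildly cleaner way to organize what the paper does by splitting on $k>m$ versus $k\le m$. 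Two of your auxiliary claims are misstated, although the conclusions they are meant to support are fine. First, in the subcase $\sqrt{mr}<k<m/e$ you invoke the bound $k\ln(m/k)=O(\sqrt{mr})$; this is false (take $k$ just above $\sqrt{mr}$ with $m\gg r$), and what the comparison $\left(\frac{r}{\ln(m/k)}\right)^{r}\lesssim\left(\frac{k^2r}{m}\right)^{r/2}$ actually requires is the reverse inequality $k\ln(m/k)\ge\sqrt{mr}$, which holds trivially here since $k>\sqrt{mr}$ and $\ln(m/k)\ge 1$. Second, $k\mapsto k\ln\frac{emr}{k^2}$ is not monotone on $(0,\sqrt{mr}]$ — it increases up to $k=\sqrt{mr/e}$ and then decreases — but its maximum on that interval is $\frac{2}{\sqrt{e}}\sqrt{mr}$, so the domination $\left(\frac{k^2r}{m}\right)^{r/2}\le 2^{O(r)}\left(\frac{r}{\ln\frac{emr}{k^2}}\right)^{r}$ still holds (as it must, since the lemma, like the paper's own derivation, is only meaningful up to $2^{O(r)}$ factors). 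With those two justifications corrected, the argument is sound.
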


\begin{proof}
Assume first that $k^2 \ge mr$. 
If $k>m$, then $\left(\frac{k\alpha}{m}\right)^{\alpha}(\alpha - \beta)^{r - \alpha}$ is increasing as a function of $\alpha$ over $[2 \beta, r]$, and therefore,
\begin{equation}
N(\alpha, \beta) \le \left(\frac{kr}{m}\right)^{r} \cdot \left(\frac{m}{\beta}\right)^{\beta} \le 2^{O(r)}\left(\frac{k^2r}{m}\right)^{r/2}  \;.
\label{eq:kLarger}
\end{equation} 
Otherwise, since $2 \beta \le \alpha \le r$,
\begin{equation}
N(\alpha, \beta) \le \left(\frac{k^2}{\beta m}\right)^{\beta} \alpha^{\alpha}(\alpha - \beta)^{r - \alpha}  \le r^r\left(\frac{k^2}{\beta m}\right)^{\beta} \le 2^{O(r)}\left(\frac{k^2r}{m}\right)^{r/2} \;.
\label{eq:kMedium}
\end{equation}
Next, we assume that $k^2 < rm$, and note that whenever $\alpha > 4 \beta$, then $(\alpha - 2 \beta)^2 > (\alpha - \beta)$, and therefore $N(\alpha, \beta) \le M(\alpha, \beta)$. Otherwise, $\alpha^{\alpha}(\alpha - \beta)^{r - \alpha}  \le 2^{O(r)}\beta^{\alpha}\beta^{\alpha - \beta} \le 2^{O(r)}\beta^r$, and therefore
\begin{equation}
N(\alpha, \beta) \le \left(\frac{k^2}{m\beta}\right)^{\beta} \cdot \beta^r 
\label{eq:kSmaller}
\end{equation}
Define next $g, \hat{g} : (0, + \infty) \to \mathbb{R}$ by $g(\beta) = \beta^{r}\left(\frac{k^2}{\beta m}\right)^\beta$ and $\hat{g}(\beta) = \frac{r}{\beta} + \ln \frac{k^2}{\beta m} - 1$ for every $\beta >0$. Then $g'(\beta) = g(\beta) \hat{g}(\beta)$, $g(\beta) > 0$ and $\hat{g}'(\beta) = -\frac{r}{\beta^2} - \frac{1}{\beta} < 0$ for every $\beta > 0$. Let 
\begin{equation*}\beta_0 = \frac{r}{\ln \frac{emr}{k^2}} \quad \quad, \quad \quad \beta_1 = \frac{r}{\ln \frac{emr}{k^2\ln \frac{emr}{k^2}}} \;. \end{equation*}

Then $0 < \beta_0 < \beta_1 \le r$, and moreover

\begin{equation*}
\hat{g}(\beta_0) = \frac{r}{\frac{r}{\ln \frac{emr}{k^2}}} + \ln \frac{k^2}{\frac{r}{\ln \frac{emr}{k^2}} m} - 1 = \ln \ln \frac{emr}{k^2} \ge \ln \ln e = 0 \;,
\end{equation*}
where the last inequality is due to the fact that $k^2 < mr$. In addition,
\begin{equation*}
\hat{g}(\beta_1) = \frac{r}{\frac{r}{\ln \frac{emr}{k^2\ln \frac{emr}{k^2}}}} + \ln \frac{k^2}{\frac{r}{\ln \frac{emr}{k^2\ln \frac{emr}{k^2}}} m} - 1 =   - \ln \ln \frac{emr}{k^2} + \ln \ln \frac{emr}{k^2\ln \frac{emr}{k^2}} < 0
\end{equation*}

Therefore there exists a unique $\beta^* \in (\beta_0, \beta_1)$ such that 
$0 = \hat{g}(\beta^*)= \frac{r}{\beta^*} + \ln \frac{k^2}{\beta^* m} - 1$, which in turn implies $\frac{k^2}{\beta^* m} = e^{1-r/\beta^*}$. Moreover, and for all $\beta > 0$, 
\begin{equation*}g(\beta) \le g(\beta^*)= (\beta^*)^{r}\left(\frac{k^2}{\beta^* m}\right)^{\beta^*} \le  \left(\frac{r}{\ln \frac{emr}{k^2\ln \frac{emr}{k^2}}}\right)^r\;.\end{equation*}
Since $\frac{emr}{k^2} > e$, $\ln \ln \frac{emr}{k^2} < \frac{1}{2}\ln \frac{emr}{k^2}$,
and we have  
$N(\alpha, \beta) \le 2^{O(r)}\left(\frac{r}{\ln \frac{emr}{k^2}}\right)^r$.
\end{proof}

We now turn to prove Lemma~\ref{l:upperBound}.

\begin{proof}[Proof of Lemma~\ref{l:upperBound}]
Assume first that $k \ge mr$. Then by Claim~\ref{c:upperBoundPart1} and Lemmas~\ref{l:boundM}, \ref{l:boundN} we get that
\begin{equation*}
\begin{split}
\|X\|^r_r &\le \frac{2^{O(r)}}{k^r} \sum\limits_{\beta=1}^{r/2}{\sum\limits_{\alpha = 2 \beta}^{r}{(M(\alpha, \beta) + N(\alpha, \beta))}} \le \frac{2^{O(r)}}{k^r}\left(\frac{k^2r}{m}\right)^{r/2} \;,
\end{split}
\end{equation*}
and therefore $\|X\|_r = O\left(\sqrt{\frac{r}{m}}\right)$. 

Next, assume that $mr > k \ge \sqrt{mr}$. Once again by Claim~\ref{c:upperBoundPart1} and Lemmas~\ref{l:boundM}, \ref{l:boundN} we get that
\begin{equation}
\begin{split}
\|X\|^r_r &\le \frac{2^{O(r)}}{k^r} \sum\limits_{\beta=1}^{r/2}{\sum\limits_{\alpha = 2 \beta}^{r}{(M(\alpha, \beta) + N(\alpha, \beta))}} \\
&\le \frac{2^{O(r)}}{k^r} \sum\limits_{\beta=1}^{r/2}{\sum\limits_{\alpha = 2 \beta}^{r}{ \max\left\{\left(\frac{r}{\ln \frac{2emr}{k}}\right)^{2r}, \left(\frac{k^2 r}{m}\right)^{r/2}\right\} + \left(\frac{k^2r}{m}\right)^{r/2}}} \\
& \le \frac{2^{O(r)}}{k^r}\max\left\{\left(\frac{r}{\ln \frac{2emr}{k}}\right)^{2r}, \left(\frac{k^2 r}{m}\right)^{r/2}\right\} \;,
\end{split}
\end{equation}
and therefore $\|X\|_r = O\left(\max\left\{\frac{r^2}{k\ln^2 \frac{2emr}{k}}, \sqrt{\frac{r}{m}}\right\}\right)$. 

Finally, assume that $\sqrt{mr}>k$. Once again by Claim~\ref{c:upperBoundPart1} and Lemmas~\ref{l:boundM}, \ref{l:boundN} we get that
\begin{equation}
\begin{split}
\|X\|^r_r &\le \frac{2^{O(r)}}{k^r} \sum\limits_{\beta=1}^{r/2}{\sum\limits_{\alpha = 2 \beta}^{r}{(M(\alpha, \beta) + N(\alpha, \beta))}} \\
&\le \frac{2^{O(r)}}{k^r} \sum\limits_{\beta=1}^{r/2}{\sum\limits_{\alpha = 2 \beta}^{r}{ \max\left\{\left(\frac{r}{\ln \frac{2emr}{k}}\right)^{2r}, \left(\frac{k^2 r}{m}\right)^{r/2}\right\} + \max\left\{M(\alpha, \beta), \left(\frac{r}{\ln \frac{emr}{k^2}}\right)^{r} \right\}}} \\
& \le \frac{2^{O(r)}}{k^r}\max\left\{\left(\frac{r}{\ln \frac{2emr}{k}}\right)^{2r}, \left(\frac{k^2 r}{m}\right)^{r/2}\right\} \;,
\end{split}
\end{equation}
and therefore $\|X\|_r = O\left(\max\left\{\frac{r}{k\ln \frac{emr}{k^2}}, \frac{r^2}{k\ln^2 \frac{2emr}{k}}, \sqrt{\frac{r}{m}}\right\}\right)$. 

\end{proof}
\ifpdf
\subsection{Lower Bounding \texorpdfstring{$\|X(\xk)\|_r$}{Xrr}}
\else
\subsection{Lower Bounding $\|X(\xk)\|_r$}
\fi
We finish this section by proving Lemma~\ref{l:lowerBound}. To this end, let $k \le n$, and recall that by Lemma~\ref{l:sum}, since for every $j \in \supp(\xk)$, $|\xk_j| = \|\xk\|_\infty$, then 
\begin{equation*}\|X(\xk)\|_r^r = \|\xk\|_{\infty}^{2r} \sum\limits_{\beta=1}^{r/2}{\sum\limits_{\alpha = 2 \beta}^{r}{\frac{m^{\beta}}{\left(\|\xk\|_{\infty}^{2}m\right)^{\alpha}}\sum\limits_{V \in \binom{[n]}{\alpha}}{|{\cal S}_{V,\beta}| \cdot \prod_{q \in V}{(\xk_q)^2}}}} \;.\end{equation*}
For every $V \subseteq [n]$, if $V \subseteq [k]$, then $\prod_{q \in V}{(\xk_q)^2} = \|\xk\|_\infty^{2|V|}$, and otherwise $\prod_{q \in V}{(\xk_q)^2} = 0$. Substituting $\|\xk\|_\infty = \frac{1}{\sqrt{k}}$, and applying Theorem~\ref{th:main2} we get that since $r \le k$ then
\begin{equation}\begin{split}
\|X(\xk)\|_r^r &= \frac{1}{k^r} \sum\limits_{\beta=1}^{r/2}{\sum\limits_{\alpha = 2 \beta}^{r}{\frac{m^{\beta}k^{\alpha}}{m^{\alpha}}\sum\limits_{V \in \binom{[k]}{\alpha}}{|{\cal S}_{V,\beta}| \cdot \frac{1}{k^{\alpha}}}}} \\
&\ge \frac{1}{k^r} \sum\limits_{\beta=1}^{r/2}{\sum\limits_{\alpha = 2 \beta}^{r}{\frac{m^{\beta}}{m^{\alpha}}\sum\limits_{V \in \binom{[k]}{\alpha}}{ 2^{-O(r)}\alpha^{2 \alpha}\beta^{-\beta} \left[(\alpha - 2 \beta)^2 +4(\alpha - \beta)\right]^{r - \alpha}}}} \\
&= \frac{2^{-O(r)}}{k^r} \sum\limits_{\beta=1}^{r/2}{\sum\limits_{\alpha = 2 \beta}^{r}{\frac{m^{\beta}}{m^{\alpha}}\cdot \binom{k}{\alpha}\cdot \alpha^{2 \alpha}\beta^{-\beta} \left[(\alpha - 2 \beta)^2 +4(\alpha - \beta)\right]^{r - \alpha}}} \\
&\ge \frac{2^{-O(r)}}{k^r} \sum\limits_{\beta=1}^{r/2}{\sum\limits_{\alpha = 2 \beta}^{r}{\left(m \beta^{-1}\right)^\beta\left(\frac{k \alpha}{m}\right)^{\alpha}\left[(\alpha - 2\beta)^{2r - 2 \alpha} + (\alpha - \beta)^{r-\alpha}\right]}}\;.
\label{eq:sumLowerBound}
\end{split}
\end{equation}
Setting $\alpha = r, \beta=r/2$ we get that 
\begin{equation*} \|X(\xk)\|_r \ge \sqrt[r]{\frac{2^{-O(r)}}{k^r} \left(m r^{-1}\right)^{r/2}\left(\frac{k r}{m}\right)^{r}}  = \Omega\left(\sqrt{\frac{r}{m}}\right) \;.\end{equation*}
Assume next that $k \le mr$ and let $\alpha = 2 + \frac{r}{\ln \left(\frac{emr}{k}\right)}, \beta=1$. Then $\left(\frac{k}{m (\alpha-2)}\right)^{\alpha-2} \ge 2^{-O(r)}$, and therefore 
\begin{equation}\begin{split}
\|X(\xk)\|_r^r &\ge \frac{2^{-O(r)}}{k^r} \cdot m \cdot \left(\frac{k \alpha}{m}\right)^{\alpha}(\alpha - 2)^{2r - 2 \alpha}\\
&\ge \frac{2^{-O(r)}}{k^r} \cdot (\alpha - 2)^{2r} \cdot \frac{k^2}{m} \cdot \left(\frac{k}{m}\right)^{\alpha - 2} \cdot (\alpha - 2)^{\alpha} \cdot (\alpha - 2)^{ - 2 \alpha} \\
&\ge \frac{2^{-O(r)}}{k^r} \cdot (\alpha - 2)^{2r} \cdot \frac{k^2}{m} \cdot \left(\frac{k}{m(\alpha - 2)}\right)^{\alpha - 2} \ge \left(\frac{2^{-O(1)}r^2}{k\ln^2 \left(\frac{emr}{k}\right)}\right)^r \;.
\label{eq:sumLowerBoundMiddlek}
\end{split}
\end{equation}
We conclude that
\begin{equation*} \|X(\xk)\|_r = \Omega\left(\frac{r^2}{k\ln^2 \left(\frac{emr}{k}\right)}\right) \;.\end{equation*}
Finally, assume that $k \le \sqrt{mr}$ and let $\alpha = \frac{2r}{\ln \frac{emr}{k^2}}, \beta= \frac{r}{\ln \frac{emr}{k^2}}$. Then $\left(\frac{k^2}{m \beta}\right)^{\beta} \ge 2^{-O(r)}$ and therefore
\begin{equation*} \|X(\xk)\|_r \ge \sqrt[r]{\frac{2^{-O(r)}}{k^r} \left(\frac{k^2}{m\beta}\right)^{\beta}\beta^{r}} = \Omega\left(\frac{r}{k\ln \left(\frac{emr}{k^2}\right)}\right) \;,\end{equation*}
and the proof of Lemma~\ref{l:lowerBound} is now complete.

%%%%%%%%%%%%%%%%%%%%%%%%%%%%%%%%%%%%%%%%%%%%%%%%%%%%
%%%%%%%%%%%%%%%%%%%%%%%%%%%%%%%%%%%%%%%%%%%%%%%%%%%%

\section{Empirical Analysis}
\label{sec:experiments}

The goal of the experiments is to give bounds on some of the constants
hidden in the main theorem. From our experiments we conclude that for
$\frac{4 \lg \frac{1}{\delta}}{\varepsilon^2} \leq m <
\frac{2}{\varepsilon^2 \delta}$ the constant inside the
$\Theta$-notation in \autoref{th:main} is at least $0.725$ except for
very sparse vectors ($\|x\|_0 \leq 7$), where the constant is at least
$0.6$. Furthermore, we confirm that $\nu(m, \varepsilon, \delta) = 1$
when $m \geq \frac{2}{\varepsilon^2 \delta}$ and that there exists
data points where $\nu(m, \varepsilon, \delta) < 1$ while
$m = \frac{2 - \gamma}{\varepsilon^2 \delta}$, for some small
$\gamma$.

\subsection{Experiment Setup and Analysis}
\label{sec:experiment-setup}

To arrive at the results, we ran experiments and analyzed the data in
several phases. In the first phase we varied the target dimension $m$
over exponentially spaced values in the range $[2^6, 2^{12}]$, and a
parameter $k$ which controls the ratio between the $\ell_\infty$ and
the $\ell_2$ norm. The values of $k$ varied over exponentially spaced values in
the range $[2^1, 2^{13}]$. Then for all $m$ and $k$, we generated
$2^{24}$ vectors $x$ with entries in $\{0, 1\}$ such that
$\|x\|_2 =\sqrt{k}\|x\|_{\infty}$, and for any given $m$ and $k$ the
supports of the vectors were pairwise disjoint. We then hashed the
generated vectors using feature hashing, and recorded the $\ell_2$
norm of the embedded vectors.

The second phase then calculated the distortion between the original
and the embedded vectors, and computed the error probability
$\hat{\delta}$. Loosely speaking, $\hat{\delta}(m, k, \varepsilon)$ is the
ratio of the $2^{24}$ vectors for a given $m$ and $k$ that have
distortion greater than $\varepsilon$. Formally, $\hat{\delta}$ is calculated using the following formula
\begin{equation*}
  \hat{\delta}(m, k, \varepsilon)
  = \frac{\Bigl|\bigl\{ x : \|x\|_2 =\sqrt{k}\|x\|_{\infty},
    \big|\|A_mx\|_2^2 - \|x\|_2^2\big| \geq \varepsilon \|x\|_2^2 \bigr\}\Bigr|}
  {\bigl|\{ x : \|x\|_2 =\sqrt{k}\|x\|_{\infty} \}\bigr|} \;,
\end{equation*}
where $\varepsilon$ was varied over exponentially spaced values in the
range $[2^{-10}, 2^{-1}]$. Note that $\hat{\delta}$ tends to the
true error probability as the number of vectors tends to
infinity. Computing $\hat{\delta}$ yielded a series of 4-tuples
$(m, k, \varepsilon, \hat{\delta})$ which can be interpreted as given
target dimension $m$, $\ell_\infty/\ell_2$ ratio $1/\sqrt{k}$, distortion
$\varepsilon$, we have measured that the failure probability is at
most $\hat{\delta}$.

In the third phase, we varied $\delta$ over exponentially spaced
values in the range $[2^{-20}, 2^0]$, and calculated a value $\hat{\nu}$. Intuitively, $\hat{\nu}(m, \varepsilon, \delta)$ is the largest $\ell_\infty/\ell_2$ ratio
such that for all vectors having at most this $\ell_\infty/\ell_2$ ratio the measured error probability $\hat{\delta}$ is at most $\delta$. Formally, 
\begin{equation*}
  \hat{\nu}(m, \varepsilon, \delta) = \max \Bigl\{ \frac{1}{\sqrt{k}}
  : \forall k' \geq k, \hat{\delta}(m, k', \varepsilon) \leq \delta \Bigr\} \;.
\end{equation*}
Note once more that $\hat{\nu}$ tends to the
true $\nu$ value as the number of vectors tends to infinity.

To find a bound on the constant of the $\Theta$-notation in
\autoref{th:main}, we truncated data points that did not satisfy
$\frac{4 \lg \frac{1}{\delta}}{\varepsilon^2} \leq m <
\frac{2}{\varepsilon^2 \delta}$, and for the remaining points we
plotted $\hat{\nu}$ over the theoretical bound in \autoref{fig:both_cut}:
\begin{equation*}
  \frac{\hat{\nu}(m, \varepsilon, \delta)}
  {\min\biggl\{ \frac{\sqrt{\varepsilon}\lg\frac{\varepsilon
        m}{\lg \frac{1}{\delta}}}{\lg\frac{1}{\delta}},
    \sqrt{\frac{\varepsilon\lg \frac{\varepsilon^2 m}{\lg
          \frac{1}{\delta}}}{\lg\frac{1}{\delta}}} \biggr\}} \;.
\end{equation*}
From this plot we conclude that the constant is at least $0.6$ on the
large range on parameters we tested. However, the smallest values seem
to be outliers and come from a combination of very sparse vectors
($k = 7$) and high target dimension ($m = 2^{14}$). For the rest of
the data points the constant is at least $0.725$. While there are data
points where the constant is larger (i.e. feature hashing performs
better), there are data points close to $0.725$ over the entire range
of $\varepsilon$ and $\delta$.

\begin{figure}
  \centering
  \includegraphics[height=0.24\textheight]{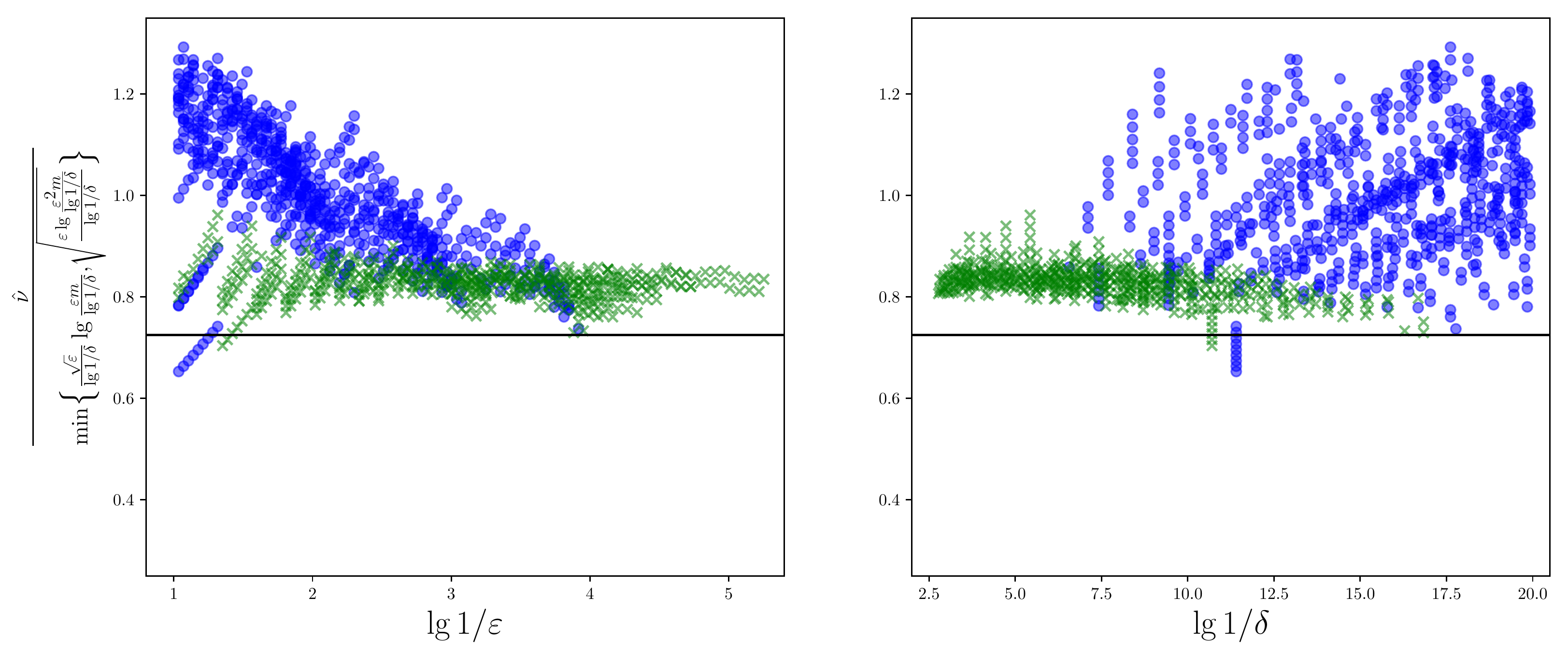}
  \caption{This plot shows the measured $\hat{\nu}$ values over the
    theoretical bound (abbreviated here):
    $\min\{\mathsf{left}, \mathsf{right}\}$. This ratio corresponds to
    the constant in the $\Theta$-notation in \autoref{th:main}. The
    points are marked with blue circles if
    $\mathsf{left} < \mathsf{right}$, otherwise they are marked with
    green $\times$'s. The horizontal line at 0.725 is there to ease
    comparisons with \autoref{fig:both_all}. The data points below the
    line come from very sparse vectors ($k = 7$) with high target
    dimension ($m = 2^{14}$).}
  \label{fig:both_cut}
\end{figure}

In \autoref{fig:both_all} we show that we indeed need both terms in
the minimum in \autoref{th:main}, by plotting the measured $\hat{\nu}$
values over both terms in the minimum in the theoretical bound
separately. For both terms there are points whose value is
significantly below $0.725$.

\begin{figure}[H]
  \centering
  \includegraphics[height=0.24\textheight]{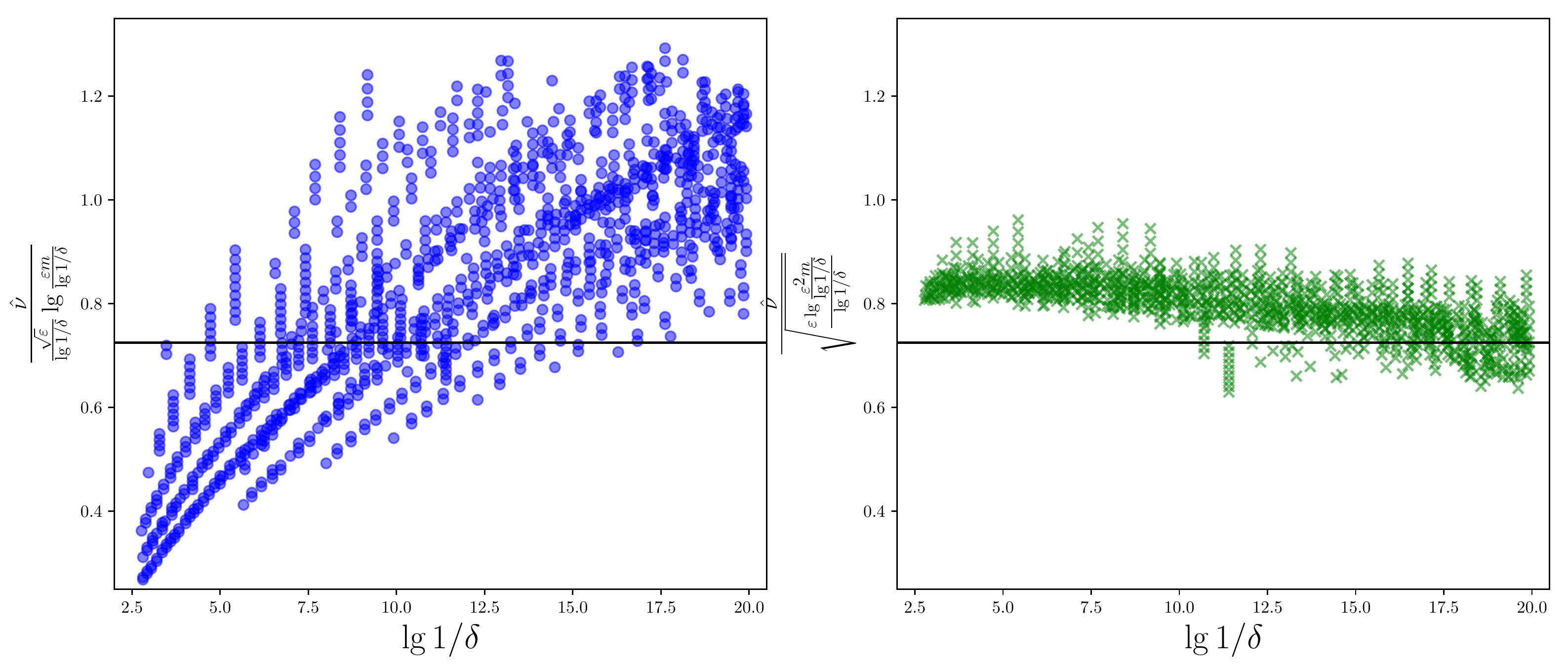}
  \caption{This plot shows the measured $\hat{\nu}$ values over each
    of the two terms in the minimum in the theoretical bound
    (abbreviated here): $\min\{\mathsf{left}, \mathsf{right}\}$. In
    the left subfigure the $y$-axis of the blue circles is
    $\frac{\hat{\nu}}{\mathsf{left}}$, while the $y$-axis of the green
    $\times$'s in the right subfigure is
    $\frac{\hat{\nu}}{\mathsf{right}}$. Note that the $x$-axis (values of $\lg(1/\delta)$ ) is the
    same in both subfigures, and the same as in the right subfigure of \autoref{fig:both_cut}. 
		As in \autoref{fig:both_cut}, the
    horizontal line at 0.725 is there to ease comparison between the
    figures.}
  \label{fig:both_all}
\end{figure}

To find a bound on $m$ where $\hat{\nu}(m,\varepsilon, \delta) = 1$ we
took the untruncated data and recorded the maximal $\hat{\delta}$ for
each $m$ and $\varepsilon$. We then plotted
$m \varepsilon^2 \hat{\delta}$ in \autoref{fig:border_1}. From
\autoref{fig:border_1} it is clear that
$\hat{\nu}(m, \varepsilon, \delta) = 1$ when
$m \geq \frac{2}{\varepsilon^2 \delta}$. Furthermore, the figure also
shows that there are data points where
$\hat{\nu}(m, \varepsilon, \delta) < 1$ while
$m = \frac{2 - \gamma}{\varepsilon^2 \delta}$, for some small
$\gamma$. Therefore we conclude the bound
$m \geq \frac{2}{\varepsilon^2 \delta}$ is tight.

\begin{figure}
  \centering
  \includegraphics[height=0.24\textheight]{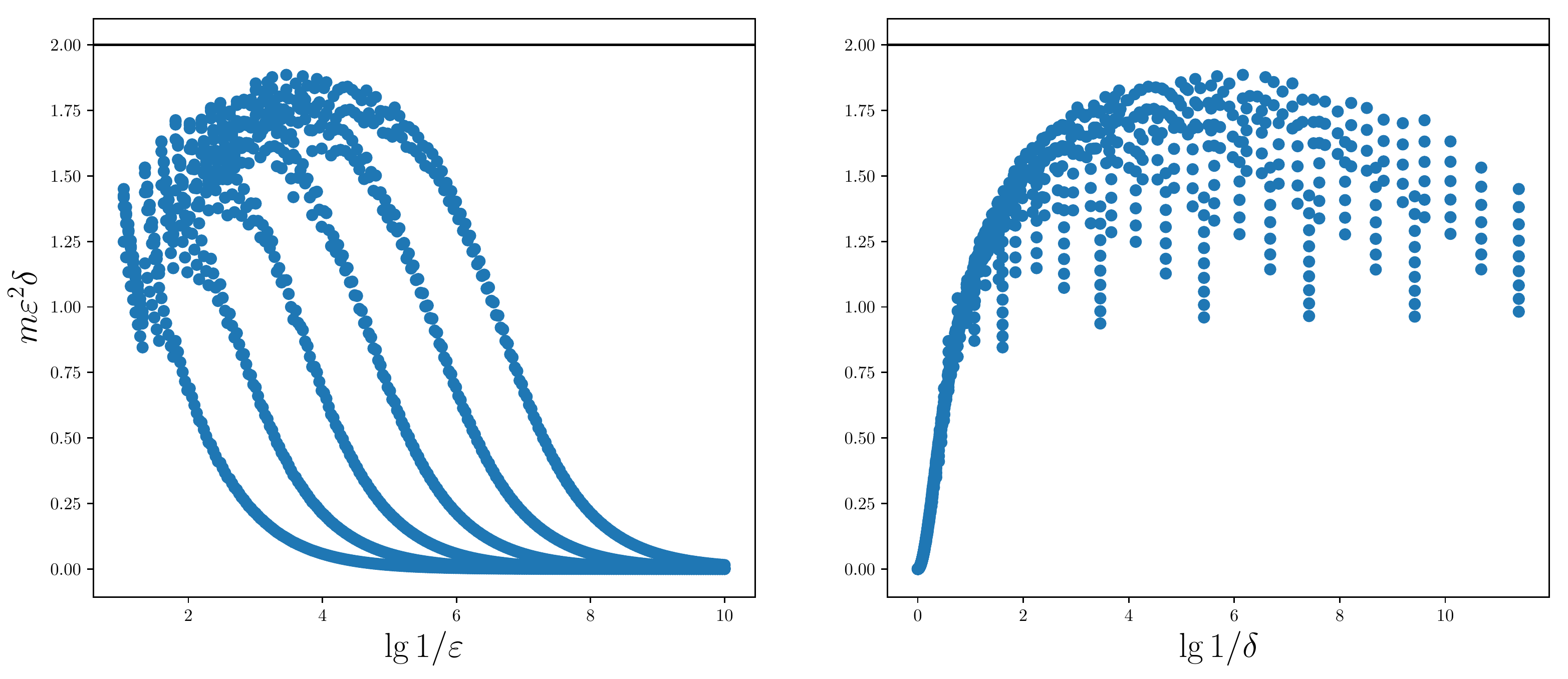}
  \caption{This plot shows the constant border where
    $\hat{\nu}(m, \varepsilon, \delta)$ becomes 1 for the first time. The
    theory states that if $2 \leq m \varepsilon^2 \delta$ then
    $\hat{\nu}(m, \varepsilon, \delta) = 1$. The distinct curves in the left
    plot correspond to distinct values of $m$.}
  \label{fig:border_1}
\end{figure}

\subsection{Implementation Details}
\label{sec:impl-deta}
As random number generators, we used degree 20 polynomials modulo the
Mersenne prime $2^{61} - 1$, where the coefficients were random data
from \url{random.org}. The random data was independent between
experiments with different values of $m$, and between the random number
generator used for vector generation and hashing.

Feature hashing was done using double tabulation hashing \cite{T13}
on 64 bit numbers. The tables in our implementation of double
tabulation hashing were filled with numbers from the aforementioned
random number generator. Double tabulation hashing has been proven to
behave fully randomly with high probability \cite{DKRT15}.

%\bibliographystyle{alphaurlinit}
%%\bibliographystyle{plain}
%\bibliography{FHBib}

\begin{thebibliography}{PBMID14}

\bibitem[AC09]{AC09}
N.~Ailon and B.~Chazelle.
\newblock The fast {J}ohnson-{L}indenstrauss transform and approximate nearest
  neighbors.
\newblock {\em SIAM J. Comput.}, 39(1):302--322, 2009.

\bibitem[Ach03]{A03}
D.~Achlioptas.
\newblock Database-friendly random projections: {J}ohnson-{L}indenstrauss with
  binary coins.
\newblock {\em J. Comput. Syst. Sci.}, 66(4):671--687, June 2003.

\bibitem[CCF04]{CCF04}
M.~Charikar, K.~C. Chen, and M.~Farach{-}Colton.
\newblock Finding frequent items in data streams.
\newblock {\em Theor. Comput. Sci.}, 312(1):3--15, 2004.

\bibitem[CJN18]{CJN18}
M.~B. Cohen, T.~S. Jayram, and J.~Nelson.
\newblock Simple analyses of the sparse {J}ohnson-{L}indenstrauss transform.
\newblock In {\em 1st Symposium on Simplicity in Algorithms, {SOSA} 2018,
  January 7-10, 2018, New Orleans, LA, {USA}}, pages 15:1--15:9, 2018.

\bibitem[CT06]{CR06}
E.~J. Cand{\`e}s and T.~Tao.
\newblock Near-optimal signal recovery from random projections: Universal
  encoding strategies?
\newblock {\em IEEE Transactions on Information Theory}, 52(12):5406--5425,
  2006.

\bibitem[CW09]{CW09}
K.~L. Clarkson and D.~P. Woodruff.
\newblock Numerical linear algebra in the streaming model.
\newblock In {\em Proceedings of the Forty-first Annual ACM Symposium on Theory
  of Computing}, pages 205--214. ACM, 2009.

\bibitem[Dal13]{D13}
B.~Dalessandro.
\newblock Bring the noise: Embracing randomness is the key to scaling up
  machine learning algorithms.
\newblock {\em Big Data}, 1(2):110--112, 2013.

\bibitem[DG03]{DG03}
S.~Dasgupta and A.~Gupta.
\newblock An elementary proof of a theorem of {J}ohnson and {L}indenstrauss.
\newblock {\em Random Struct. Algorithms}, 22(1):60--65, 2003.

\bibitem[DKRT15]{DKRT15}
S.~Dahlgaard, M.~B.~T. Knudsen, E.~Rotenberg, and M.~Thorup.
\newblock Hashing for statistics over k-partitions.
\newblock In {\em Proceedings of the 2015 IEEE 56th Annual Symposium on
  Foundations of Computer Science (FOCS)}, FOCS '15, pages 1292--1310. IEEE
  Computer Society, 2015.

\bibitem[DKS10]{DKS10}
A.~Dasgupta, R.~Kumar, and T.~Sarl{\'{o}}s.
\newblock A sparse {J}ohnson-{L}indenstrauss transform.
\newblock In {\em Proceedings of the 42nd {ACM} Symposium on Theory of
  Computing, {STOC} 2010, Cambridge, Massachusetts, USA, 5-8 June 2010}, STOC
  '10, pages 341--350, 2010.

\bibitem[DKT17]{DKT17}
S.~Dahlgaard, M.~Knudsen, and M.~Thorup.
\newblock Practical hash functions for similarity estimation and dimensionality
  reduction.
\newblock In {\em Advances in Neural Information Processing Systems 30}, pages
  6615--6625. Curran Associates, Inc., 2017.

\bibitem[FL17]{FL17}
C.~B. Freksen and K.~G. Larsen.
\newblock On using toeplitz and circulant matrices for
  {J}ohnson-{L}indenstrauss transforms.
\newblock In {\em 28th International Symposium on Algorithms and Computation,
  {ISAAC} 2017, December 9-12, 2017, Phuket, Thailand}, pages 32:1--32:12,
  2017.

\bibitem[HIM12]{HIM12}
S.~Har{-}Peled, P.~Indyk, and R.~Motwani.
\newblock Approximate nearest neighbor: Towards removing the curse of
  dimensionality.
\newblock {\em Theory of Computing}, 8(1):321--350, 2012.

\bibitem[HWB08]{HWB08}
C.~Hegde, M.~Wakin, and R.~Baraniuk.
\newblock Random projections for manifold learning.
\newblock In {\em Advances in Neural Information Processing Systems 20}, pages
  641--648. Curran Associates, Inc., 2008.

\bibitem[JL84]{JL84}
W.~Johnson and J.~Lindenstrauss.
\newblock Extensions of {L}ipschitz mappings into a {H}ilbert space.
\newblock In {\em Conference in modern analysis and probability (New Haven,
  Conn., 1982)}, volume~26 of {\em Contemporary Mathematics}, pages 189--206.
  American Mathematical Society, 1984.

\bibitem[JW13]{JW13}
T.~S. Jayram and D.~P. Woodruff.
\newblock Optimal bounds for {J}ohnson-{L}indenstrauss transforms and streaming
  problems with subconstant error.
\newblock {\em ACM Trans. Algorithms}, 9(3):26:1--26:17, 2013.

\bibitem[KN14]{KN14}
D.~M. Kane and J.~Nelson.
\newblock Sparser {J}ohnson-{L}indenstrauss transforms.
\newblock {\em J. ACM}, 61(1):4:1--4:23, January 2014.

\bibitem[LN17]{LN17}
K.~G. Larsen and J.~Nelson.
\newblock Optimality of the {J}ohnson-{L}indenstrauss lemma.
\newblock In {\em 58th {IEEE} Annual Symposium on Foundations of Computer
  Science, {FOCS} 2017, Berkeley, CA, USA, October 15-17, 2017}, pages
  633--638, 2017.

\bibitem[Mat08]{M08}
J.~Matou{\v{s}}ek.
\newblock On variants of the {J}ohnson-{L}indenstrauss lemma.
\newblock {\em Random Struct. Algorithms}, 33(2):142--156, 2008.

\bibitem[MM09]{MM09}
O.~Maillard and R.~Munos.
\newblock Compressed least-squares regression.
\newblock In {\em Advances in Neural Information Processing Systems 22}, pages
  1213--1221. Curran Associates, Inc., 2009.

\bibitem[MM13]{MM13}
X.~Meng and M.~W. Mahoney.
\newblock Low-distortion subspace embeddings in input-sparsity time and
  applications to robust linear regression.
\newblock In {\em Symposium on Theory of Computing Conference, STOC'13, Palo
  Alto, CA, USA, June 1-4, 2013}, STOC '13, pages 91--100, 2013.

\bibitem[NN13]{NN13}
J.~Nelson and H.~L. Nguyen.
\newblock Sparsity lower bounds for dimensionality reducing maps.
\newblock In {\em Proceedings of the Forty-fifth Annual ACM Symposium on Theory
  of Computing}, STOC '13, pages 101--110. ACM, 2013.

\bibitem[PBMID14]{SBMD14}
S.~Paul, C.~Boutsidis, M.~Magdon-Ismail, and P.~Drineas.
\newblock Random projections for linear support vector machines.
\newblock {\em ACM Trans. Knowl. Discov. Data}, 8(4):22:1--22:25, 2014.

\bibitem[RR08]{RR08}
A.~Rahimi and B.~Recht.
\newblock Random features for large-scale kernel machines.
\newblock In J.~C. Platt, D.~Koller, Y.~Singer, and S.~T. Roweis, editors, {\em
  Advances in Neural Information Processing Systems 20}, pages 1177--1184.
  Curran Associates, Inc., 2008.

\bibitem[S{\'a}r06]{S06}
T.~S{\'a}rlos.
\newblock Improved approximation algorithms for large matrices via random
  projections.
\newblock In {\em Proceedings of the 47th Annual IEEE Symposium on Foundations
  of Computer Science}, pages 143--152. IEEE Computer Society, 2006.

\bibitem[Sut15]{S15}
S.~Suthaharan.
\newblock {\em Machine Learning Models and Algorithms for Big Data
  Classification: Thinking with Examples for Effective Learning}.
\newblock Springer Publishing Company, Incorporated, 1st edition, 2015.

\bibitem[Tho14]{T13}
M.~Thorup.
\newblock Simple tabulation, fast expanders, double tabulation, and high
  independence.
\newblock In {\em 2013 IEEE 54th Annual Symposium on Foundations of Computer
  Science(FOCS)}, pages 90--99. IEEE Computer Society, 2014.

\bibitem[TZ12]{TZ12}
M.~Thorup and Y.~Zhang.
\newblock Tabulation-based 5-independent hashing with applications to linear
  probing and second moment estimation.
\newblock {\em SIAM J. Comput.}, 41(2):293--331, April 2012.

\bibitem[Vem05]{V05}
S.~S. Vempala.
\newblock {\em {The random projection method}}.
\newblock DIMACS : series in discrete mathematics and theoretical computer
  science. American Mathematical Society, 2005.

\bibitem[WDL{\etalchar{+}}09]{WKD+09}
K.~Weinberger, A.~Dasgupta, J.~Langford, A.~Smola, and J.~Attenberg.
\newblock Feature hashing for large scale multitask learning.
\newblock In {\em Proceedings of the 26th Annual International Conference on
  Machine Learning}, ICML '09, pages 1113--1120, 2009.

\end{thebibliography}

\newcommand{\etalchar}[1]{$^{#1}$}

\end{document}